\documentclass[11pt]{article}

\usepackage[utf8]{inputenc}
\usepackage[T1]{fontenc}
\usepackage{newtxtext}                    
\usepackage[varbb]{newtxmath}             
\usepackage[letterpaper,margin=1.5in]{geometry}
\usepackage{amsmath}

\usepackage{amssymb}

\usepackage{amsthm}
\usepackage{algorithm}
\usepackage{algpseudocode}
\usepackage{graphicx}
\usepackage[breaklinks=true,colorlinks=true,linkcolor=blue!60!black,citecolor=green!50!black,urlcolor=blue!60!black]{hyperref}
\usepackage{booktabs}
\usepackage{xcolor}
\usepackage{listings}
\usepackage[numbers,sort&compress]{natbib}
\usepackage{caption}
\usepackage{subcaption}
\usepackage{float}
\usepackage{enumitem}
\usepackage{microtype}

\newtheorem{theorem}{Theorem}[section]
\newtheorem{definition}[theorem]{Definition}
\newtheorem{lemma}[theorem]{Lemma}
\newtheorem{proposition}[theorem]{Proposition}

\lstdefinestyle{lean}{
  language={},
  basicstyle=\ttfamily\small,
  keywordstyle=\bfseries\color{blue!70!black},
  commentstyle=\itshape\color{green!50!black},
  stringstyle=\color{red!60!black},
  morekeywords={def,theorem,structure,instance,where,let,if,then,else,fun,by,exact,inductive,deriving,import,open,return,do,for,end,match,with},
  morecomment=[l]{--},
  frame=single,
  framerule=0.4pt,
  rulecolor=\color{gray!50},
  backgroundcolor=\color{gray!5},
  breaklines=true,
  columns=fullflexible,
  keepspaces=true,
  showstringspaces=false,
  tabsize=2,
  xleftmargin=1em,
  framexleftmargin=0.5em,
  aboveskip=8pt,
  belowskip=8pt,
  literate={∈}{$\in$}1 {∀}{$\forall$}1 {∃}{$\exists$}1
           {≤}{$\leq$}1 {≥}{$\geq$}1 {→}{$\rightarrow$}1
           {⇒}{$\Rightarrow$}1 {∧}{$\wedge$}1 {∨}{$\vee$}1
           {¬}{$\neg$}1 {·}{$\cdot$}1,
}
\algrenewcommand\algorithmicrequire{\textbf{Input:}}
\algrenewcommand\algorithmicensure{\textbf{Output:}}

\begin{document}

\begin{center}
{\LARGE\bfseries Formally Verified Patent Analysis\\[4pt]
via Dependent Type Theory:\\[4pt]
Machine-Checkable Certificates from a\\[4pt]
Hybrid AI + Lean~4 Pipeline\par}
\vspace{14pt}
{\large George Koomullil}\\[4pt]
{\itshape Ascendr, Inc.}\\[2pt]
{\small \texttt{george@ascendr.ai}}\\[14pt]
{\small \today}
\end{center}

\begin{center}\textbf{Abstract}\end{center}
\begin{quote}\small\itshape
We present a formally verified framework for patent analysis, realized as a hybrid AI + Lean~4 pipeline in which the DAG-coverage core (Algorithm~1b) is fully machine-verified once bounded match scores are fixed, while freedom-to-operate, claim-construction sensitivity, cross-claim consistency, and doctrine-of-equivalents analyses are formalized at the specification level and supported by kernel-checked candidate certificates. Patent analysis is a high-stakes legal task that existing approaches address either by manual expert analysis (slow, non-scalable) or by machine learning/NLP methods (probabilistic, opaque, non-compositional). To the best of our knowledge, this is the first framework that applies interactive theorem proving based on dependent type theory to intellectual property analysis. Claims are encoded as directed acyclic graphs (DAGs) in Lean~4, match strengths are modeled as elements of a formally verified complete lattice, and confidence scores propagate through dependency structures using proven-correct monotone functions. We formalize five IP use cases (patent-to-product mapping, freedom-to-operate, claim construction sensitivity, cross-claim consistency, and doctrine of equivalents) via six algorithms (one main algorithm per use case plus a Kleene fixed-point alternative; Algorithm~1b is a shared helper variant of Algorithm~1), together with complexity analysis and correctness arguments whose verification status is classified per Table~\ref{tab:proofstatus}: structural lemmas, the coverage-core generator (Proposition~\ref{prop:gensound}), and the closed-path coverage identity $\texttt{coverage} = W_{\text{cov}}$ are machine-verified in Lean~4; higher-level algorithmic theorems for use cases other than Algorithm~1b remain informal proof sketches, and the corresponding proof-generation functions are architecturally mitigated (untrusted generators whose outputs are kernel-checked and subjected to a \texttt{sorry}-free axiom audit before acceptance; the doctrine-of-equivalents analysis carries additional layers of conditionality beyond the other use cases). The formal guarantees are conditional on the ML layer: they certify the mathematical correctness of all computations downstream of ML-produced scores, while the semantic accuracy of the scores themselves remains the responsibility of the ML layer. This conditional guarantee, correctness of computation given the inputs, is nonetheless qualitatively different from anything purely probabilistic systems can offer. A case study on a synthetic memory-module claim (no adjudicated outcome, no real prosecution history, no real product documentation) demonstrates the framework's application, including weighted coverage computation and claim-construction sensitivity analysis; validation against adjudicated cases is identified as future work.
\end{quote}
\smallskip
\noindent\textbf{Keywords:} \textit{formal verification, dependent type theory, Lean~4, patent analysis, claim decomposition, DAG coverage, proof certificates, freedom-to-operate, intellectual property, theorem proving}

\section{Introduction}
\label{sec:intro}

Patent analysis sits at the intersection of natural language understanding, logical reasoning, and legal doctrine. The core task across all intellectual property (IP) use cases is to decompose a patent claim into atomic limitations, map each limitation to evidence, and draw logically valid conclusions about coverage, infringement, or validity. Despite the inherently formal structure of patent law (claims are conjunctions of limitations governed by well-defined legal rules such as the all-elements rule, the doctrine of equivalents, and prosecution history estoppel), no prior work has applied interactive theorem provers or formal verification techniques to this domain.

Current automated approaches fall into two categories, each with fundamental limitations:

\textit{Manual expert analysis} uses claim charts, tabular mappings of claim elements to product evidence. While legally authoritative, this process can require tens to hundreds of expert hours per claim-product pair~\cite{lefstin2012formal}, is non-reproducible (two experts may reach different conclusions), and cannot scale to portfolios of thousands of patents.

\textit{Machine learning and NLP approaches}, including recent LLM-based tools, compute probabilistic similarity scores between claim elements and product features. However, these systems provide no correctness guarantees (an ``87\% confidence'' score reflects training distribution, not truth), produce opaque reasoning chains, suffer from non-compositional confidence ($P(A \wedge B) \neq P(A) \cdot P(B)$ in general), and are vulnerable to adversarial manipulation and hallucination.

We propose a fundamentally different approach: encoding claim structure in a dependent type theory (Lean~4) and discharging the downstream reasoning obligations with an interactive theorem prover. The Curry-Howard correspondence, the mathematical equivalence between propositions and types, proofs and programs, serves as the conceptual inspiration for this work: it suggests that structural properties of claim analysis (dependency satisfaction, coverage bounds, consistency) can be framed as propositions and discharged by proof terms rather than estimated by statistical similarity. The technical development we actually present is narrower and more concrete than a literal ``infringement as proof term'' formalization: we encode the claim DAG and match-strength lattice in Lean~4, we define and argue about propagation, coverage, and classification algorithms over those structures, and we generate proof certificates whose validity the Lean~4 kernel independently type-checks. When a certificate type-checks and passes a \texttt{sorry}-free axiom audit, the downstream computations (dependency propagation, bounds, classification) are mathematically verified against the ML-supplied inputs; when it fails, the failure point is localized. The Curry-Howard lens motivates this decomposition but the paper's contribution is the concrete architecture, not a direct type-theoretic reduction of infringement itself.

An important clarification on scope: the proposed framework is a \textit{hybrid} pipeline. The ML/NLP layer (claim decomposition, text extraction, and match scoring) produces match scores using probabilistic methods and is subject to all the standard ML limitations: hallucination, adversarial vulnerability, and training data dependency. The formal verification layer (DAG encoding, lattice propagation, and certificate generation) takes these scores as unverified inputs and certifies that all downstream computations (dependency propagation, score composition, coverage bounds) are mathematically correct. The formal guarantees are therefore \textit{conditional}: they certify the correctness of computation given the inputs, not the correctness of the inputs themselves. This is analogous to a formally verified calculator: it guarantees the arithmetic is correct, while the accuracy of the input figures remains the user's responsibility. Despite this limitation, the conditional guarantee is qualitatively stronger than anything provided by purely probabilistic systems, and the explicit trust boundary makes the scope of the guarantee transparent.

Beyond the specific domain of patent analysis, this work demonstrates a general and transferable architectural principle with significant implications for how AI workflows are designed in high-stakes settings. Current AI pipelines for structured reasoning tasks (such as legal analysis, regulatory compliance, and clinical decision support) implement their entire workflow probabilistically, not because every component requires probabilistic reasoning, but because formal alternatives have not been developed. This paper demonstrates that a principled decomposition is possible: the components of an AI workflow that perform natural language understanding genuinely require ML, while the components that perform structured computation (dependency propagation, score composition, coverage bounds, consistency checking) can be replaced by formally verified equivalents without sacrificing the pipeline's ability to process natural language inputs. The result is not an AI system with a verification layer added on top, but a restructured workflow in which the trust boundary is pushed as far into the pipeline as possible, converting what would otherwise be probabilistic black-box computations into machine-checkable proof certificates. We believe this decomposition principle, identify which components of an AI workflow can be formalized, and replace them, represents a broadly applicable template for improving the trustworthiness of AI systems in domains where logical rigor is required.

\medskip\noindent\textbf{A concrete motivation: why kernel type-checking alone is not enough.} To motivate why this paper defines certificate validity with both a kernel type-check and a \texttt{sorry}-free axiom audit, consider a na\"{\i}ve proof generator that discharges every proof obligation with Lean~4's \texttt{sorry} primitive:

\pagebreak
\smallskip
\noindent\colorbox{orange!20}{\small\textsc{Lean 4: illustrative, not compiled (counter example sketch)}}\nopagebreak
\begin{lstlisting}
def fraudulentCertificate : ProofCertificate := {
  coverage  := 100,     -- asserts 100% with no basis
  p_acyclic := sorry,   -- no proof
  p_lattice := sorry,
  p_propag  := sorry,
  p_bounded := sorry,
  p_deps    := sorry
}
\end{lstlisting}
\noindent This term \textit{type-checks} in the Lean~4 kernel because \texttt{sorry} is implemented as a primitive axiom inhabiting any type; a kernel-only check would accept it. The paper's verification pipeline rejects such artifacts by additionally running a \texttt{\#print axioms} audit and requiring that the transitive axiom set be a subset of $\Omega = \{\texttt{propext}, \texttt{Classical.choice}, \texttt{Quot.sound}\}$ (Definition~\ref{def:validcert}). This two-step check (type-check plus axiom audit) is what we mean by ``machine-checkable'' throughout the paper. Section~\ref{sec:framework} develops the full treatment, including why the Lean~4 prover itself is untrusted while the kernel is trusted, and what the \texttt{native\_decide} axiom would look like in the audit output.

\subsection{Contributions}
\label{sec:contributions}

This paper makes the following contributions:
\begin{enumerate}[leftmargin=2em]
\item To the best of our knowledge, we present the first formalization of patent analysis in dependent type theory, encoding claims as DAGs with typed nodes and directed dependency edges in Lean~4.
\item We define match strength as a complete lattice whose \texttt{CompleteLattice} instance is machine-verified in Mathlib, and we establish lattice-theoretic monotonicity, idempotency, and bounded propagation properties for the framework's propagation operators (verification status per property: see Table~\ref{tab:proofstatus}; higher-level $W_{\text{cov}}$-level consequences are currently informal proof sketches).
\item We provide six formal algorithms with pseudocode, correctness arguments, and complexity analysis for five IP use cases (Algorithms~1--5 address the five use cases directly; Algorithm~6 is a Kleene fixed-point alternative to Algorithm~1, offering different convergence guarantees rather than a new use case). We also define Algorithm~1b (\textsc{WeightedDAGCoverageGivenScores}), a helper variant of Algorithm~1 that separates DAG propagation from score computation and is invoked by Algorithm~3 (construction sensitivity); it is not a distinct use case.
\item We introduce \textit{proof certificates} for patent analysis, machine-checkable Lean~4 proof terms, bundled with claim-to-evidence mappings (the traditional claim chart structure), that provide independently verifiable evidence of analysis correctness. Certificate validity requires both kernel type-checking and a \texttt{sorry}-free axiom audit (Section~\ref{sec:framework}), ensuring that no proof obligation is bypassed. For the coverage-core path of Algorithm~\ref{alg:dagcov-given}, we additionally provide a concrete Lean~4 certificate type, a compiled generator whose every proof field is discharged without \texttt{sorry}, and a machine-checked generator-soundness proposition (Proposition~\ref{prop:gensound}, Section~\ref{sec:closedpath}).
\item We present a restructured AI pipeline in which components that prior systems implement probabilistically (dependency propagation, score composition, coverage computation, and consistency checking) are replaced by formally verified equivalents, pushing the trust boundary further into the workflow than existing systems assume.
\item We demonstrate a general decomposition principle for AI workflows in high-stakes domains: natural language understanding genuinely requires ML, while structured computation over its outputs can be formally verified. This shows that not every component of an AI pipeline needs to remain probabilistic, a principle applicable beyond patent analysis to any domain requiring structured reasoning over natural language.
\end{enumerate}

\noindent\textbf{Scope limitation.} The formal guarantees presented in this paper (machine-verified structural lemmas plus informal proof sketches for higher-level algorithmic theorems; see Table~\ref{tab:proofstatus}) are conditional in two senses. First, they are conditional on the ML layer providing correct input scores, as discussed throughout. Second, with the exception of the coverage-core generator for Algorithm~\ref{alg:dagcov-given} (Section~\ref{sec:closedpath}, Proposition~\ref{prop:gensound}), the proof generation functions that construct actual Lean~4 proof certificates are designed for the prototype described in Section~\ref{sec:implementation} but are not formally specified or verified in this paper. Every certificate is independently checked by the Lean~4 kernel, which catches any generation failure. Providing complete formal specifications of the generation functions is a primary direction for future work.

\medskip
\begin{center}
\fbox{\parbox{0.92\textwidth}{\small\itshape
\noindent\textbf{Verification-status ledger.} Every formal claim in this paper is qualified by its verification-status classification in Table~\ref{tab:proofstatus}: \emph{machine-verified} (kernel-checked in Lean~4), \emph{informal sketch} (mathematically rigorous argument describing the intended Lean~4 proof but not compiled), \emph{architecturally enforced} (encoded in type signatures), or \emph{architecturally mitigated} (not formally specified but constrained by the trust model; see Section~\ref{sec:framework}). Readers should consult Table~\ref{tab:proofstatus} before interpreting any ``theorem'', ``proof certificate'', or correctness claim elsewhere in the paper.
}}
\end{center}

\begin{table}[H]
\centering
\caption{Verification status of formal results.}
\label{tab:proofstatus}
\small
\begin{tabular}{@{}p{4.2cm}p{2.0cm}p{6.0cm}@{}}
\toprule
\textbf{Result} & \textbf{Status} & \textbf{Notes} \\
\midrule
\texttt{dag\_acyclic} (Section~\ref{sec:dagencoding}) & Machine-verified & Lean \texttt{decide} tactic; exhaustive check \\
Lattice axioms (Section~\ref{sec:framework}) & Machine-verified & Mathlib \texttt{CompleteLattice} instance \\
$M$ boundedness (Def.~\ref{def:matchscore}) & Architecturally enforced & Type signature $M : T \times T \to [0,1]$ encodes boundedness; only $M$-property used in Theorems~\ref{thm:algcorrect}--\ref{thm:convergence} \\
$M$ symmetry, identity (Def.~\ref{def:matchscore}) & Desirable, not required & Hold in the normal case; identity can fail for zero TF-IDF vector; not used in any proof \\
\texttt{coverage\_in\_range} & Machine-verified & Both endpoints of the $[0,100]$ bound; uses \texttt{ScoreValid} and $0 \leq \theta$ only (Appendix~\ref{app:lean}) \\
Coverage-semantic equality ($\texttt{weightedCoverage}(\beta,\theta) = W_{\text{cov}}$) & Machine-verified by definitional unfolding & \texttt{weightedCoverage} unfolds to the weighted average of $\texttt{computeEff}(\beta,\theta)$; certificate's \texttt{p\_deps} field discharged by \texttt{rfl} \\
Theorem~\ref{thm:weakest} (Weakest Link) & Informal sketch & Proof sketch describes Lean proof structure \\
Theorem~\ref{thm:monotone} (Monotonicity) & Informal sketch & Proof sketch describes Lean proof structure \\
Theorem~\ref{thm:algcorrect}(i) (Alg.~1 bounds) & Machine-verified via \texttt{coverage\_in\_range} & For Algorithm~\ref{alg:dagcov-given}; Alg.~\ref{alg:dagcov} inherits (Proposition~\ref{prop:gensound}) \\
Theorem~\ref{thm:algcorrect}(ii)--(iv) and Theorems~\ref{thm:fto}--\ref{thm:convergence} (Alg.\ correctness, parts beyond bounds) & Informal sketch & Mathematical arguments; not compiled \\
\texttt{generateCertificate} (Alg.~\ref{alg:dagcov-given}, \S\ref{sec:closedpath}) & Machine-verified & Compiled certificate type and generator for Algorithm~\ref{alg:dagcov-given} coverage core; Def.~\ref{def:validcert}(ii) discharged by $\Omega$-audit (Appendix~\ref{app:axiomaudit}) \\
Remaining proof gen.\ functions (Algs.~2--6) & Architecturally mitigated & Not formally specified; kernel type-checks every certificate (Def.~\ref{def:validcert}) \\
\texttt{sorry}-free guarantee & Architecturally mitigated & Enforced by \texttt{\#print axioms} audit \\
\bottomrule
\end{tabular}
\end{table}

\section{Background and Related Work}
\label{sec:related}

\subsection{Patent Claim Structure and Legal Doctrine}

A patent claim consists of three parts: a preamble setting the technical field, a transitional phrase (``comprising,'' ``consisting of,'' or ``consisting essentially of'') determining openness of scope, and a body listing the limitations (elements) of the invention. Under the \textit{all-elements rule}, infringement requires that the accused product satisfy every limitation of the claim, formally, a universal conjunction over all elements. Non-infringement requires demonstrating the absence of at least one limitation, which by De Morgan's laws is an existential statement.

Claim construction, the interpretation of claim terms, is governed by \textit{Phillips v.\ AWH Corp.}~\cite{phillips2005}. The doctrine of equivalents extends infringement to non-literal matches satisfying the function-way-result test (\textit{Graver Tank}~\cite{gravertank1950}), subject to prosecution history estoppel (\textit{Festo}~\cite{festo2002}). These legal rules have precise logical structure that naturally lends itself to formalization.

\subsection{Formal Verification and Dependent Type Theory}

Interactive theorem provers (ITPs) such as Lean~4~\cite{demoura2021lean4}, Rocq (formerly Coq), and Isabelle/HOL provide languages for stating mathematical propositions and constructing machine-checkable proofs. Lean~4 implements the Calculus of Inductive Constructions (CIC), a dependent type theory in which types may depend on values. The Curry-Howard correspondence ensures that propositions are types and proofs are programs: if a term $t$ of type $P$ type-checks, then $t$ is a valid proof of proposition $P$.

Lean~4 has been adopted for industrial verification: AWS uses it to formally verify the Cedar authorization policy language~\cite{aws2024cedar}, achieving sub-millisecond authorization latency with proven soundness and completeness. Harmonic AI~\cite{harmonic2025} uses Lean~4 to formally verify AI outputs. The Mathlib library~\cite{mathlib2025} formalizes over 210,000 theorems, including comprehensive lattice theory and fixed-point theorems relevant to our approach.

\subsection{NLP and ML for Patent Analysis}

Commercial tools (Relecura, Patsnap, Clarivate, Patlytics, XLSCOUT, IPRally, Questel) and academic work employ NLP for patent analysis. The FLAN Graph model~\cite{gao2024flan} builds claim dependency graphs but feeds them into graph neural networks for approval prediction, not formal verification. PatentScore~\cite{patentscore2025} uses LLM-based multi-dimensional scoring. All existing approaches produce probabilistic outputs without formal correctness guarantees.

\subsection{Formal Methods in Legal Reasoning}

Catala~\cite{merigoux2021catala} compiles statutory tax law using F*. L4L~\cite{chen2025l4l} combines LLMs with SMT solvers for criminal law. The LogiKEy workbench~\cite{benzmuller2024logikey} provides deontic logic in Isabelle/HOL. Reis et al.~\cite{reis2018contributions} used ALEN Description Logic to model patent claims but without machine-checkable proofs or weighted scoring. None of this prior work applies interactive theorem provers to patent analysis.

\subsection{The Prior Art Gap}

A review of the prior literature reveals four non-overlapping silos, summarized in Table~\ref{tab:priorartsilos}; the present work bridges all four for the first time.

\begin{table}[H]
\centering
\caption{Four non-overlapping silos in the prior literature and representative entries.}
\label{tab:priorartsilos}
\small
\begin{tabular}{@{}p{3.3cm}p{3.9cm}p{5.2cm}@{}}
\toprule
\textbf{Silo} & \textbf{What it does} & \textbf{Representative entries} \\
\midrule
(A) Formal methods for general law (not patents) & ITP/SMT/deontic formalization of tax, criminal, or contract law &
Catala~\cite{merigoux2021catala} (statutory tax in F*); LogiKEy~\cite{benzmuller2024logikey} (deontic logic in Isabelle/HOL); L4L~\cite{chen2025l4l} (LLM + SMT for criminal law); Lawsky~\cite{lawsky2018logic}; Grimmelmann~\cite{grimmelmann2022programming}. \\
(B) Description logic for patents (not theorem provers) & ALEN/DL encodings of claim structure, no machine-checkable proofs &
Reis et al.~\cite{reis2018contributions} (ALEN DL for claim modeling). \\
(C) NLP/AI patent analysis (no formal methods) & LLMs, embeddings, graph models for retrieval, matching, scoring &
FLAN-Graph~\cite{gao2024flan}; PatentScore~\cite{patentscore2025}; PatentMatch~\cite{risch2021patentmatch}; DAPFam~\cite{dapfam2025}; Casadio et al.~\cite{casadio2024nlp}; Sadowski et al.~\cite{sadowski2025verifiable}. \\
(D) Lean~4 for non-legal applications & Industrial / mathematical uses of Lean without IP content &
AWS Cedar~\cite{aws2024cedar}; Mathlib~\cite{mathlib2025}; Harmonic AI~\cite{harmonic2025}; Lean4Lean~\cite{carneiro2024lean4lean}; Park et al.~\cite{park2024dslean}. \\
\bottomrule
\end{tabular}
\end{table}

\noindent Silos (A), (B), (C), and (D) are individually well-populated, but none covers the intersection: a machine-checkable coverage certificate for a patent claim, produced by a hybrid ML + Lean~4 pipeline, is absent from all four silos to the best of our knowledge.

\section{Problem Formulation}
\label{sec:formulation}

\subsection{Formal Definitions}

\noindent\textbf{Notation guide (read before Definition~\ref{def:textspaces}).} The paper uses several pairs of symbols that share a base letter but denote distinct objects; we summarize them here so readers do not have to infer the distinction from math alphabets alone.
\begin{center}
{\small
\begin{tabular}{@{}p{1.9cm}p{4.2cm}p{6.6cm}@{}}
\toprule
\textbf{Base letter} & \textbf{Meaning 1 (plain)} & \textbf{Meaning 2 (alternate alphabet or subscript)} \\
\midrule
$T$ / $\mathcal{T}$ & $T$: master text space (Def.~\ref{def:textspaces}) & $\mathcal{T}$: finite set of node types (Def.~\ref{def:claimdag}) \\
$E$ / $\mathcal{E}$ & $E$: DAG edge set (Def.~\ref{def:claimdag}) & $\mathcal{E}$: evidence text space (Def.~\ref{def:textspaces}) \\
$I$ / $\mathcal{I}$ & $I_j$: a claim construction (Problem~3) & $\mathcal{I}$: implementation-description text space (Def.~\ref{def:textspaces}) \\
$L$ / $\mathcal{L}$ / $\mathbb{L}$ & (not used as a plain symbol) & $\mathcal{L}$: claim language text space (Def.~\ref{def:textspaces}); $\mathbb{L}$: match-strength lattice (\S\ref{sec:framework}) \\
\bottomrule
\end{tabular}%
}
\end{center}
\noindent In every subsequent definition, theorem, and algorithm, these distinctions are preserved by font (plain vs.\ $\mathcal{X}$ vs.\ $\mathbb{X}$) or by subscript ($I$ alone never appears without the subscript $j$; node-type $\tau(v) \in \mathcal{T}$ is always calligraphic).

\begin{definition}[Text Spaces]
\label{def:textspaces}
We distinguish three disjoint spaces of text representations, all subsets of a master space $T$ of all finite Unicode text strings:
\begin{itemize}[leftmargin=2em, topsep=2pt, itemsep=1pt]
\item $\mathcal{L} \subset T$: the space of \textit{claim language texts}, texts appearing in patent claims, amendments, and prosecution arguments. Claim elements $V \subset \mathcal{L}$ and all prosecution amendment texts live here.
\item $\mathcal{I} \subset T$: the space of \textit{implementation description texts}, texts describing concrete technical implementations of inventions. Elements of $\Phi_v$ (Definition~\ref{def:scopespace}) live here.
\item $\mathcal{E} \subset T$: the space of \textit{evidence texts}, segments extracted from product documentation. Evidence segments $S \subset \mathcal{E}$ live here.
\end{itemize}
$\mathcal{L}$, $\mathcal{I}$, and $\mathcal{E}$ are pairwise disjoint by construction in the formal framework: each text admitted to the framework carries a role tag (claim language, implementation description, or evidence), and the three spaces are the corresponding role-tagged subsets. A given Unicode string may appear with different tags in different contexts (a datasheet phrase borrowed from claim language does not collapse the two categories) because disjointness is a property of the tagged representation, not of the underlying string. The master space $T$ continues to carry the untagged strings for type-theoretic uniformity of operations such as $M : T \times T \to [0,1]$, which acts on the underlying strings and is blind to the tag. Texts in $\mathcal{L}$ describe what a claim covers. Texts in $\mathcal{I}$ describe how an invention could be implemented. Texts in $\mathcal{E}$ describe what an accused product does.
\end{definition}

\medskip\noindent\textbf{Running example (3-node prefix of the case study).} To keep the abstract definitions in this section concrete, we carry through a minimal three-node example that is a prefix of the 15-node memory-module DAG used in the full case study (Section~\ref{sec:implementation}, Figure~\ref{fig:claimdag}):
\begin{itemize}[leftmargin=2em, topsep=2pt, itemsep=1pt]
\item $C_1$ (\textit{preamble}, ``printed circuit board''), weight $w(\textit{preamble}) = 0.3$, no dependencies;
\item $C_2$ (\textit{structural}, ``a plurality of DDR memory devices mounted to the printed circuit board''), weight $w(\textit{structural}) = 1.0$, depends on $C_1$;
\item $C_3$ (\textit{quantitative}, ``organized into a first number of ranks''), weight $w(\textit{quantitative}) = 2.0$, depends on $C_2$.
\end{itemize}
Each of $C_1, C_2, C_3 \in \mathcal{L}$ (claim language texts). Each subsequent definition in this subsection is annotated with how the running example instantiates it; full numerical values for this three-node subgraph are taken from Table~\ref{tab:scores} (Appendix~\ref{app:lean}, memory-module case study, construction $I_1$).

\begin{definition}[Patent Claim DAG]
\label{def:claimdag}
A patent claim is represented as a DAG $G = (V, E, \tau, w)$ where $V = \{C_1, \ldots, C_n\}$ is the finite set of atomic claim limitations ($|V| \geq 1$), $E \subseteq V \times V$ is a set of directed edges representing logical dependencies, $\tau : V \to \mathcal{T}$ maps each node to a type, where $\mathcal{T} = \{\textit{preamble}, \textit{structural}, \textit{functional}, \textit{quantitative}, \textit{wherein}, \textit{coupling}, \textit{signal}\}$ is the set of element types, and $w : \mathcal{T} \to \mathbb{R}^+$ assigns positive weights based on legal significance. The dependency function $\textit{deps} : V \to \mathcal{P}(V)$ maps each node to its set of direct dependencies as encoded in $E$: $\textit{deps}(v) = \{u \in V \mid (u,v) \in E\}$, i.e., edges point from dependency to dependent. Since $w(\tau(v)) > 0$ for all $v \in V$, the total weight $\sum_{v \in V} w(\tau(v)) > 0$. (Note: the Lean~4 encoding in Section~\ref{sec:dagencoding} reverses this orientation for implementation convenience; see the edge orientation convention discussion there.)

\textit{Running example.} $V = \{C_1, C_2, C_3\}$, $E = \{(C_1, C_2), (C_2, C_3)\}$, $\tau(C_1) = \textit{preamble}$, $\tau(C_2) = \textit{structural}$, $\tau(C_3) = \textit{quantitative}$, and so $w(\tau(C_1)) = 0.3$, $w(\tau(C_2)) = 1.0$, $w(\tau(C_3)) = 2.0$. The dependency function yields $\textit{deps}(C_1) = \emptyset$, $\textit{deps}(C_2) = \{C_1\}$, $\textit{deps}(C_3) = \{C_2\}$; the total weight is $0.3 + 1.0 + 2.0 = 3.3$.
\end{definition}

\begin{definition}[Evidence Set]
\label{def:evidence}
Given a product document $D$, the evidence set $S = \{s_1, \ldots, s_m\}$ with $m \geq 1$ is a non-empty, canonically ordered collection of text segments obtained by extracting and segmenting $D$ into semantically coherent chunks of bounded length. The canonical ordering on $S$ (e.g., by document position) is used for tie-breaking in subsequent definitions.

\textit{Running example.} For the memory-module case study, $S$ consists of segments extracted from a product datasheet (for instance, board-layout, memory-topology, and control-signal sections), each $s_i \in \mathcal{E}$. The concrete values of $\beta(C_j) = \max_{s \in S} M(C_j, s)$ for the three-node subgraph under construction $I_1$ are $\beta(C_1) = 0.90$, $\beta(C_2) = 0.87$, $\beta(C_3) = 0.82$ (Table~\ref{tab:scores}).
\end{definition}

\begin{definition}[Text Representation Spaces]
\label{def:textspace}
We work with the three distinct subspaces of the master text space $T$ established in Definition~\ref{def:textspaces}. Claim elements $V \subset \mathcal{L}$ are texts in the claim language space. Evidence segments $S \subset \mathcal{E}$ are texts in the evidence space. Implementation descriptions $\Phi_v \subset \mathcal{I}$ (Definition~\ref{def:scopespace}) are texts in the implementation space. The match score function $M : T \times T \to [0,1]$ is defined over the master space for uniformity, but in practice is applied as:
\begin{itemize}[leftmargin=2em, topsep=2pt, itemsep=1pt]
\item $M : \mathcal{L} \times \mathcal{E} \to [0,1]$ for claim-to-evidence matching (Definition~\ref{def:matchscore})
\item $M_{\text{scope}} : \mathcal{I} \times \mathcal{E} \to [0,1]$ for implementation-to-evidence matching (Definition~\ref{def:scopespace})
\end{itemize}
These are the same underlying function with potentially different configuration parameters, as described in Section~\ref{sec:architecture}.

\textit{Forward pointer.} A companion function $\textit{ann} : V \to \mathcal{P}(\Sigma)$ (Definition~\ref{def:annotation}) annotates each node $v \in V$ with the technical terms from a shared vocabulary $\Sigma$ that appear in $v$'s claim-language text in $\mathcal{L}$. $\textit{ann}$ is needed only for the cross-claim consistency problem (Problem~4) and is defined at the end of the present subsection after the DOE apparatus that precedes it; readers focused on Problems~1--3 may skip ahead to Section~\ref{sec:formulation} subsection ``Five IP Analysis Problems'' without loss.
\end{definition}

\begin{definition}[Match Score Function]
\label{def:matchscore}
The match score function $M : T \times T \to [0,1]$ computes the similarity between two text representations from the master space $T$. For claim-to-evidence matching, $M$ restricts to $M : \mathcal{L} \times \mathcal{E} \to [0,1]$. One property is formally required; two further properties are desirable for well-behaved implementations but not required by any downstream theorem.
\begin{itemize}[leftmargin=2em, topsep=2pt, itemsep=1pt]
\item \textit{Boundedness (required):} $M(t_1, t_2) \in [0,1]$ for all $t_1, t_2 \in T$. This is the only property of $M$ on which the formal guarantees of Theorems~\ref{thm:algcorrect}--\ref{thm:convergence} depend.
\item \textit{Symmetry (desirable):} $M(t_1, t_2) = M(t_2, t_1)$. Holds when the underlying similarity measure uses a shared corpus for IDF weight computation, so that both arguments receive identical weights. Not required by any downstream theorem.
\item \textit{Identity (desirable):} $M(t, t) = 1$ for all $t \in T$. Holds for standard implementations whenever both the lexical and semantic components return~1 for identical inputs. As noted in Section~\ref{sec:architecture}, the concrete implementation can produce $M(t,t) = 1 - \alpha \neq 1$ when a text's TF-IDF vector is zero while its embedding vector is non-zero, an edge case that is unlikely for patent text but cannot be excluded unconditionally. Because identity is not required by any downstream theorem, this edge case does not affect the formal guarantees.
\end{itemize}
$M$ is provided as an input to the formal framework and lies below the trust boundary. Its concrete implementation (combining lexical and semantic similarity) is described in Section~\ref{sec:architecture}.
\end{definition}

\begin{definition}[Best Match and Effective Score]
\label{def:bestmatch}
\label{def:effcov}
For each limitation $v \in V$, the best match score is $\beta(v) = \max_{s \in S} M(v, s)$, which is well-defined since $S$ is non-empty and finite (Definition~\ref{def:evidence}). The dependency satisfaction threshold $\theta \in (0,1]$ is a system parameter. The effective score $\textit{eff} : V \to [0,1]$ is defined by structural induction on the topological order of $G$:
\[
\textit{eff}(v) = \begin{cases} \beta(v) & \text{if } \forall u \in \textit{deps}(v),\; \textit{eff}(u) \geq \theta \\ 0 & \text{else} \end{cases}
\]
\textbf{Well-foundedness.} This is a recursive definition: computing $\textit{eff}(v)$ requires knowing $\textit{eff}(u)$ for every $u \in \textit{deps}(v)$. For such a definition to be unambiguous (meaning $\textit{eff}(v)$ has a unique, determinate value for every node $v$), the recursion must terminate rather than loop. If the graph had a cycle (say $v$ depends on $u$ and $u$ depends on $v$), then computing $\textit{eff}(v)$ would require $\textit{eff}(u)$, which would require $\textit{eff}(v)$, and neither would ever be defined. Termination is guaranteed by the acyclicity of $G$: because $G$ is a DAG, its nodes can be arranged in a topological order in which every node appears after all of its dependencies. Computing $\textit{eff}$ in this order means that when we reach node $v$, the values $\textit{eff}(u)$ for all $u \in \textit{deps}(v)$ have already been computed. There is no circularity. This topological order exists if and only if $G$ is acyclic, which is established by the machine-verified theorem \texttt{dag\_acyclic} in Section~\ref{sec:framework}. The topological order is computed explicitly in Algorithm~\ref{alg:dagcov} (\texttt{TopologicalSort}, line~7) and forms the basis of the propagation in all subsequent algorithms. All other recursive definitions in this paper that traverse the dependency structure (including $\textit{eff}_{\text{DOE}}$ (Definition~\ref{def:doeeff}) and $\textit{claimStrength}$ (Definition~\ref{def:claimstrength})) are well-founded by the same argument.

Checking $\textit{eff}(u) \geq \theta$ (rather than $\beta(u) \geq \theta$) ensures transitive dependency enforcement: if any ancestor of $v$ in the DAG fails threshold, $v$'s effective score is also zeroed. In the context of DOE analysis (Problem~5), Definition~\ref{def:doeeff} introduces a separate propagation threshold $\theta_{\text{prop}} \in \{\theta_{\text{lit}}, \theta_{\text{eq}}\}$ for DOE-adjusted dependency enforcement; $\theta$ in this definition governs the standard analysis (Algorithm~\ref{alg:dagcov}), while $\theta_{\text{prop}}$ governs Algorithm~\ref{alg:doe}'s Phase~2.

\smallskip\noindent\textit{Remark (sub-threshold but non-zero effective scores).} $\textit{eff}(v)$ can be strictly positive yet below $\theta$. This occurs when $v$'s own dependencies are satisfied (so Definition~\ref{def:bestmatch} does not zero $\textit{eff}(v)$) but $\beta(v) < \theta$. Such a node contributes $\beta(v)$ to $W_{\text{cov}}$ through Definition~\ref{def:depscov}, yet simultaneously fails the dependency check for any dependent $w$ with $v \in \textit{deps}(w)$, causing $\textit{eff}(w) = 0$. The memory-module case study in Section~\ref{sec:algorithms} exhibits exactly this pattern under construction $I_2$: $\textit{eff}(C_3, I_2) = 0.58 > 0$ is retained (because $C_3$'s dependency $C_2$ is met) but triggers cascade zeroing in $C_{11}, C_{12}, C_{13}$ because $0.58 < \theta = 0.65$.

\smallskip\noindent\textit{Running example.} Using the best-match values $\beta(C_1) = 0.90$, $\beta(C_2) = 0.87$, $\beta(C_3) = 0.82$ from the example in Definition~\ref{def:evidence} with $\theta = 0.65$: $C_1$ has no dependencies, so $\textit{eff}(C_1) = \beta(C_1) = 0.90$; $\textit{deps}(C_2) = \{C_1\}$ and $\textit{eff}(C_1) = 0.90 \geq \theta$, so $\textit{eff}(C_2) = \beta(C_2) = 0.87$; $\textit{deps}(C_3) = \{C_2\}$ and $\textit{eff}(C_2) = 0.87 \geq \theta$, so $\textit{eff}(C_3) = \beta(C_3) = 0.82$. All three effective scores equal their best-match scores because every dependency in this prefix meets threshold.
\end{definition}

\begin{definition}[Dependency Satisfaction and Weighted Coverage]
\label{def:depscov}
A limitation $v$ has its dependencies satisfied, written $\textit{deps\_met}(v)$, iff $\forall u \in \textit{deps}(v),\; \textit{eff}(u) \geq \theta$, where $\textit{eff}$ is from Definition~\ref{def:bestmatch}. The weighted DAG coverage is:
\[
W_{\text{cov}} = \frac{\sum_{v \in V} w(\tau(v)) \cdot \textit{eff}(v)}{\sum_{v \in V} w(\tau(v))} \times 100
\]
The denominator is strictly positive by Definition~\ref{def:claimdag}. Since $0 \leq \textit{eff}(v) \leq 1$ and $w(\tau(v)) > 0$, we have $0 \leq W_{\text{cov}} \leq 100$.

\smallskip\noindent\textit{Running example.} With the effective scores computed above and weights $w(\tau(C_1)) = 0.3$, $w(\tau(C_2)) = 1.0$, $w(\tau(C_3)) = 2.0$:
\[
W_{\text{cov}} \;=\; \frac{0.3 \cdot 0.90 + 1.0 \cdot 0.87 + 2.0 \cdot 0.82}{0.3 + 1.0 + 2.0} \times 100 \;=\; \frac{2.78}{3.3} \times 100 \;\approx\; 84.2\%,
\]
which is a contribution of the three-node subgraph to the larger case study's coverage computation (the full 15-node analysis yields $82.3\%$ under $I_1$, Section~\ref{sec:algorithms}).
\end{definition}

\begin{definition}[Claim Scope Space and Evidence Projection]
\label{def:scopespace}
\textbf{(a) Scope Space.} For each element $v \in V$, the claim scope space $\Phi_v$ is a finite, non-empty set of implementation description texts:
\[
\Phi_v \subset \mathcal{I}, \quad |\Phi_v| \geq 1, \quad |\Phi_v| < \infty
\]
where each $\phi \in \Phi_v$ represents a distinct concrete technical implementation that could plausibly satisfy element $v$ under some reasonable claim construction. The finiteness of $\Phi_v$ is guaranteed by imposing a word-length bound: $\Phi_v \subseteq \{t \in \mathcal{I} \mid |t| \leq L\}$ for a fixed bound $L$ (e.g., $L = 1000$ words). Since $\mathcal{I}$ is over a finite alphabet and texts are length-bounded, this set is finite, and any $\Phi_v$ satisfying the bound is automatically finite. Non-emptiness is a precondition on inputs to the formal framework, analogous to $S \neq \emptyset$. Since $\Phi_v$ is finite and non-empty, all argmax operations over $\Phi_v$ are well-defined, with ties broken by an arbitrary but fixed ordering on $\Phi_v$ (analogous to the canonical ordering on $S$ from Definition~\ref{def:evidence}).

\noindent\textbf{Trust boundary for $\Phi_v$ (important).} $\Phi_v$ is one of the single largest risk inputs to the DOE analysis, and its quality is \textit{below the trust boundary}. The formal framework makes no requirement on how $\Phi_v$ is constructed; it may be provided by human experts, structured technology knowledge bases, or automated tools, and the formal guarantees hold for any valid instantiation satisfying the preconditions above. But the framework also cannot detect three failure modes that affect $\Phi_v$ specifically: scope underrepresentation, scope misclassification, and evidence mismatch (see Limitation~2 in Section~\ref{sec:discussion} for the detailed failure-mode analysis and mitigations). In the prototype of Section~\ref{sec:implementation}, $\Phi_v$ is generated by the ML/NLP layer; under either ML or expert construction, expert review of $\Phi_v$ and its prosecution scope partitions $\Phi_v^{\text{orig},k}, \Phi_v^{\text{amend},k}$ before formal encoding is the primary mitigation.

\textbf{(b) Prosecution Scope Regions.} When element $v$ has been amended $K_v$ times during prosecution, the scope regions for amendment $k$ are subsets of $\Phi_v$:
\[
\Phi_v^{\text{amend},k} \subseteq \Phi_v^{\text{orig},k} \subseteq \Phi_v, \quad \text{for } k = 1, \ldots, K_v
\]
where $\Phi_v^{\text{orig},k} \subseteq \Phi_v$ is the subset of implementations covered by the original (pre-amendment $k$) claim language, and $\Phi_v^{\text{amend},k} \subseteq \Phi_v$ is the subset covered by the amended (post-amendment $k$) claim language. The surrendered subject matter for amendment $k$ is:
\[
\Phi_v^{\text{orig},k} \setminus \Phi_v^{\text{amend},k}
\]
which is the set of implementations that were within scope before amendment $k$ but are no longer within scope after it. The inclusion $\Phi_v^{\text{amend},k} \subseteq \Phi_v^{\text{orig},k}$ is required because amendments are narrowing, the amended language covers a subset of what the original language covered. When $v$ has a single amendment, the index $k$ is dropped. The classification of each $\phi \in \Phi_v$ into scope regions requires claim construction judgment and is provided as an input to the formal framework. In the prototype, this classification is performed by the ML/NLP layer (below the trust boundary).

\textbf{(c) Evidence Projection.} The evidence projection function $\pi_v : \mathcal{E} \to \Phi_v$ maps each evidence segment to the closest implementation description in $\Phi_v$:
\[
\pi_v(s) = \arg\max_{\phi \in \Phi_v} M_{\text{scope}}(\phi, s)
\]
where $M_{\text{scope}} : \mathcal{I} \times \mathcal{E} \to [0,1]$ is a scope-specific match score function satisfying the same axioms as $M$ (Definition~\ref{def:matchscore}) but potentially configured differently to reflect the greater importance of precise terminology in defining claim scope boundaries. When no scope-specific configuration is needed, $M_{\text{scope}} = M$. Since $\Phi_v$ is finite and non-empty, this argmax is always well-defined, with ties broken by the fixed ordering on $\Phi_v$. The projection asks: ``which of the candidate implementations of $v$ does this product evidence most closely resemble?'' When $v$ has no prosecution history, $\pi_v$ and the scope regions are not needed and are left undefined.
\end{definition}

\noindent\textbf{Remark (Claim Language vs.\ Implementation Descriptions).} It is essential to distinguish two conceptually different kinds of text in the DOE analysis:
\begin{itemize}[leftmargin=2em, topsep=2pt, itemsep=1pt]
\item The \textit{claim language} of element $v$ (in $\mathcal{L}$) is the text of the patent claim as written, e.g., ``a rank translation circuit configured to map a second number of ranks to a first number of ranks.'' It defines the scope of the element.
\item An \textit{implementation description} $\phi \in \Phi_v$ (in $\mathcal{I}$) is a text describing a concrete technical artifact, e.g., ``a hardware ASIC with a hardwired rank address decoder using combinational logic.'' It describes a specific realization that may or may not fall within the scope.
\end{itemize}
$\Phi_v$ is a set of the second kind, not the first kind. The claim language $v$ defines what $\Phi_v$'s elements should look like, it is the criterion for membership, but $v$ itself is not a member of $\Phi_v$ because $v$ and the elements of $\Phi_v$ carry different role tags by construction ($v$ is tagged as claim language, elements of $\Phi_v$ as implementation descriptions; see Definition~\ref{def:textspaces}), even when, as a string, the claim language phrasing happens to coincide with one of the implementation descriptions. This distinction is critical for the prosecution history estoppel analysis: the surrendered territory $\Phi_v^{\text{orig},k} \setminus \Phi_v^{\text{amend},k}$ is a set of concrete implementations, not a set of claim language paraphrases. Only by keeping these conceptually separate can the set difference have meaningful legal content.

\begin{definition}[Prosecution History]
\label{def:prosecution}
The prosecution history $\textit{PH} = (\textit{PH}_{\text{amend}},\; \textit{PH}_{\text{arg}})$ is a pair of sets:
\begin{itemize}[leftmargin=2em, topsep=2pt, itemsep=1pt]
\item $\textit{PH}_{\text{amend}} = \{(v_k, A_k^{\text{orig}}, A_k^{\text{amend}}, R_k)\}$ is the set of narrowing amendments, where $v_k \in V$ is the amended element (note: the same $v_k$ may appear in multiple entries if amended more than once), $A_k^{\text{orig}} \in \mathcal{L}$ is the original claim language, $A_k^{\text{amend}} \in \mathcal{L}$ is the amended claim language, and $R_k \in \{\textit{Patentability}, \S112, \textit{Other}\}$ is the reason for amendment. Under \textit{Festo Corp.\ v.\ Shoketsu}~\cite{festo2002}, all narrowing amendments trigger the presumption of prosecution history estoppel regardless of $R_k$. The reason $R_k$ determines which of Festo's three rebuttal exceptions (unforeseeability, tangentiality, other reason) may apply. For $R_k = \textit{Other}$, the scope of available rebuttal is unclear under current case law, so the presumption is treated conservatively as irrebuttable for such amendments.
\item $\textit{PH}_{\text{arg}} = \{(v_k, \textit{arg}_k, \textit{scope}_k)\}$ is the set of prosecution arguments, where $\textit{arg}_k \in \mathcal{L}$ is the argument text and $\textit{scope}_k \subseteq \Phi_{v_k}$ (Definition~\ref{def:scopespace}) is the scope disclaimed by the argument.
\end{itemize}
$\textit{PH}_{\text{amend}}$ is derived from the formal prosecution record and admits exact formalization. $\textit{PH}_{\text{arg}}$ requires semantic parsing of natural language text and lies below the trust boundary. Dedication to the public (disclosed-but-unclaimed embodiments) is identified as a future extension (see Section~\ref{sec:discussion}).
\end{definition}

\noindent\textbf{Remark (Computing Disclaimed Scope from Prosecution Arguments).} The disclaimed scope $\textit{scope}_k$ for each prosecution argument $\textit{arg}_k$ requires determining, from the natural language text of the argument, which concrete implementations in $\Phi_{v_k}$ were given up. This is among the most difficult tasks in the entire pipeline: it requires reading a legal argument, understanding its technical context, interpreting what it implies about claim scope, and mapping that implication onto the finite set $\Phi_{v_k}$. This task is substantially harder than claim decomposition or evidence matching, as it involves multi-step legal reasoning over prosecution history text. In the prototype, $\textit{scope}_k$ is computed by the ML/NLP layer (below the trust boundary). The formal layer certifies only that $\textit{ER}_{\text{arg}}$ is computed correctly given $\textit{scope}_k$ as provided; it cannot certify that $\textit{scope}_k$ correctly captures what the prosecution argument disclaimed. This conditionality should be understood as a significant limitation of the argument-based estoppel analysis.

\begin{definition}[Festo Rebuttal Predicate]
\label{def:festo_rebutted}
The Festo rebuttal predicate, reflecting the rebuttable presumption from \textit{Festo Corp.\ v.\ Shoketsu}~\cite{festo2002}, is
\begin{multline*}
\textit{Festo\_rebutted}(v, s, k, \textit{PH}) = \textit{Unforeseeable}(v, s, k, \textit{PH}) \\
\vee\; \textit{Tangential}(v, s, k, \textit{PH}) \;\vee\; \textit{AlternativeJustification}(v, s, k, \textit{PH})
\end{multline*}
The three component predicates are abstract user-supplied inputs to the formal layer with the following type signatures:
\begin{itemize}[leftmargin=2em, topsep=2pt, itemsep=1pt]
\item $\textit{Unforeseeable} : V \times S \times \mathbb{N} \times \textit{PH} \to \{\textit{true}, \textit{false}\}$
\item $\textit{Tangential} : V \times S \times \mathbb{N} \times \textit{PH} \to \{\textit{true}, \textit{false}\}$
\item $\textit{AlternativeJustification} : V \times S \times \mathbb{N} \times \textit{PH} \to \{\textit{true}, \textit{false}\}$ \hfill (the \emph{Festo} \textit{other reason} exception)
\end{itemize}
We adopt the predicate name $\textit{AlternativeJustification}$ in place of the legal term ``other reason'' to avoid collision with the amendment reason $R_k = \textit{Other}$ (Definition~\ref{def:prosecution}); the two concepts are unrelated. $R_k$ records the stated purpose of an amendment, while $\textit{AlternativeJustification}$ is a rebuttal predicate invoked \textit{after} an amendment triggers the Festo presumption. All three predicates require $k$ (the amendment index) because the rebuttal determination is specific to the rationale and context of each individual amendment: an equivalent that was unforeseeable given the state of the art at the time of amendment $k_1$ may have been foreseeable at the time of a later amendment $k_2$. Per Definition~\ref{def:prosecution}, when $R_k = \textit{Other}$ the presumption is treated as irrebuttable, and the user-supplied predicates are required to satisfy $\textit{Unforeseeable}(v,s,k,\textit{PH}) = \textit{Tangential}(v,s,k,\textit{PH}) = \textit{AlternativeJustification}(v,s,k,\textit{PH}) = \textit{false}$ for all $s$ when the applicable amendment has $R_k = \textit{Other}$, ensuring $\textit{Festo\_rebutted}$ returns $\textit{false}$ (no rebuttal available) in that case. Correctness of these predicates with respect to the \textit{Festo} legal standard is assumed and lies below the trust boundary; the formal layer certifies only that $\textit{Festo\_rebutted}$ is computed correctly from them.
\end{definition}

\begin{definition}[Amendment-Based Estopped Region]
\label{def:er_amend}
The amendment-based estopped region for element $v \in V$ under prosecution history $\textit{PH}_{\text{amend}}$ is
\begin{multline*}
\textit{ER}_{\text{amend}}(v, \textit{PH}_{\text{amend}}) = \bigl\{s \in S \;\big|\; \exists k: v_k = v \\
\wedge\; \pi_v(s) \in \Phi_v^{\text{orig},k} \setminus \Phi_v^{\text{amend},k}
\;\wedge\; \neg\textit{Festo\_rebutted}(v, s, k, \textit{PH})\bigr\}
\end{multline*}
where $\Phi_v^{\text{orig},k}$ and $\Phi_v^{\text{amend},k}$ are the prosecution scope regions for the $k$-th amendment to $v$ (Definition~\ref{def:scopespace}(b)), $\pi_v$ is the evidence projection (Definition~\ref{def:scopespace}(c)), and $\textit{Festo\_rebutted}$ is the predicate of Definition~\ref{def:festo_rebutted}. The condition handles multiple amendments to the same element via the existential over $k$.
\end{definition}

\begin{definition}[Argument-Based Estopped Region]
\label{def:er_arg}
The argument-based estopped region for element $v \in V$ under prosecution history $\textit{PH}_{\text{arg}}$ is
\[
\textit{ER}_{\text{arg}}(v, \textit{PH}_{\text{arg}}) = \{s \in S \mid \exists (v_k, \textit{arg}_k, \textit{scope}_k) \in \textit{PH}_{\text{arg}},\; v_k = v:\; \pi_v(s) \in \textit{scope}_k\}
\]
The condition $v_k = v$ restricts argument-based estoppel to prosecution arguments made specifically for element $v$, consistent with the element-specific scope used in Definition~\ref{def:er_amend}.
\end{definition}

\begin{definition}[Estopped Region]
\label{def:estoppel}
For each $v \in V$, the estopped region is the union of the amendment-based and argument-based components:
\[
\textit{EstoppedRegion}(v, \textit{PH}) = \textit{ER}_{\text{amend}}(v, \textit{PH}_{\text{amend}}) \cup \textit{ER}_{\text{arg}}(v, \textit{PH}_{\text{arg}})
\]
(Definitions~\ref{def:er_amend} and~\ref{def:er_arg}). Where $\textit{PH}$ contains no entry for $v$ under any component, $\textit{EstoppedRegion}(v, \textit{PH}) = \emptyset$; this default assumes absence of prosecution history implies no estoppel, a legal assumption whose correctness must be verified by the practitioner. Dedication to the public is not modeled and is identified as a future extension (Section~\ref{sec:discussion}). The formal layer certifies only that $\textit{EstoppedRegion}$ is computed correctly given the three components as inputs.
\end{definition}

\noindent\textbf{Remark (No Rebuttal for Argument-Based Estoppel).} The current formalization treats argument-based estoppel as absolute, no rebuttal mechanism analogous to $\textit{Festo\_rebutted}$ is provided for $\textit{ER}_{\text{arg}}$. This reflects one view in the case law, where argument-based estoppel has sometimes been treated as irrebuttable because the patentee voluntarily made the disclaimer without examiner compulsion. However, this is a contested legal question. Some courts have permitted limited rebuttal of argument-based estoppel where the argument was ambiguous or not clearly directed at the accused equivalent. Practitioners relying on this framework should be aware that the absolute treatment of argument-based estoppel may be overly conservative in jurisdictions or contexts where rebuttal is permitted. Adding a rebuttal mechanism for argument-based estoppel analogous to $\textit{Festo\_rebutted}$ is identified as future work.

\begin{definition}[Functional Decomposition]
\label{def:funcdecomp}
For each claim element $v \in V \subset \mathcal{L}$, the parsing functions $\textit{func}, \textit{way}, \textit{res} : \mathcal{L} \to \mathcal{L}$ extract the functional, way (manner), and result aspects of $v$ respectively, returning texts in the claim language space $\mathcal{L}$. Since $\textit{func}(v) \in \mathcal{L} \subset T$ and $s \in \mathcal{E} \subset T$, the application $M(\textit{func}(v), s) : T \times T \to [0,1]$ is type-correct. For element types where functional decomposition is not semantically meaningful (e.g., preamble), $\textit{func}(v) = \textit{way}(v) = \textit{res}(v) = v$ by convention. These parsing functions are abstract and lie below the trust boundary; their concrete computation is described in Section~\ref{sec:architecture}.
\end{definition}

\begin{definition}[Function-Way-Result Similarity]
\label{def:fwr}
The function, way, and result similarity measures $f_{\text{sim}}, w_{\text{sim}}, r_{\text{sim}} : V \times \mathcal{E} \to [0,1]$ are defined as:
\[
f_{\text{sim}}(v, s) = M(\textit{func}(v), s), \quad w_{\text{sim}}(v, s) = M(\textit{way}(v), s), \quad r_{\text{sim}}(v, s) = M(\textit{res}(v), s)
\]
where $M : T \times T \to [0,1]$ is the match score function from Definition~\ref{def:matchscore}, applied to parsed text representations from Definition~\ref{def:funcdecomp}. Since $\textit{func}(v) \in \mathcal{L} \subset T$ and $s \in \mathcal{E} \subset T$, the application $M(\textit{func}(v), s)$ is type-correct. These functions inherit the probabilistic nature of $M$ and lie below the trust boundary. The formal verification layer certifies threshold comparisons on these scores but cannot certify that they correctly measure the legal concepts of function, way, and result respectively.
\end{definition}

\begin{definition}[DOE Thresholds]
\label{def:doethresholds}
The literal match threshold $\theta_{\text{lit}} \in (0,1]$ is the minimum score above which element $v$ is considered literally satisfied. The equivalence threshold $\theta_{\text{eq}} \in (0, \theta_{\text{lit}})$ is the minimum score required on each prong of the function-way-result test; the strict inequality $\theta_{\text{eq}} > 0$ ensures that zero-similarity is never sufficient for equivalence. The constraint $\theta_{\text{eq}} < \theta_{\text{lit}}$ reflects the legal principle that equivalence is a weaker standard than literal satisfaction. Both thresholds are system parameters configurable by the user. Note: a uniform $\theta_{\text{eq}}$ across all three prongs and all element types is a simplifying assumption; alternative formulations may use separate thresholds $\theta_f, \theta_w, \theta_r$ per prong or per element type $\tau(v)$. The relationship between the dependency satisfaction threshold $\theta$ (Definition~\ref{def:bestmatch}, used in Algorithms~1, 2, and~6) and $\theta_{\text{lit}}$ should be explicitly configured. When DOE analysis is not enabled, $\theta$ and $\theta_{\text{lit}}$ are independent parameters. When DOE analysis is enabled, setting $\theta = \theta_{\text{lit}}$ ensures consistent dependency enforcement across the standard and DOE analyses and is the recommended configuration.
\end{definition}

\begin{definition}[Best Non-Estopped Evidence and Vitiation]
\label{def:bestevidence}
For each $v \in V$ with $\beta(v) < \theta_{\text{lit}}$, let $S_v = S \setminus \textit{EstoppedRegion}(v, \textit{PH})$ (Definition~\ref{def:estoppel}). The best non-estopped evidence segment is:
\[
s^*(v) = \begin{cases} \arg\max_{s \in S_v} M(v, s) & \text{if } S_v \neq \emptyset \\ \text{undefined} & \text{if } S_v = \emptyset \end{cases}
\]
with ties broken by the canonical ordering on $S$ (Definition~\ref{def:evidence}). The vitiation predicate $\textit{vitiated} : V \times S \to \{\textit{true}, \textit{false}\}$ is defined as:
\[
\textit{vitiated}(v, s) \;\Longleftrightarrow\; M(v, s) < \theta_{\text{vit}}
\]
where $\theta_{\text{vit}} \in [0, \theta_{\text{eq}})$ is a vitiation threshold below which the accused element is considered absent rather than substituted, consistent with the all-elements rule under \textit{Warner-Jenkinson}~\cite{warnerjenkinson1997}. This prevents DOE from being used to read out an entire claim limitation. The default $\theta_{\text{vit}} = 0$ disables the vitiation check: since $M(v,s) \geq 0$ by Definition~\ref{def:matchscore}, the strict inequality $M(v,s) < 0$ is never satisfied, so $\textit{vitiated}$ is vacuously false on every pair (the strict $<$ rather than $\leq$ is chosen precisely so that $\theta_{\text{vit}} = 0$ acts as a no-op sentinel). Both $s^*(v)$ and $\textit{vitiated}$ depend on ML-computed scores and lie below the trust boundary.
\end{definition}

\begin{definition}[Match Type]
\label{def:matchtype}
$\textsc{MatchType}$ is an inductive type with constructors ordered $\textsc{NoMatch} < \textsc{Equivalent} < \textsc{Literal}$. Classification follows a precedence rule: if $\beta(v) \geq \theta_{\text{lit}}$, classify as $\textsc{Literal}$ regardless of whether $\textsc{Equivalent}$ conditions are also met. The constructors and their proof obligations are:
\begin{itemize}[leftmargin=2em, topsep=2pt, itemsep=1pt]
\item $\textsc{Literal}(\beta(v))$ requires $\beta(v) \geq \theta_{\text{lit}}$
\item $\textsc{Equivalent}(s^*(v), f_{\text{sim}}(v, s^*), w_{\text{sim}}(v, s^*), r_{\text{sim}}(v, s^*))$ requires $\beta(v) < \theta_{\text{lit}}$, $s^*(v)$ defined (Definition~\ref{def:bestevidence}), all three prongs $\geq \theta_{\text{eq}}$, and $\neg\textit{vitiated}(v, s^*(v))$. (The non-estoppel condition $s^*(v) \notin \textit{EstoppedRegion}(v, \textit{PH})$ is automatically satisfied since $s^*(v)$ is selected from $S_v = S \setminus \textit{EstoppedRegion}$.)
\item $\textsc{NoMatch}$ has no proof obligations
\end{itemize}
Note: $\textsc{MatchType}$ forms a finite totally ordered set (and hence a distributive lattice with $\wedge = \min$, $\vee = \max$), but the propagation machinery operates on the complete lattice $\mathbb{L} = [0,1]$, not on $\textsc{MatchType}$ directly. The mapping $\textit{eff}_{\text{DOE}}$ (Definition~\ref{def:doeeff}) embeds each classification back into $\mathbb{L}$, where all lattice-theoretic properties (monotonicity, bounded propagation, fixed-point convergence) apply.
\end{definition}

\begin{definition}[DOE-Adjusted Effective Score]
\label{def:doeeff}
The DOE-adjusted effective score $\textit{eff}_{\text{DOE}} : V \to [0,1]$ incorporates both match classification and dependency satisfaction:
\[
\textit{eff}_{\text{DOE}}(v) = \begin{cases}
0 & \text{if } \neg\textit{deps\_met\_DOE}(v) \\[3pt]
\beta(v) & \text{if } \textit{deps\_met\_DOE}(v) \;\wedge\; \textit{match}(v) = \textsc{Literal} \\[3pt]
\delta \cdot \min(f_{\text{sim}},\, w_{\text{sim}},\, r_{\text{sim}}) & \text{if } \textit{deps\_met\_DOE}(v) \;\wedge\; \textit{match}(v) = \textsc{Equivalent} \\[3pt]
0 & \text{if } \textit{deps\_met\_DOE}(v) \;\wedge\; \textit{match}(v) = \textsc{NoMatch}
\end{cases}
\]
where the similarity arguments are evaluated at $(v, s^*(v))$, i.e., $f_{\text{sim}} = f_{\text{sim}}(v, s^*(v))$ and likewise for $w_{\text{sim}}$ and $r_{\text{sim}}$;
where $\delta \in (0,1]$ is a discount factor reflecting the weaker legal status of equivalent matches (default $\delta = 1$). Dependency enforcement under DOE is governed by a propagation threshold $\theta_{\text{prop}} \in \{\theta_{\text{lit}}, \theta_{\text{eq}}\}$ with default $\theta_{\text{prop}} := \theta_{\text{lit}}$; see the Note on Phase~2 threshold interaction following Theorem~\ref{thm:doe} for the tradeoff between conservative dependency enforcement ($\theta_{\text{prop}} = \theta_{\text{lit}}$) and maximized DOE coverage ($\theta_{\text{prop}} = \theta_{\text{eq}}$). The DOE-adjusted dependency satisfaction is:
\[
\textit{deps\_met\_DOE}(v) \;\Longleftrightarrow\; \forall u \in \textit{deps}(v),\; \textit{eff}_{\text{DOE}}(u) \geq \theta_{\text{prop}}
\]
This recursive definition is well-founded by the same argument as Definition~\ref{def:bestmatch}: the acyclicity of $G$, established by \texttt{dag\_acyclic}, guarantees a topological evaluation order in which all dependencies are computed before their dependents. The DOE-adjusted weighted coverage is:
\[
W_{\text{DOE}} = \frac{\sum_{v \in V} w(\tau(v)) \cdot \textit{eff}_{\text{DOE}}(v)}{\sum_{v \in V} w(\tau(v))} \times 100
\]
Since $0 \leq \textit{eff}_{\text{DOE}}(v) \leq 1$ and the denominator is strictly positive, $0 \leq W_{\text{DOE}} \leq 100$. Note that $W_{\text{DOE}}$ and $W_{\text{cov}}$ are not directly comparable in general: an element with $\beta(v) \in [\theta_{\text{eq}}, \theta_{\text{lit}})$ and $\textit{deps\_met}(v)$ contributes $\beta(v) \cdot w(\tau(v)) > 0$ to $W_{\text{cov}}$, but under DOE may be classified as $\textsc{NoMatch}$ (contributing~$0$) if the FWR test fails, if all non-estopped evidence is removed ($S_v = \emptyset$), or if the element is vitiated. Conversely, DOE can increase coverage by reclassifying such elements as $\textsc{Equivalent}$ when all conditions are met. The relationship $W_{\text{DOE}} \geq W_{\text{cov}}$ holds only under the additional assumption that for every $v$ with $\beta(v) < \theta_{\text{lit}}$, either $S_v \neq \emptyset$ and the FWR test passes with $\delta \cdot \min(f_{\text{sim}}, w_{\text{sim}}, r_{\text{sim}}) \geq \beta(v)$, or $\textit{eff}(v) = 0$ under the standard analysis.
\end{definition}

\begin{definition}[Node Text Annotation]
\label{def:annotation}
Given a patent claim DAG $G = (V, E, \tau, w)$ (Definition~\ref{def:claimdag}) and a shared technical vocabulary $\Sigma$, the annotation function
\[
\textit{ann} : V \to \mathcal{P}(\Sigma)
\]
maps each node $v \in V$ to the subset of $\Sigma$ consisting of technical terms that appear in the claim language text of $v$:
\[
\textit{ann}(v) = \{t \in \Sigma \mid t \text{ appears in the claim language text of } v\},
\]
where the claim language text of $v$ is the element of $\mathcal{L} \subset T$ (Definition~\ref{def:textspace}) corresponding to the atomic limitation represented by $v$, \textit{not} its type label $\tau(v) \in \mathcal{T}$. The type label records the grammatical and legal category of the limitation (one of seven values in $\mathcal{T}$); the annotation records its technical vocabulary content. These are entirely separate objects. We lift the annotation to the graph level by $\textit{terms}(G) = \bigcup_{v \in V} \textit{ann}(v)$, the set of all shared-vocabulary terms appearing anywhere in $G$.

Term identification within claim text is performed by the LLM-assisted extraction pipeline (below the trust boundary). The formal layer certifies consistency of extracted interpretations given $\textit{ann}$; it cannot certify that $\textit{ann}$ correctly identifies all relevant technical terms. In the Lean~4 encoding, $\textit{ann}$ is an abstract function provided as a parameter, analogous to $M$ in Definition~\ref{def:matchscore}.
\end{definition}

\subsection{Five IP Analysis Problems}

Based on these definitions, we formalize five distinct problems:

\textbf{Problem 1 (Patent-to-Product Mapping).} Given claim DAG $G$ and evidence set $S$, compute $W_{\text{cov}}$ and generate a proof certificate attesting to the score and its derivation.

\textbf{Problem 2 (Freedom-to-Operate).} Given $G$ and $S$, either construct a formal proof that no assignment $A : V \to S$ can achieve full coverage (Definition~\ref{def:fullcoverage}), or report a gap analysis identifying the weakest elements and constraining dependency paths.

\textbf{Problem 3 (Claim Construction Sensitivity).} Given $G$, $S$, and constructions $\{I_1, \ldots, I_k\}$, determine which constructions yield coverage and identify determinative terms.

\textbf{Problem 4 (Cross-Claim Consistency).} Given portfolio $\{G_1, \ldots, G_p\}$ with shared vocabulary $\Sigma$, and interpretation function $f : \{1,\ldots,p\} \times \Sigma \to \mathsf{Int}$ where $f(i, t)$ denotes the meaning assigned to term $t$ within claim DAG $G_i$ and $\mathsf{Int}$ is the abstract set of possible interpretations (deliberately distinct from the match-score symbol $M$ of Definition~\ref{def:matchscore}), verify interpretation consistency: $\forall i,j.\; \forall t \in \Sigma.\; \text{uses}(G_i, t) \wedge \text{uses}(G_j, t) \Rightarrow f(i,t) = f(j,t)$. Here $\text{uses}(G_i, t)$ holds iff technical term $t$ appears in the claim text of at least one element $v \in V_i$:
\[
\text{uses}(G_i, t) \;\Leftrightarrow\; \exists v \in V_i:\; t \in \textit{ann}_i(v),
\]
where $\textit{ann}_i : V_i \to \mathcal{P}(\Sigma)$ is the annotation function of Definition~\ref{def:annotation}. This is well-defined because each $v \in V_i$ corresponds to a specific claim limitation text in $\mathcal{L} \subset T$, and $\textit{ann}_i(v)$ records the shared-vocabulary terms identified within that text. Note that $\textit{ann}_i(v)$ is \textit{not} derived from the type label $\tau_i(v) \in \mathcal{T}$: the type label records the grammatical/legal category of the limitation, while the annotation records its technical vocabulary content. $\textit{ann}_i$ lies below the trust boundary.

\textbf{Problem 5 (Doctrine of Equivalents).} Given claim DAG $G = (V, E, \tau, w)$, evidence set $S$, prosecution history $\textit{PH}$ (Definition~\ref{def:prosecution}), thresholds $\theta_{\text{lit}}, \theta_{\text{eq}}$ with $\theta_{\text{eq}} < \theta_{\text{lit}}$ (Definition~\ref{def:doethresholds}), propagation threshold $\theta_{\text{prop}} \in \{\theta_{\text{lit}}, \theta_{\text{eq}}\}$ with default $\theta_{\text{prop}} := \theta_{\text{lit}}$ (Definition~\ref{def:doeeff}), similarity functions $f_{\text{sim}}, w_{\text{sim}}, r_{\text{sim}}$ (Definition~\ref{def:fwr}), and claim construction $I$ (from Problem~3, treated as fixed input):

For each $v \in V$, let $S_v = S \setminus \textit{EstoppedRegion}(v, \textit{PH})$ (Definition~\ref{def:estoppel}). If $S_v = \emptyset$, set $\textit{match}(v) = \textsc{NoMatch}$. Otherwise let $s^*(v) = \arg\max_{s \in S_v} M(v,s)$, breaking ties by canonical ordering on $S$. Classify $\textit{match}(v) \in \textsc{MatchType}$ (Definition~\ref{def:matchtype}) and compute $\textit{eff}_{\text{DOE}}(v)$ (Definition~\ref{def:doeeff}).

Output: (i) $\textit{match} : V \to \textsc{MatchType}$ classifying each element; (ii) DOE-adjusted weighted coverage $W_{\text{DOE}}$; (iii) proof certificate $\Pi$ attesting to threshold comparisons, non-vitiation conditions, estoppel constraint application, bounds on $W_{\text{DOE}}$, and propagation correctness for $\textit{eff}_{\text{DOE}}$ through the DAG. Note: $\Pi$ certifies mathematical correctness of classifications given the input scores but does not certify legal correctness of $f_{\text{sim}}, w_{\text{sim}}, r_{\text{sim}}$ as measures of function, way, and result (see Section~\ref{sec:discussion}).

\textit{Legal completeness note.} The current formalization captures amendment-based and argument-based estoppel (Definition~\ref{def:estoppel}, combining Definitions~\ref{def:er_amend} and~\ref{def:er_arg} over the prosecution-history record of Definition~\ref{def:prosecution}) and the function-way-result test (\textit{Graver Tank}). The Festo rebuttable presumption is present syntactically via the three rebuttal predicates $\textit{Unforeseeable}$, $\textit{Tangential}$, and $\textit{AlternativeJustification}$ (Definition~\ref{def:festo_rebutted}); what is \textit{missing} is the formal derivation and verification of these predicates from prosecution history and the state of the art, which are currently treated as user-supplied inputs below the trust boundary. Three aspects are modeled as future extensions: (i)~dedication to the public (\textit{Johnson \& Johnston}~\cite{johnsonjohnston2002}); (ii)~formal derivation of the Festo rebuttal predicates rather than accepting them as user-supplied inputs; (iii)~the insubstantial differences test as an alternative to function-way-result. Analyses relying on the current formalization should be understood as incomplete with respect to these doctrines.

\section{System Architecture}
\label{sec:architecture}

The proposed framework employs a hybrid pipeline combining machine learning for natural language processing with formal verification for correctness certification. Figure~\ref{fig:architecture} illustrates the architecture.

\begin{figure}[H]
\centering
\includegraphics[width=\textwidth]{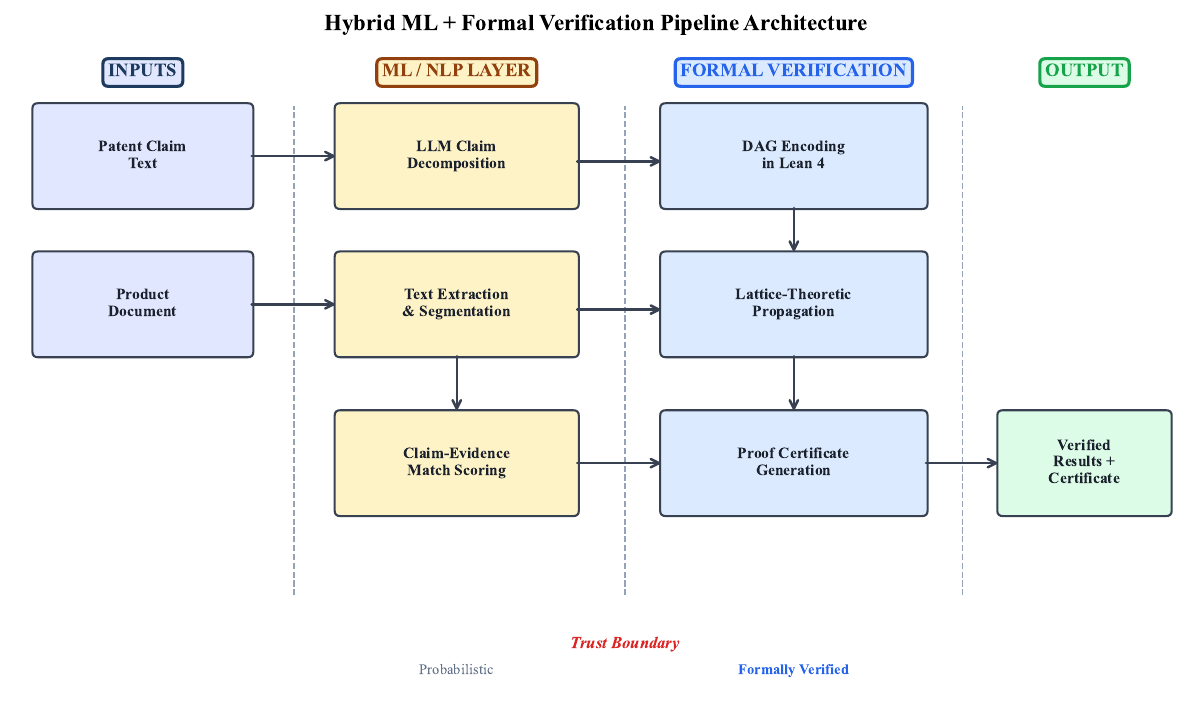}
\caption{Hybrid ML + Formal Verification Pipeline Architecture. The trust boundary separates probabilistic ML components (left) from formally verified components (right). For \textit{proof validity}, only the Lean~4 kernel (${\sim}5{,}000$ lines of C++) must be trusted. \textit{End-to-end legal or semantic correctness}, by contrast, additionally depends on the ML-layer outputs, $\Phi_v$ construction, functional decomposition, prosecution-history scope extraction, and other unverified inputs below the trust boundary.}
\label{fig:architecture}
\end{figure}

The pipeline consists of two layers separated by a trust boundary (Figure~\ref{fig:architecture}). The ML/NLP layer handles claim decomposition, text extraction and segmentation, and claim-evidence match scoring using standard techniques including LLM-based decomposition, TF-IDF, and embedding-based matching. The formal verification layer encodes the claim DAG in Lean~4, performs lattice-theoretic propagation of effective scores, and generates the proof certificate.

\subsection{Match Score Implementation}
The abstract match score function $M : T \times T \to [0,1]$ (Definition~\ref{def:matchscore}) is implemented in the claim-evidence match scoring component as a convex combination of lexical and semantic similarity:
\[
M(t_1, t_2) = \alpha \cdot M_{\text{lex}}(t_1, t_2) + (1-\alpha) \cdot M_{\text{sem}}(t_1, t_2)
\]
where $M_{\text{lex}}$ computes TF-IDF cosine similarity using a unified vocabulary built from the combined corpus of claim text and product documentation (ensuring symmetric IDF weights), $M_{\text{sem}}$ computes embedding cosine similarity (using $d$-dimensional dense vectors), and $\alpha \in [0,1]$ is a tunable fusion parameter. For scope-specific matching (Definition~\ref{def:scopespace}), a separate parameter $\alpha_{\text{scope}}$ may be configured to weight lexical similarity more heavily, reflecting the precision of legal claim language. Both $M_{\text{lex}}$ and $M_{\text{sem}}$ lie below the trust boundary. The computational cost of evaluating $M$ once is $O(d)$ where $d$ is the embedding dimension. The fused score $M$ satisfies the one formally required property of Definition~\ref{def:matchscore}: \textit{boundedness} holds because $M$ is a convex combination of values in $[0,1]$, unconditionally for all inputs. The two desirable properties hold in the normal case: \textit{symmetry} follows because cosine similarity is symmetric and the shared corpus ensures identical IDF weights for both arguments; \textit{identity} holds whenever both components return~1 for identical inputs, which is the case for non-empty text with in-vocabulary terms. However, when a text's TF-IDF vector is zero (all terms out of vocabulary) while its embedding vector is non-zero, the implementation yields $M(t,t) = 1 - \alpha \neq 1$. This violates identity but not boundedness, so the formal guarantees of Theorems~\ref{thm:algcorrect}--\ref{thm:convergence} are unaffected.

\subsection{Trust Model}
The system's trust model follows \textit{de Bruijn's criterion}~\cite{barendregt2005challenge}, a foundational principle in formal verification: separate proof \textit{generation} (complex, potentially buggy, untrusted) from proof \textit{checking} (simple, small, trusted). Just as verifying a completed Sudoku is trivial while solving one may require complex heuristics, checking a formal proof requires only a small, trusted kernel, regardless of how the proof was constructed.

The system establishes a clear trust boundary (Figure~\ref{fig:architecture}, dashed line). The ML/NLP components and proof generator (the claim decomposition, text extraction, and match scoring components, together with the Lean~4 prover) are \textit{untrusted}, their outputs are treated as unverified inputs. The only trusted component is the Lean~4 proof checker kernel, which has been independently self-verified by the Lean4Lean project~\cite{carneiro2024lean4lean} and consists of approximately 5,000 lines of C++.

\textit{Scope of the ``only the kernel is trusted'' claim.} This reduction applies to \textbf{proof validity}: whether a proof term delivered to the kernel in fact establishes the stated proposition in the Calculus of Inductive Constructions. For that question, the system's trustworthiness reduces to trusting the kernel, a reduction in trusted computing base of several orders of magnitude compared to trusting an entire ML pipeline. It does \textbf{not} mean that end-to-end legal or semantic correctness reduces to kernel trust. End-to-end correctness additionally depends on: (i)~ML-supplied match scores $M(v,s)$ being semantically faithful to the claim language and accused product; (ii)~the claim decomposition $(V, E, \tau, w)$ faithfully reflecting the patent claim; (iii)~$\Phi_v$ and its prosecution scope regions $\Phi_v^{\text{orig},k}, \Phi_v^{\text{amend},k}$ being correctly constructed; (iv)~functional, way, and result parses $\textit{func}, \textit{way}, \textit{res}$ correctly extracting the DOE prongs; and (v)~argument-based disclaimed scope $\textit{scope}_k$ being correctly computed from prosecution argument text. All of these lie below the trust boundary and are the responsibility of the ML/NLP layer or human domain experts. The proof certificate certifies that all downstream computation over these inputs is mathematically correct; it does not certify the inputs themselves.

\noindent\textbf{Trust Boundary and $\Phi_v$.} The claim scope space $\Phi_v$ is defined mathematically as a finite non-empty set of implementation descriptions (Definition~\ref{def:scopespace}). The formal framework makes no requirement on how $\Phi_v$ is constructed, it treats $\Phi_v$ as a given input. In the prototype implementation, $\Phi_v$ is generated by the ML/NLP layer, placing its construction below the trust boundary. However, in principle $\Phi_v$ could be provided by human domain experts or structured knowledge bases, in which case the trust boundary for $\Phi_v$ would be determined by the trustworthiness of those sources rather than the ML layer. The formal guarantees are identical regardless of how $\Phi_v$ is constructed, as long as the preconditions (non-emptiness, finiteness, correct scope region classifications) are satisfied.

\medskip
\noindent\textbf{Guarantee map (summary).} The following consolidated map is referenced throughout the paper; later sections that reiterate the trust-boundary placement of particular objects are pointers to this table rather than independent treatments.

\smallskip
\begin{center}
{\small
\begin{tabular}{@{}p{3.2cm}p{5cm}p{5cm}@{}}
\toprule
\textbf{Object} & \textbf{Status} & \textbf{Where treated in detail} \\
\midrule
Claim DAG $(V, E, \tau, w)$ & Below trust boundary (LLM/expert) & Section~\ref{sec:discussion} Limitation~1 \\
Match scores $\beta(v)$ & Below trust boundary (ML) & Section~\ref{sec:architecture}; Def.~\ref{def:matchscore} \\
Propagated $\textit{eff}$ and $W_{\text{cov}}$ & Above trust boundary (formal; Alg.~\ref{alg:dagcov-given}) & Section~\ref{sec:closedpath}, Prop.~\ref{prop:gensound} \\
Coverage certificate $\Pi$ (Alg.~\ref{alg:dagcov-given}) & Above trust boundary (machine-verified, $\Omega$-audited) & Section~\ref{sec:closedpath}, App.~\ref{app:closedpath}, App.~\ref{app:axiomaudit} \\
Certificates for Algs.~2--6 & Architecturally mitigated (untrusted generator; kernel-checked output) & Table~\ref{tab:proofstatus}; Section~\ref{sec:discussion} Limitation~7 \\
$\Phi_v$ and scope regions & Below trust boundary (ML/expert) & Def.~\ref{def:scopespace}; Section~\ref{sec:discussion} Limitation~2 \\
$\textit{func},\textit{way},\textit{res}$, $f_{\text{sim}}, w_{\text{sim}}, r_{\text{sim}}$ & Below trust boundary (ML) & Def.~\ref{def:funcdecomp}--\ref{def:fwr} \\
Festo rebuttal predicates; $\textit{scope}_k$ & Below trust boundary (legal expert / ML) & Def.~\ref{def:festo_rebutted}; Section~\ref{sec:discussion} Limitation~6 \\
Annotation $\textit{ann}$ & Below trust boundary (LLM) & Def.~\ref{def:annotation} \\
Weight scheme $w(\tau(v))$ & Below trust boundary (doctrinal, unvalidated) & Section~\ref{sec:discussion} Limitation~3 \\
\bottomrule
\end{tabular}%
}
\end{center}

\subsection{Instantiation of Formal Inputs}
\label{sec:instantiation}

Several objects in the formal framework (Section~\ref{sec:formulation}) are defined mathematically without specifying their construction:
\begin{itemize}[leftmargin=2em, topsep=2pt, itemsep=1pt]
\item $\Phi_v$: the claim scope space for each element $v$
\item $\Phi_v^{\text{orig},k}$, $\Phi_v^{\text{amend},k}$: scope regions for prosecution history analysis
\item $\textit{scope}_k$: the scope disclaimed by prosecution argument $\textit{arg}_k$
\item $\textit{func}(v)$, $\textit{way}(v)$, $\textit{res}(v)$: functional decomposition of claim elements
\end{itemize}
In the prototype, each of these is instantiated by the ML/NLP layer using LLM-based generation and classification. This is an implementation choice that lies below the trust boundary. Practitioners deploying the framework in other contexts may instantiate these objects differently (through human expert review, structured patent knowledge bases, or domain-specific classifiers) without affecting the validity of the formal guarantees. The formal layer certifies correctness of computation given these inputs; the quality of the inputs determines the legal meaningfulness of the results.

\section{Formal Framework}
\label{sec:framework}

\subsection{Claim DAG Encoding in Dependent Type Theory}
\label{sec:dagencoding}

\begin{figure}[H]
\centering
\includegraphics[width=\textwidth]{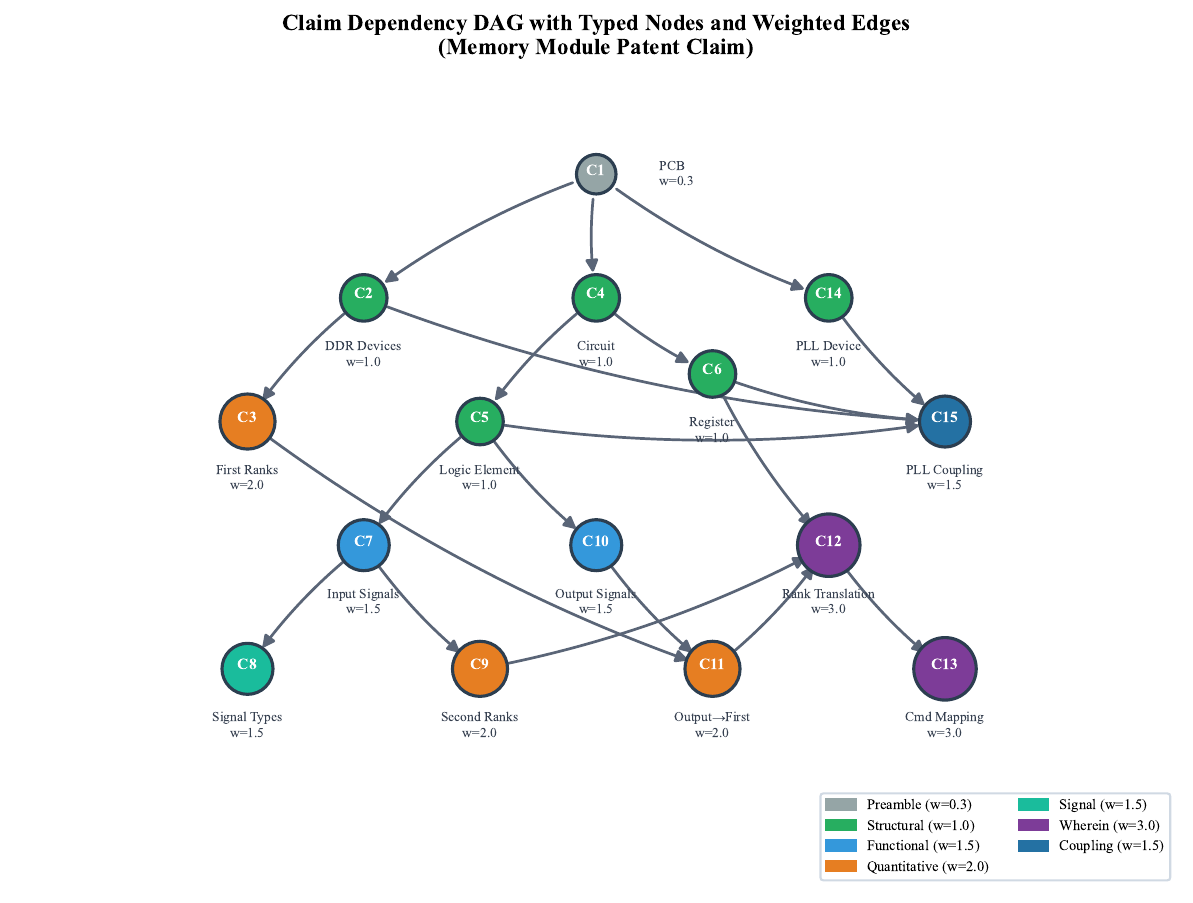}
\caption{Claim dependency DAG for a memory module patent. Node colors represent component types; sizes are proportional to weights. The \textit{wherein} nodes (C12, C13) receive the highest weight ($w{=}3.0$). Arrows point from dependency to dependent: an arrow $u \to v$ denotes that $u \in \textit{deps}(v)$, i.e., $v$'s effective score is zeroed if $\textit{eff}(u) < \theta$.}
\label{fig:claimdag}
\end{figure}

We encode the claim DAG in Lean~4 using inductive types with decidable equality. The encoding below defines three components: (1)~\texttt{ClaimNode}, an inductive type enumerating all claim components; (2)~\texttt{dependencies}, a function mapping each node to the list of nodes it depends on (nodes with no explicit dependencies default to an empty list); and (3)~\texttt{dag\_acyclic}, a machine-verified theorem proving the dependency graph contains no cycles.

\smallskip\noindent\textit{Lean~4 listing convention (used throughout the paper).} We mark each Lean~4 code block with one of two labels: \colorbox{green!15}{\small\textsc{Lean 4: compiled excerpt}} denotes code that is part of the prototype's actual compiled Lean~4 development; \colorbox{orange!20}{\small\textsc{Lean 4: illustrative, not compiled}} denotes specification-level pseudocode, mathematical-intent rendering, or tactic sketches included to convey structure, not to be a substitute for compiled proof. Verification-status classifications (Table~\ref{tab:proofstatus}) are assigned accordingly.

\smallskip
\noindent\colorbox{green!15}{\small\textsc{Lean 4: compiled excerpt}}\nopagebreak

\begin{lstlisting}
-- Inductive type: one constructor per claim component
-- Schematic: the concrete encoding in Appendix A.2 enumerates
-- one constructor per atomic claim limitation; `...` stands in
-- for the remaining constructors, not for a metavariable Cn.
inductive ClaimNode where
  | C1 | C2 | /* ... C_{n-1} ... */ | Cn
  deriving DecidableEq, Repr, Fintype

-- Dependency map: each node lists its parent dependencies
-- Nodes not listed explicitly have no dependencies (empty list)
def dependencies : ClaimNode -> List ClaimNode
  | .C2 => [.C1]       -- C2 depends on C1
  | .C3 => [.C2]       -- C3 depends on C2
  | _   => []           -- all other nodes: no dependencies

-- Edge relation: b is a dependency of a
def depEdge (a b : ClaimNode) : Prop := b ∈ dependencies a

-- Machine-verified proof that the dependency graph is acyclic
theorem dag_acyclic : Acyclic (Relation.TransGen depEdge) := by
  decide
\end{lstlisting}

The acyclicity theorem is mechanically verified by Lean's kernel. Any claim structure with circular dependencies would fail to compile.

\subsection{Match Strength as a Verified Complete Lattice}

We model match strengths as elements of a formally verified complete lattice $\mathbb{L} = ([0,1], \leq, \wedge, \vee, \bot, \top)$ where $\bot = 0.0$, $\top = 1.0$, $\wedge = \min$, and $\vee = \max$. (We use $\mathbb{L}$ for the match strength lattice throughout, reserving $\mathcal{L}$ for the claim language text space from Definition~\ref{def:textspaces}.) Using Mathlib, we instantiate the \texttt{CompleteLattice} typeclass and machine-verify all required axioms. The formal guarantees depend on the alignment between Mathlib's \texttt{CompleteLattice} specification and the intended mathematical structure; for $[0,1]$ with $\min/\max$ this alignment is straightforward, but it represents a general methodological assumption in formalized mathematics that the formal specification correctly captures the intended concept.

\noindent\textbf{Implementation: finite discretization.} Although the mathematical presentation treats $\mathbb{L}$ as the continuous interval $[0,1]$, the Lean~4 implementation uses a finite $10{,}001$-point discretization of $[0,1]$ represented with exact rational arithmetic (basis points in $\{0, 1, \ldots, 10{,}000\}$ corresponding to $[0.0000, 1.0000]$). This choice is driven by two constraints: (i)~Lean's proof tactics (\texttt{omega}, \texttt{norm\_num}, \texttt{decide}) operate on exact arithmetic types (\texttt{Nat}, \texttt{Int}, \texttt{Rat}), not \texttt{Float}; and (ii)~finiteness is required to make Kleene iteration on $\mathbb{L}$ terminate in finitely many steps (Theorem~\ref{thm:convergence}). Each raw ML score is discretized to the nearest basis point, introducing at most $10^{-4}$ rounding error per score; because $W_{\text{cov}}$ is a convex combination, the cumulative error in $W_{\text{cov}}$ is likewise bounded by $10^{-4}$ (at most $0.01$~pp on the percentage scale), which is negligible relative to the typical $0.05$-point thresholds used in practice. The \texttt{CompleteLattice} instance on the finite discretization is derived automatically via \texttt{Fintype}-based \texttt{Finset.sup}/\texttt{Finset.inf}; Appendix~\ref{app:lean} (Section~A.1) provides the full Lean~4 encoding and the distinction between compiled code and illustrative pseudocode. Readers should understand ``$[0,1]$'' throughout Sections~\ref{sec:framework}--\ref{sec:algorithms} as shorthand for this finite discretization; no downstream guarantee depends on the continuum structure.

The lattice structure provides four key properties (exact verification status per Table~\ref{tab:proofstatus}: the \texttt{CompleteLattice} instance itself is machine-verified; the monotonicity and bounded-propagation consequences for $W_{\text{cov}}$ are currently informal proof sketches):
\begin{itemize}[leftmargin=2em]
\item \textbf{Monotonicity:} Adding evidence cannot decrease match strength: $s_1(v) \leq s_2(v)\;\forall v \Rightarrow W_{\text{cov}}(s_1) \leq W_{\text{cov}}(s_2)$ (argued for $W_{\text{cov}}$ in Theorem~\ref{thm:algcorrect}(iv), Section~\ref{sec:algorithms}; presented as an informal proof sketch per Table~\ref{tab:proofstatus}, not a compiled Lean~4 proof).
\item \textbf{Idempotency:} Duplicate evidence does not inflate scores: $a \vee a = a$.
\item \textbf{Bounded propagation:} $\bot \leq \textit{eff}(v) \leq \top$ for all $v$.
\item \textbf{Compositionality:} Lattice operations satisfy absorption, distributivity, and associativity.
\end{itemize}

\begin{figure}[H]
\centering
\includegraphics[width=\textwidth]{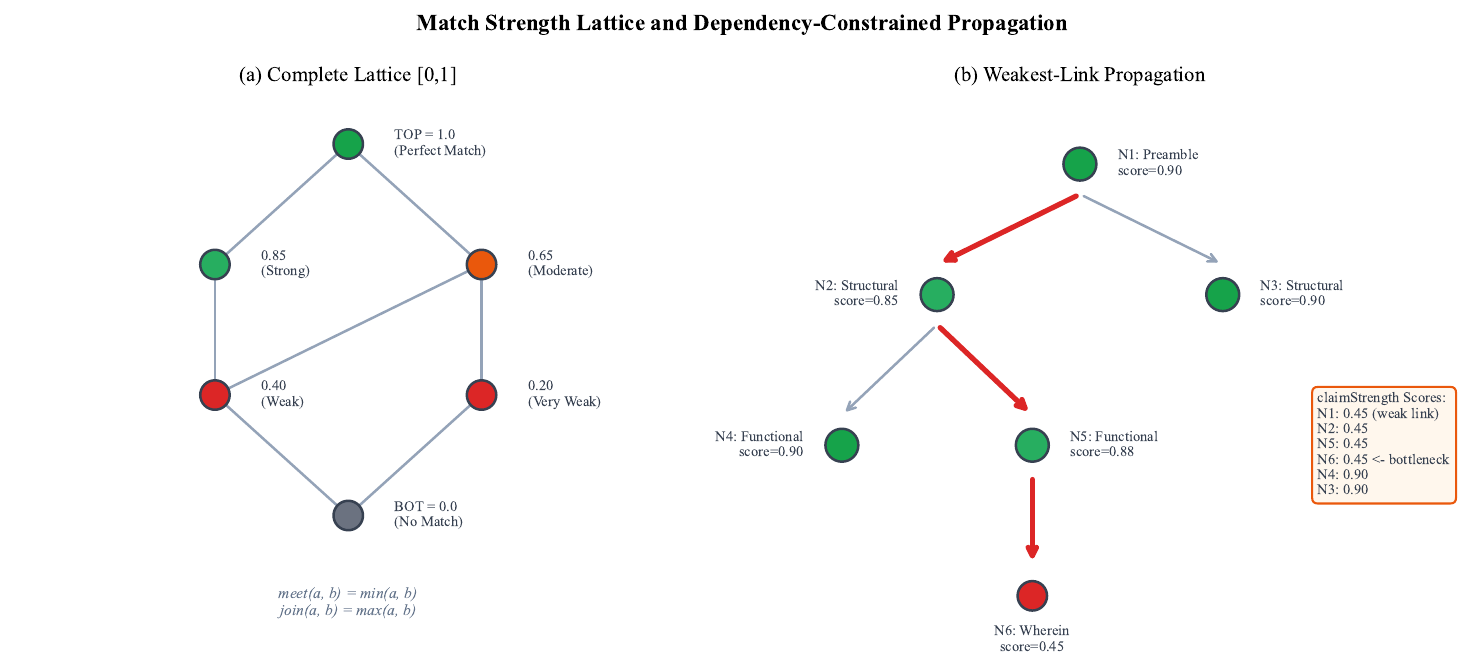}
\caption{(a) The complete lattice $[0,1]$ with meet and join. (b)~Weakest-link sensitivity analysis using the meet-based $\textit{claimStrength}$ model (Definition~\ref{def:claimstrength}): a schematic DAG in which a node's low score of 0.45 propagates through the dependency chain, bounding nodes that transitively depend on it. \textbf{The ``C12'' label in panel (b) is a schematic node name that happens to reuse the same letter/number pattern as the memory-module case study; the score $0.45$ and the dependency layout shown here are not drawn from the case study ($C_{12}$ there has $\beta = 0.80$ under $I_1$).} This figure illustrates the meet-based propagation model, not the threshold-based model (Definition~\ref{def:effcov}) used for coverage computation.}
\label{fig:lattice}
\end{figure}

\subsection{Weakest-Link Propagation}

The overall strength of a claim mapping is determined by the weakest path through the dependency graph. Because the Lean~4 edge relation is oriented from a node to its dependencies (Section~\ref{sec:dagencoding}), the natural recursion order is the reverse reachability relation:
\[
u \prec v \;\Longleftrightarrow\; \texttt{Relation.TransGen depEdge}\; v\; u,
\]
i.e., $u$ is a direct or transitive dependency of $v$. Since \textit{ClaimNode} is finite (Section~\ref{sec:dagencoding}, \texttt{deriving Fintype}) and \texttt{dag\_acyclic} rules out $v \prec v$, the relation $\prec$ is well-founded. The Lean~4 definition of \texttt{claimStrength} is therefore justified by well-founded recursion on $\prec$.

\noindent\textbf{Edge orientation convention.} In Figure~\ref{fig:claimdag}, arrows point from dependency to dependent: an arrow $u \to v$ denotes that $u \in \textit{deps}(v)$, i.e., $u$ is a dependency of $v$, and $v$'s effective score is zeroed if $\textit{eff}(u) < \theta$. In the Lean~4 encoding, the edge relation $\textit{depEdge}(a, b)$ is defined as $b \in \textit{deps}(a)$, i.e., in the reverse orientation (from dependent to dependency). The acyclicity theorem \texttt{dag\_acyclic} applies to this relation: it proves that no node can reach itself via $\textit{depEdge}$, establishing that the dependency graph contains no cycles.

\begin{lemma}[Well-founded dependency order]
\label{lem:deporderwf}
Define a relation $\prec$ on \textit{ClaimNode} by
\[
u \prec v \;\Longleftrightarrow\; \texttt{Relation.TransGen depEdge}\; v\; u .
\]
Then $\prec$ is well-founded.
\end{lemma}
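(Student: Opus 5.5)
The plan is to derive well-foundedness of $\prec$ from two facts already established: \textit{ClaimNode} is finite (the \texttt{deriving Fintype} clause in Section~\ref{sec:dagencoding}), and $\prec$ is irreflexive and transitive (\texttt{dag\_acyclic} together with the fact that \texttt{Relation.TransGen} is transitive by construction). A strict order on a finite type is well-founded, because an infinite descending chain $v_0 \succ v_1 \succ v_2 \succ \cdots$ would, by pigeonhole, repeat some node, $v_i = v_j$ with $i<j$. Transitivity would then give $v_i \prec v_i$, contradicting irreflexivity.

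\textbf{Step 1 (irreflexivity).} Unfolding $u \prec v$ as $\texttt{Relation.TransGen depEdge}\; v\; u$, the statement $v \prec v$ is exactly $\texttt{Relation.TransGen depEdge}\; v\; v$. This is ruled out by \texttt{dag\_acyclic}, read as ``no node reaches itself via one or more \textit{depEdge} steps.''

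\textbf{Step 2 (transitivity).} From $u \prec v$ and $v \prec w$ we have $\texttt{TransGen}\; v\; u$ and $\texttt{TransGen}\; w\; v$. \texttt{Relation.TransGen.trans} then gives $\texttt{TransGen}\; w\; u$, that is, $u \prec w$.

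\textbf{Step 3 (finiteness implies well-foundedness).} Mathlib provides \texttt{Finite.wellFounded\_of\_trans\_of\_irrefl}, which states that an irreflexive transitive relation on a \texttt{Finite} type is well-founded. It applies here through the \texttt{Fintype} instance on \textit{ClaimNode}. If we prefer to avoid that lemma, we can argue by accessibility directly. Suppose some $v$ were not accessible under $\prec$. By dependent choice we could then build an infinite descending chain of inaccessible nodes, and the pigeonhole argument above gives a contradiction. Alternatively, we can use the measure $\mu(v) = |\{u : u \prec v\}|$, which strictly decreases along $\prec$: if $u \prec v$, transitivity gives $\{x : x \prec u\} \subseteq \{x : x \prec v\}$, and irreflexivity ensures $u$ lies in the latter set but not the former. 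This reduces the claim to well-foundedness of $<$ on $\mathbb{N}$ via \texttt{InvImage}.

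\textbf{Main obstacle.} The delicate step is matching the exact form of \texttt{dag\_acyclic}. It is stated as \texttt{Acyclic (Relation.TransGen depEdge)} rather than as $\forall v,\ \neg\,\texttt{TransGen depEdge}\; v\; v$, and depending on how \texttt{Acyclic} is defined, extracting irreflexivity may need an unfolding lemma. The orientation reversal between $\prec$ and \textit{depEdge} must also be handled: $\prec$ is the converse of \texttt{TransGen depEdge}. Irreflexivity and transitivity are both preserved under taking the converse, so this causes no real difficulty, but in Lean it will require \texttt{flip} bookkeeping.

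As a fallback, if \texttt{Acyclic} cannot be unfolded conveniently, we can recompute irreflexivity by \texttt{decide}. This works because the relation is decidable on the finite type, using that \texttt{TransGen} over a finite graph is decidable via bounded-length paths.
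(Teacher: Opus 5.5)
Your proposal is correct and follows the paper's own proof essentially step for step: finiteness of \textit{ClaimNode} from \texttt{deriving Fintype}, transitivity of $\prec$ inherited from \texttt{Relation.TransGen}, irreflexivity from \texttt{dag\_acyclic}, and the fact that a transitive, irreflexive relation on a finite type admits no infinite descending chain. Your worry about the form of \texttt{dag\_acyclic} goes away because the compiled version in the appendix is stated directly as $\forall v,\ \neg\,\texttt{Relation.TransGen depEdge}\ v\ v$, so you need neither an \texttt{Acyclic} unfolding lemma nor the \texttt{decide} fallback.
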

\begin{proof}
Section~\ref{sec:dagencoding} defines \textit{ClaimNode} as a finite type (\texttt{deriving Fintype}). The relation \texttt{Relation.TransGen depEdge} is transitive by construction, and reversal of a transitive relation is transitive, so $\prec$ is transitive. It is irreflexive because \texttt{dag\_acyclic} rules out $\texttt{Relation.TransGen depEdge}\; v\; v$. A transitive, irreflexive relation on a finite type admits no infinite descending chain, so $\prec$ is well-founded.
\end{proof}

\begin{definition}[Claim Strength]
\label{def:claimstrength}
The claim strength function $\textit{claimStrength} : (\textit{ClaimNode} \to \textit{MatchStrength}) \to \textit{ClaimNode} \to \textit{MatchStrength}$ is defined recursively as:
\[
\textit{claimStrength}(\textit{score}, v) = \textit{score}(v) \;\wedge\; \bigwedge_{u \in \textit{deps}(v)} \textit{claimStrength}(\textit{score}, u)
\]
where $\wedge$ is the meet operation and $\bigwedge$ is the infimum over a finite set on the complete lattice $\mathbb{L}$. Well-foundedness follows from Lemma~\ref{lem:deporderwf}. In the full Lean~4 formalization, \texttt{claimStrength} is defined by well-founded recursion on the reverse dependency-reachability relation $\prec$.
\end{definition}

\noindent\textit{Note on proof presentation.} The proofs of Theorems~\ref{thm:weakest} and~\ref{thm:monotone} below are informal proof sketches describing the structure of the corresponding formal Lean~4 proofs. Where Mathlib theorem names appear in \texttt{monospace}, they refer to lemmas in the Mathlib4 library~\cite{mathlib2025} invoked in the intended formal development; these citations indicate the mathematical content the sketches rely on, not compilable code excerpts. These proofs are \emph{not} part of the compiled artifact accompanying this paper: no proof terms for Theorems~\ref{thm:weakest} and~\ref{thm:monotone} are currently submitted to the Lean~4 kernel. If and when these sketches are compiled, the kernel would type-check the resulting proof terms and the $\Omega$-audit would verify their axiom dependencies; until then, Table~\ref{tab:proofstatus} classifies both as informal sketches.

\begin{theorem}[Weakest Link]
\label{thm:weakest}
Assume $G$ is acyclic (i.e., \texttt{dag\_acyclic} holds; see Definition~\ref{def:claimdag} and Section~\ref{sec:dagencoding}). For all $\textit{score} : \textit{ClaimNode} \to \textit{MatchStrength}$ and all $v \in V$:
\begin{enumerate}[label=(\roman*), leftmargin=2.5em, topsep=2pt, itemsep=1pt]
\item $\textit{claimStrength}(\textit{score}, v) \leq \textit{score}(v)$ \hfill (upper bound)
\item $\forall u \in \textit{deps}(v),\; \textit{claimStrength}(\textit{score}, v) \leq \textit{claimStrength}(\textit{score}, u)$ \hfill (dependency bound)
\item $\textit{claimStrength}(\textit{score}, v)$ is the greatest value satisfying (i) and (ii) \hfill (optimality)
\item If $\textit{deps}(v) = \emptyset$ then $\textit{claimStrength}(\textit{score}, v) = \textit{score}(v)$ \hfill (base case)
\end{enumerate}
\end{theorem}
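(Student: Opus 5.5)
The plan is to unfold Definition~\ref{def:claimstrength} once at $v$ and read each item off the resulting meet. The recursion is well-founded by Lemma~\ref{lem:deporderwf}, so this unfolding equation is available as a propositional equation. In Lean it would be the equation lemma generated by well-founded recursion, obtained with \texttt{rw [claimStrength]} or \texttt{unfold}, not by \texttt{rfl}. Write
\[
\textit{claimStrength}(\textit{score}, v) = \textit{score}(v) \wedge B(v), \qquad B(v) := \bigwedge_{u \in \textit{deps}(v)} \textit{claimStrength}(\textit{score}, u).
\]
Items (i), (ii) and (iv) then need only lattice facts; no induction is required.

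\textbf{(i).} This is \texttt{inf\_le\_left}.

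\textbf{(ii).} For $u \in \textit{deps}(v)$, chain \texttt{inf\_le\_right} with the finite-infimum lower bound \texttt{Finset.inf\_le} (or \texttt{List.foldr}-based equivalents, depending on how $\bigwedge$ over the list \texttt{dependencies v} is encoded). This gives $\textit{score}(v) \wedge B(v) \le B(v) \le \textit{claimStrength}(\textit{score}, u)$.

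\textbf{(iv).} If $\textit{deps}(v) = \emptyset$, then $B(v)$ is the empty infimum, which is $\top$ (\texttt{Finset.inf\_empty}). So $\textit{score}(v) \wedge \top = \textit{score}(v)$ by \texttt{inf\_top\_eq}.

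\textbf{(iii).} I read optimality as a statement about values $x \in \mathbb{L}$ at the fixed node $v$: if $x \le \textit{score}(v)$ and $x \le \textit{claimStrength}(\textit{score}, u)$ for every $u \in \textit{deps}(v)$, then $x \le \textit{claimStrength}(\textit{score}, v)$. Together with (i) and (ii), which say that $\textit{claimStrength}(\textit{score}, v)$ itself satisfies the constraints, this makes it the greatest such value. The proof applies \texttt{le\_inf} to the unfolded form. The first component is the hypothesis $x \le \textit{score}(v)$. The second is \texttt{Finset.le\_inf}, applied to the family of hypotheses $x \le \textit{claimStrength}(\textit{score}, u)$.

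A stronger, global reading of (iii) is also natural. It says that among all assignments $f : \textit{ClaimNode} \to \mathbb{L}$ with $f(v) \le \textit{score}(v)$ and $f(v) \le f(u)$ for $u \in \textit{deps}(v)$ at every node, $\textit{claimStrength}(\textit{score}, \cdot)$ is pointwise greatest. I would also prove this version, by well-founded induction on $\prec$ using Lemma~\ref{lem:deporderwf}. At node $v$, the induction hypothesis gives $f(u) \le \textit{claimStrength}(\textit{score}, u)$ for every $u \prec v$, in particular for every direct dependency. Then $f(v) \le f(u) \le \textit{claimStrength}(\textit{score}, u)$ for each $u \in \textit{deps}(v)$, and the local argument above closes the step.

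I expect the main obstacle to be technical rather than mathematical: obtaining the unfolding equation for a function defined by well-founded recursion on \texttt{TransGen depEdge}. For this, the \texttt{decreasing\_by} obligation must show $u \prec v$ for $u \in \textit{deps}(v)$, which is \texttt{Relation.TransGen.single} applied to the membership proof. A secondary friction is the mismatch between the list \texttt{dependencies v} and a \texttt{Finset} infimum. I would first try stating $B(v)$ as \texttt{(dependencies v).toFinset.inf}. If that is awkward, I would fall back to a list fold together with a small lemma characterizing lower bounds of the fold.
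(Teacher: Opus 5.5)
Your proof is correct. It uses the same ingredients as the paper's: unfold Definition~\ref{def:claimstrength} at $v$; use \texttt{inf\_le\_left} for (i); use \texttt{inf\_le\_right} followed by the finite-infimum lower bound for (ii); use the universal property of the meet for (iii); and use $\bigwedge_{\emptyset} = \top$ for (iv).

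Where you differ is the scaffolding around these steps. The paper wraps everything in a well-founded induction on $\prec$ and assumes (i)--(iii) at each $u \in \textit{deps}(v)$, but its inductive step never invokes that hypothesis. You correctly observe that (i), (ii), (iv) and the local form of (iii) follow from the single unfolding equation alone. Lemma~\ref{lem:deporderwf} is needed only to make that equation available for a well-founded definition.

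Your global reading of (iii) is strictly stronger than what the paper proves. It says that $\textit{claimStrength}(\textit{score},\cdot)$ is the pointwise greatest assignment satisfying the upper-bound and dependency-bound constraints at every node. The paper's (iii) is only the local greatest-lower-bound statement at a fixed node. The global version is exactly where well-founded induction does real work: the hypothesis at direct dependencies, reached via \texttt{Relation.TransGen.single}, is what lets you pass from $f(v) \le f(u)$ to $f(v) \le \textit{claimStrength}(\textit{score},u)$.

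So your route is leaner for the items as stated and also delivers a more meaningful optimality characterization. The paper's induction framing would only pay off for such a global statement, which it does not actually prove.
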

\begin{proof}
By well-founded induction on the relation $\prec$ (Lemma~\ref{lem:deporderwf}).

\textit{Base case} ($\textit{deps}(v) = \emptyset$): $\textit{claimStrength}(\textit{score}, v) = \textit{score}(v) \wedge \bigwedge_\emptyset = \textit{score}(v) \wedge \top = \textit{score}(v)$, so (i) holds with equality, (ii) is vacuously true, (iii) is immediate, and (iv) holds by definition.

\textit{Inductive case} ($\textit{deps}(v) \neq \emptyset$; the $\textit{deps}(v) = \emptyset$ subcase of the induction step reduces to the base case above, under which (ii) is vacuously true): Assume (i)--(iii) hold for all $u \in \textit{deps}(v)$. By Definition~\ref{def:claimstrength},
\[
\textit{claimStrength}(\textit{score}, v) = \textit{score}(v) \;\wedge\; \bigwedge_{u \in \textit{deps}(v)} \textit{claimStrength}(\textit{score}, u).
\]
Property~(i) follows from \texttt{inf\_le\_left}: $a \wedge b \leq a$.
Property~(ii) follows in two steps (valid because $\textit{deps}(v) \neq \emptyset$, so the folded meet is non-trivial): \texttt{inf\_le\_right} ($a \wedge b \leq b$) extracts the $\bigwedge$ component, and then the finite infimum property ($\bigwedge_S \leq x$ for each $x \in S$) extracts the specific dependency $u$ from the folded meet over $\textit{deps}(v)$.
Property~(iii) follows from the universal property of infimum on the complete lattice~$\mathbb{L}$ (i.e., if $x \leq a$ and $x \leq b_i$ for all $i$, then $x \leq a \wedge \bigwedge_i b_i$): $\textit{claimStrength}$ computes the greatest lower bound of $\{\textit{score}(v)\} \cup \{\textit{claimStrength}(\textit{score}, u) \mid u \in \textit{deps}(v)\}$, which is the largest element below all of them.
\end{proof}

\begin{theorem}[Propagation Monotonicity]
\label{thm:monotone}
Assume $G$ is acyclic (\texttt{dag\_acyclic}). If $s_1(v) \leq s_2(v)$ for all $v \in V$, then $\textit{claimStrength}(s_1, v) \leq \textit{claimStrength}(s_2, v)$ for all $v \in V$.
\end{theorem}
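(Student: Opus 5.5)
The plan is a well-founded induction on the dependency order $\prec$ of Lemma~\ref{lem:deporderwf}, mirroring the proof of Theorem~\ref{thm:weakest}. We fix $s_1 \leq s_2$ pointwise. The inductive claim is that $\textit{claimStrength}(s_1, v) \leq \textit{claimStrength}(s_2, v)$ holds at $v$, assuming it already holds at every $u \prec v$. Every $u \in \textit{deps}(v)$ satisfies $u \prec v$, since $\texttt{depEdge}\; v\; u$ gives a one-step \texttt{TransGen} path. So the hypothesis is available for all direct dependencies, which is all the recursion of Definition~\ref{def:claimstrength} refers to.

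At $v$ we unfold Definition~\ref{def:claimstrength} for both score functions. This gives
\[
\textit{claimStrength}(s_i, v) = s_i(v) \wedge \bigwedge_{u \in \textit{deps}(v)} \textit{claimStrength}(s_i, u), \quad i = 1,2.
\]
Monotonicity of the meet in each argument (Mathlib's \texttt{inf\_le\_inf}) reduces the goal to two inequalities. The first is $s_1(v) \leq s_2(v)$, which is the hypothesis. The second is $\bigwedge_{u} \textit{claimStrength}(s_1,u) \leq \bigwedge_{u} \textit{claimStrength}(s_2,u)$, which follows from monotonicity of finite infima under a pointwise inequality (\texttt{Finset.inf\_mono\_fun}, or the universal property of the infimum) together with the inductive hypothesis at each $u \in \textit{deps}(v)$.

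The case $\textit{deps}(v) = \emptyset$ needs no separate treatment. Both empty meets equal $\top$, and the step collapses to $s_1(v) \leq s_2(v)$, consistent with Theorem~\ref{thm:weakest}(iv).

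An alternative route avoids unfolding. By Theorem~\ref{thm:weakest}(i)--(ii), $x := \textit{claimStrength}(s_1,v)$ satisfies $x \leq s_1(v) \leq s_2(v)$. By the inductive hypothesis, $x \leq \textit{claimStrength}(s_1,u) \leq \textit{claimStrength}(s_2,u)$ for each dependency $u$. Then $x$ satisfies the lower-bound conditions for $s_2$ at $v$, and the optimality clause (iii) gives $x \leq \textit{claimStrength}(s_2,v)$. I would present the direct unfolding and mention this as a cross-check.

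The main obstacle I expect is the recursion mechanics rather than the lattice algebra. The hard part is obtaining the unfolding equation for a function defined by well-founded recursion, which in Lean means an equation lemma rather than \texttt{rfl}. A second difficulty is matching the list-based \texttt{dependencies} with the finite infimum (a fold over a list versus \texttt{Finset.inf}, with possible duplicates). I would first try to state the inductive step through the unfolding equation and prove meet-monotonicity of the list fold by a small auxiliary induction on the list. Duplicates are harmless because the meet is idempotent.
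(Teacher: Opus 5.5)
Your proposal is correct and follows essentially the same route as the paper. Both arguments run well-founded induction on $\prec$ (Lemma~\ref{lem:deporderwf}), unfold Definition~\ref{def:claimstrength}, and combine the hypothesis $s_1(v) \leq s_2(v)$ and the inductive hypothesis through monotonicity of $\wedge$ and of the finite infimum. The only differences are cosmetic: you absorb the $\textit{deps}(v)=\emptyset$ case via $\bigwedge_\emptyset = \top$ instead of treating it as a separate base case, and you add an optimality-based cross-check via Theorem~\ref{thm:weakest}(iii), which is also sound.
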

\begin{proof}
By well-founded induction on the relation $\prec$ (Lemma~\ref{lem:deporderwf}).

\textit{Base case} ($\textit{deps}(v) = \emptyset$): $\textit{claimStrength}(s_i, v) = s_i(v)$, so $\textit{claimStrength}(s_1, v) = s_1(v) \leq s_2(v) = \textit{claimStrength}(s_2, v)$.

\textit{Inductive case}: Assume the result holds for all $u \in \textit{deps}(v)$. Then $s_1(v) \leq s_2(v)$ by hypothesis, and $\textit{claimStrength}(s_1, u) \leq \textit{claimStrength}(s_2, u)$ for all $u \in \textit{deps}(v)$ by the inductive hypothesis. By monotonicity of $\wedge$ on $\mathbb{L}$ (i.e., $a_1 \leq a_2 \wedge b_1 \leq b_2 \implies a_1 \wedge b_1 \leq a_2 \wedge b_2$) and of $\bigwedge$ (i.e., $f(u) \leq g(u)$ for all $u$ implies $\bigwedge_u f(u) \leq \bigwedge_u g(u)$), $\textit{claimStrength}(s_1, v) \leq \textit{claimStrength}(s_2, v)$.
\end{proof}

\noindent\textbf{Relationship between propagation models.} Definition~\ref{def:effcov} defines a threshold-based propagation: $\textit{eff}(v) = \beta(v)$ if all dependencies meet threshold $\theta$, else $0$. Definition~\ref{def:claimstrength} defines a meet-based propagation: $\textit{claimStrength}$ computes the continuous minimum along dependency paths. These are distinct models producing different values in general. The threshold-based model (implemented in Algorithms~1, 2, 5, and 6) is used for coverage computation and proof certificates, providing binary pass/fail semantics appropriate for legal determinations. The meet-based model provides a complementary continuous sensitivity analysis, showing how bottleneck scores propagate through the DAG. The proof certificates generated by this system certify properties of the threshold-based model.

\subsection{Proof Certificate Structure}

\begin{figure}[H]
\centering
\includegraphics[width=\textwidth]{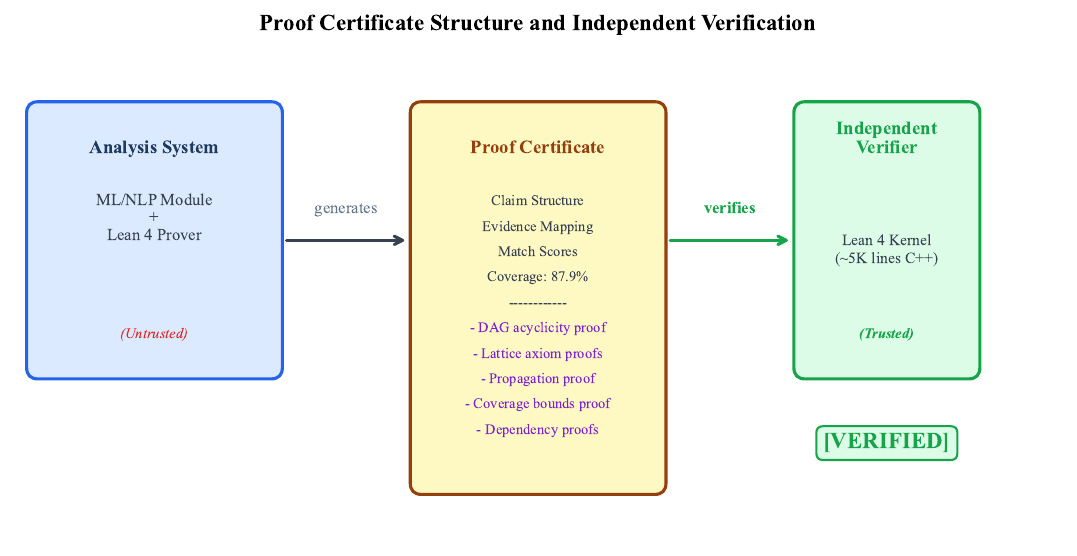}
\caption{Proof certificate generation and independent verification. The analysis system (untrusted) generates results and proofs. An independent verifier checks proofs using only the Lean~4 kernel (trusted, ${\sim}5$K lines of C++).}
\label{fig:certificate}
\end{figure}

Each analysis in the prototype generates a proof certificate intended to be validated by independent kernel type-checking and a \texttt{sorry}-free axiom audit (Definition~\ref{def:validcert}). For Algorithm~\ref{alg:dagcov-given}, the generator is a compiled, machine-verified Lean~4 function whose soundness is established in Proposition~\ref{prop:gensound} (Section~\ref{sec:closedpath}). For the remaining algorithms (2--6), the generators are prototype components whose certificates are ``machine-checkable'' in the sense that the kernel deterministically accepts or rejects them, not in the sense that the generators themselves are formally specified. The \texttt{ProofCertificate} record below bundles the input data (the claim DAG, evidence set, match scores, and coverage score) together with proof fields whose intended propositions are described after the record declaration:

\smallskip
\noindent\colorbox{orange!20}{\small\textsc{Lean 4: illustrative, not compiled}}\nopagebreak

\begin{lstlisting}
structure ProofCertificate where
  claim      : ClaimStructure
  evidence   : EvidenceSet
  scores     : ClaimNode -> MatchStrength
  coverage   : Rat
  -- Formal proofs
  p_acyclic  : proof_dag_acyclic claim
  p_lattice  : proof_lattice_axioms scores
  p_propag   : proof_propagation_correct claim scores
  p_bounded  : 0 <= coverage /\ coverage <= 100
  p_deps     : proof_dependencies_enforced claim scores
  -- Additional fields for specific use cases:
  -- construction, match_types, construction_consistent
\end{lstlisting}

\noindent\textbf{Remark (Proof generation and field types).} The \texttt{ProofCertificate} structure above specifies \textit{what} a certificate must contain. The functions that \textit{construct} certificates are described schematically in the algorithms:
\begin{itemize}[leftmargin=2em, topsep=2pt, itemsep=1pt]
\item \texttt{generateCertificate} (Algorithms~\ref{alg:dagcov} and~\ref{alg:dagcov-given}; compiled, Section~\ref{sec:closedpath});
\item \texttt{ProveNoMatch} and \texttt{ProveDependencyBlock} (Algorithm~\ref{alg:fto});
\item \texttt{ProveConstructionResults} (Algorithm~\ref{alg:construction});
\item \texttt{ProveConsistency} (Algorithm~\ref{alg:consistency});
\item \texttt{ProveDOE} (Algorithm~\ref{alg:doe});
\item \texttt{ProveFixedPoint} (Algorithm~\ref{alg:kleene}).
\end{itemize}
With the exception of \texttt{generateCertificate} for Algorithms~\ref{alg:dagcov} and~\ref{alg:dagcov-given} (Section~\ref{sec:closedpath}), their internal construction is designed for the prototype described in Section~\ref{sec:implementation} but is not formally specified in this paper.

The four abstract proof field types encode the following intended propositions:
\begin{itemize}[leftmargin=2em, topsep=2pt, itemsep=1pt]
\item \texttt{proof\_dag\_acyclic}: the claim DAG contains no cycles (cf.\ \texttt{dag\_acyclic}, Section~\ref{sec:framework}).
\item \texttt{proof\_lattice\_axioms}: the \texttt{CompleteLattice} typeclass instance for \texttt{MatchStrength} is valid (cf.\ Section~\ref{sec:framework} and Appendix~\ref{app:lean}, Section~A.1).
\item \texttt{proof\_propagation\_correct}: effective scores respect dependency constraints, a node with an unsatisfied transitive dependency has $\textit{eff}(v) = 0$ (cf.\ Theorem~\ref{thm:algcorrect}(ii)--(iii)).
\item \texttt{proof\_dependencies\_enforced}: all dependency relationships in the DAG are transitively enforced in the propagated scores (cf.\ Theorem~\ref{thm:algcorrect}(ii)).
\end{itemize}
The correctness theorems (Theorems~\ref{thm:algcorrect}--\ref{thm:convergence}) establish that the computed values satisfy the mathematical properties these fields certify. However, with the exception of \texttt{generateCertificate} for Algorithms~\ref{alg:dagcov} and~\ref{alg:dagcov-given} (Section~\ref{sec:closedpath}, Proposition~\ref{prop:gensound}), the formal proof that the remaining generation functions produce \texttt{sorry}-free certificates satisfying Definition~\ref{def:validcert} for all valid inputs is not given in this paper.

\medskip
\noindent\textbf{Mapping from the abstract schema to the compiled specialization.} The abstract \texttt{ProofCertificate} record above uses four specification-level proof-field names whose intended propositions are described informally; the compiled specialization for Algorithm~\ref{alg:dagcov-given} (Section~\ref{sec:closedpath}, Appendix~\ref{app:lean}) instantiates them with five concrete Lean~4 propositions, as follows:
\begin{center}
{\small
\begin{tabular}{@{}p{4.9cm}p{8cm}@{}}
\toprule
\textbf{Abstract field (this section)} & \textbf{Compiled field (\S\ref{sec:closedpath}/App.~\ref{app:lean})} \\
\midrule
\texttt{proof\_dag\_acyclic} & \texttt{p\_acyclic}: $\forall v,\; \neg\, \texttt{Relation.TransGen}\ \texttt{depEdge}\ v\ v$ \\
\texttt{proof\_lattice\_axioms} & \texttt{p\_lattice}: $\forall a\,b\,c,\; a \leq b \to b \leq c \to a \leq c$ (transitivity witness on \texttt{DMatchStrength}; the full \texttt{CompleteLattice} instance is supplied by Mathlib and used implicitly when the lattice appears in downstream proofs) \\
\texttt{proof\_propagation\_correct} & \texttt{p\_propag}: dependency-propagation correctness of \texttt{computeEff} (any node with a below-threshold dependency is zeroed) \\
(implicit in the abstract schema) & \texttt{p\_bounded}: $\texttt{weightedCoverage}(\beta,\theta) \in [0, 100]$ (compiled bound, \texttt{coverage\_in\_range}) \\
\texttt{proof\_dependencies\_enforced} & \texttt{p\_deps}: $\texttt{coverage} = \texttt{weightedCoverage}(\beta,\theta)$ (definitional equality; via the definition of \texttt{weightedCoverage} this is equivalent to the Definition~\ref{def:depscov} formula over propagated $\textit{eff}$, machine-verified by unfolding) \\
\bottomrule
\end{tabular}%
}
\end{center}
\noindent The compiled specialization thus \emph{narrows} the abstract \texttt{proof\_lattice\_axioms} to the specific transitivity witness that downstream proofs actually consume, and \emph{refines} \texttt{proof\_dependencies\_enforced} from ``dependencies are enforced in the coverage computation'' (informal) to the definitional coverage-equality that, combined with the definition of \texttt{weightedCoverage}, records the exact numerical equality $\texttt{coverage} = W_{\text{cov}}$. An explicit \texttt{p\_bounded} field is added; the abstract schema folded this into the intended content of \texttt{proof\_dependencies\_enforced}. This gap is mitigated by the trust model: the generation functions are part of the untrusted Analysis System (Section~\ref{sec:framework} below), and the Lean~4 kernel independently type-checks every certificate before acceptance. A faulty or incomplete generation function therefore produces certificates that are \textit{rejected} by the Independent Verifier, not certificates that are silently accepted. Providing explicit Lean~4 proposition definitions for the remaining abstract proof fields and proving that the remaining generation functions produce valid certificates is identified as future work (Section~\ref{sec:discussion}).

\subsubsection{Why the Analysis System, Including the Lean~4 Prover, Is Untrusted}

Figure~\ref{fig:certificate} labels the entire Analysis System, comprising both the ML/NLP Module and the Lean~4 Prover, as \textit{Untrusted}. This labeling may appear paradoxical: if the Lean~4 Prover is a formal verification tool, why is it not trusted? The answer lies in the de Bruijn criterion~\cite{barendregt2005challenge} and in the architectural distinction between proof \textit{generation} and proof \textit{checking} that the Lean~4 ecosystem explicitly enforces.

\textbf{What the Lean~4 Prover does.} The Lean~4 Prover in the Analysis System is the proof \textit{generation} component, distinct from the Lean~4 Kernel (the proof \textit{checking} component). Given the ML-computed match scores and the encoded claim DAG, the Lean~4 Prover: (1)~applies proof tactics (\texttt{decide}, \texttt{omega}, \texttt{norm\_num}, etc.) to search for and construct formal proof terms certifying the required properties; (2)~calls into the Mathlib library (over 210,000 formalized theorems) for reusable results; (3)~assembles the \texttt{ProofCertificate} record by populating each proof field with the constructed terms; and (4)~bridges the ML layer and formal layer by encoding ML-computed match scores as discretized \texttt{MatchStrength} values.

\textbf{Why the Lean~4 Prover is untrusted.} Despite being a formal tool, the Lean~4 Prover is treated as untrusted for three independent reasons. First, the proof generation system is large and complex: the Lean~4 tactic framework, the elaborator, and the Mathlib library collectively comprise millions of lines of code, any of which could contain bugs that cause the prover to generate an incorrect proof term. Second, the Lean~4 Prover receives ML-computed match scores as inputs that lie below the trust boundary; any computation depending on these inputs inherits that untrustworthiness. Third, the proof generation process depends on proof scripts authored by the system developers, with no structural guarantee of correctness or completeness. Treating the entire Analysis System as untrusted ensures that all such errors, whether accidental or deliberate, are caught by the independent kernel verification step.

\textbf{Why the Lean~4 Kernel is trusted.} In contrast, the Lean~4 Kernel is trusted for four reasons: (1)~it is small (approximately 5,000 lines of C++), feasible to audit; (2)~it performs exactly one operation: given a proof term $t$ and a type $T$, it determines whether $\vdash t : T$ in the Calculus of Inductive Constructions, a well-defined deterministic procedure with no heuristics; (3)~it is stateless and deterministic; and (4)~it has been independently self-verified by the Lean4Lean project~\cite{carneiro2024lean4lean}. The entire system's trustworthiness reduces to trusting this kernel, a reduction in trusted computing base of several orders of magnitude.

\subsubsection{The \texttt{sorry}-Free Verification Requirement}

A critical soundness requirement for the proof certificate mechanism is that every proof certificate must be verified to be free of \texttt{sorry} and any axiom outside the trusted axiom set before it is accepted as valid.

\textbf{What \texttt{sorry} is.} Lean~4 provides a built-in mechanism called \texttt{sorry} that allows any proposition to be asserted as proven without supplying a proof term. At the axiom level, \texttt{sorry} is realized by a primitive constant \texttt{sorryAx} that inhabits any type unconditionally. Critically, proof terms containing \texttt{sorry} type-check in the Lean~4 Kernel without error, because \texttt{sorry} is declared as an axiom at the foundational level and the kernel does not distinguish between axioms that are mathematically justified and axioms that are placeholders for missing proofs.

\textbf{The soundness hole.} Because the Lean~4 Kernel accepts proof terms containing \texttt{sorry}, a malicious or buggy proof generator could trivially produce a \texttt{ProofCertificate} for any coverage score by discharging every proof obligation with \texttt{sorry}:

\smallskip
\noindent\colorbox{orange!20}{\small\textsc{Lean 4: illustrative, not compiled (counter example)}}\nopagebreak

\begin{lstlisting}
-- A fraudulent certificate: type-checks in the kernel
-- but contains no mathematical content
def fraudulentCertificate : ProofCertificate := {
  claim    := encodedClaim,
  evidence := encodedEvidence,
  scores   := someScores,
  coverage := 100,       -- false: asserts 100% with no basis
  p_acyclic := sorry,    -- no acyclicity proof provided
  p_lattice := sorry,    -- no lattice axiom proof provided
  p_propag  := sorry,    -- no propagation proof provided
  p_bounded := sorry,    -- no coverage bounds proof provided
  p_deps    := sorry     -- no dependency enforcement provided
}
\end{lstlisting}

This certificate would type-check and produce a \textsc{Verified} output, despite containing no mathematical content. Without additionally auditing its axiom dependencies, the entire proof certificate mechanism would be rendered meaningless.

\begin{definition}[Valid Proof Certificate]
\label{def:validcert}
A proof certificate $\Pi$ is accepted as valid by the Independent Verifier if and only if:
\begin{enumerate}[label=(\roman*), leftmargin=2.5em, topsep=2pt, itemsep=1pt]
\item \textit{Type-checking:} every proof field in $\Pi$ type-checks in the Lean~4 Kernel; and
\item \textit{\texttt{sorry}-free:} the complete set of axioms on which $\Pi$ depends is a subset of the trusted axiom set $\Omega = \{\texttt{propext}, \texttt{Classical.choice}, \texttt{Quot.sound}\}$. In implementation, this is checked via Lean~4's \texttt{\#print axioms} command, which reports the foundational axiom \texttt{sorryAx} (not the tactic name \texttt{sorry}) when placeholder proofs are present; the audit rejects any certificate whose axiom list contains \texttt{sorryAx} or any other axiom outside $\Omega$.
\end{enumerate}
A certificate satisfying (i) but not (ii) is rejected.
\end{definition}

\textbf{The trusted axiom set.} The three axioms in $\Omega$ are the complete foundational axioms of Lean~4's logical framework: \texttt{propext} (propositional extensionality: logically equivalent propositions are equal), \texttt{Classical.choice} (given a proof of nonemptiness, a canonical element can be chosen, enabling classical reasoning), and \texttt{Quot.sound} (quotient soundness: related elements have equal images under the quotient map). These are universally accepted in the formal verification community and introduce no soundness issues. Any axiom outside $\Omega$, including \texttt{sorry}, \texttt{sorryAx}, or the axiom introduced by \texttt{native\_decide}, must cause condition~(ii) to fail and the certificate to be rejected.

\textbf{A note on \texttt{native\_decide}.} The Lean~4 tactic \texttt{native\_decide} compiles a decision procedure to native machine code and evaluates it outside the kernel, trusting the compiler and runtime. When used, \texttt{\#print axioms} reports an additional axiom, \texttt{Lean.ofReduceBool}, not in $\Omega$. Proof scripts in the Analysis System must therefore use \texttt{decide} rather than \texttt{native\_decide} for all decidable propositions, and the Independent Verifier must reject any certificate whose axiom set includes \texttt{Lean.ofReduceBool} (or any other axiom outside $\Omega$).

\textbf{Enforcement.} The \texttt{sorry}-free requirement is enforced using Lean~4's \texttt{\#print axioms} command, which enumerates all axioms on which a given definition transitively depends:

\smallskip
\noindent\colorbox{orange!20}{\small\textsc{Lean 4: illustrative, not compiled}}\nopagebreak

\begin{lstlisting}
-- Specification-level pseudocode; the actual verification
-- uses Lean 4's #print axioms command and kernel API.
-- See text below for implementation details.
def trustedAxiomSet : Finset Name := {
  `propext,
  `Classical.choice,
  `Quot.sound
}

-- Full two-step verification per Definition 5.5 (Valid Proof Certificate)
def verifyProofCertificate
    (cert : ProofCertificate) : VerificationResult :=
  let axDeps := collectTransitiveAxioms cert
  if typeChecks cert && axDeps.all
    (fun ax => trustedAxiomSet.contains ax)
  then .verified
  else .rejected
\end{lstlisting}

The \texttt{collectTransitiveAxioms} operation traverses the full dependency graph of the certificate's proof terms, collecting every axiom at any depth, ensuring that \texttt{sorry} cannot be hidden inside a helper lemma or intermediate definition.

\textbf{Relationship to the trust model.} The \texttt{sorry}-free requirement is a direct and necessary consequence of the trust boundary. Because the Lean~4 Prover is untrusted, the Independent Verifier cannot assume that proof scripts are written correctly or completely. The \texttt{sorry}-free requirement ensures that the kernel's \textsc{Verified} output reflects a genuine mathematical theorem, with every proof obligation discharged by a real proof term grounded in $\Omega$, rather than an artifact of the proof generation system bypassing its own obligations.

\medskip
\noindent The verification status of every formal result in this paper is summarized in Table~\ref{tab:proofstatus} (Section~\ref{sec:contributions}).

\subsubsection{A Fully Specified Coverage Certificate for Algorithm~\ref{alg:dagcov-given}}
\label{sec:closedpath}

The abstract \texttt{ProofCertificate} record above is retained as a common schema across all use cases. To close one end-to-end formal path without changing the verification status of the remaining algorithms, we now present a compiled specialization for Algorithm~\ref{alg:dagcov-given}, the coverage core that lies entirely above the trust boundary once the bounded score function $\beta$ is fixed.

The compiled Lean~4 development (Appendix~\ref{app:closedpath}) defines:
\begin{enumerate}[leftmargin=2em, topsep=2pt, itemsep=1pt]
\item \texttt{computeEff}: a propagation function that computes effective scores by unrolling the DAG in topological order, with termination justified by \texttt{topoDepth} (a depth function proven strictly decreasing along dependency edges);
\item \texttt{propag\_proof}: a machine-verified theorem that \texttt{computeEff} enforces dependency constraints (if any dependency of $v$ falls below $\theta$, then $\texttt{computeEff}(v) = 0$);
\item \texttt{coverage\_in\_range}: a machine-verified theorem that the weighted coverage lies in $[0, 100]$ for any valid score function;
\item \texttt{ProofCertificate}: a compiled certificate structure whose five proof fields (acyclicity, lattice transitivity, propagation correctness, coverage bounds, and coverage-definition equality) are concrete Lean~4 propositions, not abstract specification-level names;
\item \texttt{generateCertificate}: a compiled generator function that, given any bounded score function $\beta : V \to [0,1]$ and valid threshold $\theta$, constructs a \texttt{ProofCertificate} with all five fields discharged by real proof terms grounded in $\Omega$, no \texttt{sorry}.
\end{enumerate}

\begin{proposition}[Coverage-core generator soundness for Algorithm~\ref{alg:dagcov-given}]
\label{prop:gensound}
Let $\beta : V \to [0,1]$ be a score function, let $h_{\text{valid}}$ be an $\Omega$-grounded proof of $\texttt{ScoreValid}\ \beta$ (that is, a proof term depending only on axioms in $\Omega = \{\texttt{propext}, \texttt{Classical.choice}, \texttt{Quot.sound}\}$), and let $\theta$ be a valid threshold in the range required by the generator. Then $\texttt{generateCertificate}(\beta, h_{\text{valid}}, \theta)$ returns a \texttt{ProofCertificate} accepted under Definition~\ref{def:validcert}:
\begin{enumerate}[label=(\roman*), leftmargin=2.5em, topsep=2pt, itemsep=1pt]
\item \textit{Type-checking:} every proof field in the returned certificate type-checks in the Lean~4 kernel (by construction: the generator populates each field with a concrete proof term);
\item \textit{\texttt{sorry}-free:} the generator function \texttt{generateCertificate} itself depends only on $\Omega$ (verified by \texttt{\#print axioms generateCertificate}; see Appendix~\ref{app:axiomaudit}), and because $h_{\text{valid}}$ is by hypothesis $\Omega$-grounded, the returned certificate's transitive axiom set remains a subset of $\Omega$.
\end{enumerate}
The $\Omega$-grounded hypothesis on $h_{\text{valid}}$ is load-bearing: a caller who supplies a \texttt{sorry}-based validity proof will see \texttt{sorryAx} in the axiom audit of the resulting certificate, and the certificate will be correctly rejected under Definition~\ref{def:validcert}(ii).
\end{proposition}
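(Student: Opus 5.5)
The proof is essentially a meta-level argument about Lean~4 terms: part (i) comes from the typing of \texttt{generateCertificate}, and part (ii) from how transitive axiom dependencies compose under application.

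\textbf{Part (i).} Since \texttt{generateCertificate} is a compiled Lean~4 definition with return type \texttt{ProofCertificate}, its body is itself a kernel-checked term. It is a structure literal whose five proof fields are filled as follows:
\begin{itemize}
\item \texttt{p\_acyclic} with \texttt{dag\_acyclic} (discharged by \texttt{decide}, not \texttt{native\_decide});
\item \texttt{p\_lattice} with \texttt{le\_trans} on \texttt{DMatchStrength};
\item \texttt{p\_propag} with \texttt{propag\_proof} applied to $\beta,\theta$;
\item \texttt{p\_bounded} with \texttt{coverage\_in\_range} applied to $\beta$, $h_{\text{valid}}$ and the hypothesis $0\le\theta$;
\item \texttt{p\_deps} with \texttt{rfl}, since the \texttt{coverage} field is set to $\texttt{weightedCoverage}(\beta,\theta)$ literally.
\end{itemize}
By subject reduction and the substitution lemma of CIC, applying a well-typed function to well-typed arguments $\beta,h_{\text{valid}},\theta$ yields a well-typed term. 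Each field of the result therefore has the stated proposition as its type, so (i) holds.

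\textbf{Part (ii).} I would state a small lemma about \texttt{\#print axioms}: the transitive axiom set of an application $f\,a_1\cdots a_n$ (and of its $\beta$/$\delta$/$\iota$-reducts, including structure projections) is contained in $\mathrm{Ax}(f)\cup\bigcup_i \mathrm{Ax}(a_i)$. This follows because axiom collection is a syntactic traversal of constants reachable from the term, and reduction introduces no new constants beyond those of $f$'s body and the arguments. Then:
\begin{itemize}
\item $\mathrm{Ax}(\texttt{generateCertificate})\subseteq\Omega$ is the recorded audit output (Appendix~\ref{app:axiomaudit}).
\item $\mathrm{Ax}(h_{\text{valid}})\subseteq\Omega$ holds by hypothesis.
\item $\beta$ and $\theta$ are data terms that, in the intended use, are concrete discretized literals with empty axiom sets. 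I would state this explicitly as a side condition; the same reasoning covers them if they are merely $\Omega$-grounded.
\end{itemize}
The union is therefore contained in $\Omega$, giving Definition~\ref{def:validcert}(ii). The load-bearing remark follows from the converse inclusion: the certificate's \texttt{p\_bounded} field syntactically contains $h_{\text{valid}}$, so $\texttt{sorryAx}\in\mathrm{Ax}(h_{\text{valid}})$ propagates into the audit.

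\textbf{Main obstacle.} The hard step is the composition lemma, because it concerns Lean's implementation (\texttt{CollectAxioms}) rather than the object logic, and the \texttt{p\_deps} field via \texttt{rfl} must not rely on a definitional unfolding that the kernel only accepts through \texttt{Lean.ofReduceBool}. I would handle it in two ways:
\begin{itemize}
\item Discharge it empirically by instantiating the generator on the case-study $\beta$ (using a fully $\Omega$-grounded $h_{\text{valid}}$ proved by \texttt{decide}) and running \texttt{\#print axioms} on the resulting certificate constant.
\item Argue the general case from the fact that \texttt{CollectAxioms} over-approximates reachable constants, so the syntactic union bound is sound.
\end{itemize}
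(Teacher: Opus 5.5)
Your proposal is correct and follows essentially the same route as the paper. Clause (i) comes by construction from the five field terms (\texttt{dag\_acyclic}, \texttt{le\_trans}, \texttt{propag\_proof}, \texttt{coverage\_in\_range}, \texttt{rfl}); clause (ii) comes from the recorded \texttt{\#print axioms generateCertificate} audit plus the $\Omega$-grounded hypothesis on $h_{\text{valid}}$, backed by the audited case-study certificates, and your composition lemma just makes explicit the paper's remark that any call with $\Omega$-grounded arguments yields an $\Omega$-grounded certificate. One small slip: the compiled generator also takes the proof term $h_{\text{thresh}} : 0 \le \theta$ (which you use in (i)), so its axioms belong in your union bound and your side condition on $\beta,\theta$ should cover it as well---a hypothesis the paper's statement also leaves implicit.
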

\begin{proof}
Clause~(i) is discharged by construction: the generator function (Appendix~\ref{app:closedpath}) populates each proof field with a concrete proof term: \texttt{dag\_acyclic} for acyclicity, \texttt{le\_trans} for lattice transitivity, \texttt{propag\_proof} for propagation correctness, \texttt{coverage\_in\_range} for bounds, and \texttt{rfl} for the coverage-definition equality. The Lean~4 kernel verifies that each term has the declared type. Clause~(ii) is discharged in two parts: first, \texttt{\#print axioms generateCertificate} (Appendix~\ref{app:axiomaudit}) reports only $\Omega$, establishing that the generator's own code introduces no axioms outside $\Omega$; second, the two case-study certificates (\texttt{cert\_I1\_broad} and \texttt{cert\_I2\_narrow}), which supply concrete $\Omega$-grounded \texttt{ScoreValid} proofs, are separately audited and also report only $\Omega$. Any future caller who supplies $\Omega$-grounded validity proofs will obtain the same result; a caller who supplies a \texttt{sorry}-based validity proof will see \texttt{sorryAx} in the axiom audit of the resulting certificate, and the certificate will be correctly rejected under Definition~\ref{def:validcert}(ii).
\end{proof}

\noindent\textit{Scope of the closed path.} Proposition~\ref{prop:gensound} closes the formal gap for Algorithm~\ref{alg:dagcov-given}: once the bounded score function $\beta$ is fixed, the entire downstream computation (propagation, coverage, and certificate generation) is machine-verified and accepted under Definition~\ref{def:validcert}. The ML computation of $\beta$ itself remains below the trust boundary. Algorithm~\ref{alg:dagcov} (the monolithic form) inherits this generator soundness because it is semantically equivalent to computing $\beta(v) = \max_{s \in S} M(v,s)$ and then invoking Algorithm~\ref{alg:dagcov-given}. The remaining generators (Algorithms~2--6) retain their architecturally-mitigated status per Table~\ref{tab:proofstatus}.

\noindent\textit{On the \texttt{noncomputable} marker.} \texttt{generateCertificate}, \texttt{weightedCoverage}, and the case-study instantiations \texttt{cert\_I1\_broad}, \texttt{cert\_I2\_narrow} are declared \texttt{noncomputable}. This reflects their transitive use of \texttt{Classical.choice} (via rational-number order-theoretic lemmas from Mathlib that invoke classical reasoning for \texttt{Linear\-Order} / \texttt{Complete\-Linear\-Order} instances); it does not indicate that the functions lack definitional content. \texttt{noncomputable} prevents Lean from generating executable bytecode for the definitions but leaves their proof-relevant content fully available to the kernel, which is all that Definition~\ref{def:validcert}'s type-checking criterion requires. Because \texttt{Classical.choice} is in $\Omega$, the $\Omega$-grounded axiom-audit criterion of Definition~\ref{def:validcert}(ii) is preserved; this is confirmed by the \texttt{\#print axioms} output in Appendix~\ref{app:axiomaudit}. The prototype (Section~\ref{sec:implementation}) constructs concrete certificates by invoking \texttt{generateCertificate} from a separate driver; the resulting certificate values are serialized and audited by the Independent Verifier as usual.

\medskip
\noindent\textbf{Remark (exactly what the certificate attests to).} Proposition~\ref{prop:gensound} establishes Definition~\ref{def:validcert}'s two validity criteria (type-checking and $\Omega$-grounded axioms) for the returned certificate. The five compiled proof fields separately attest to: (a)~\texttt{p\_acyclic}, acyclicity of the claim DAG; (b)~\texttt{p\_lattice}, transitivity of $\leq$ on \texttt{DMatchStrength}; (c)~\texttt{p\_propag}, dependency-propagation correctness of \texttt{computeEff}, namely that $\texttt{computeEff}(\beta,\theta)(v) = 0$ whenever some $u \in \textit{deps}(v)$ has $\texttt{computeEff}(\beta,\theta)(u) < \theta$; (d)~\texttt{p\_bounded}, the arithmetic bound $\texttt{weightedCoverage}(\beta,\theta) \in [0,100]$ for any bounded $\beta$ (Theorem \texttt{coverage\_in\_range}, Appendix~\ref{app:lean}); and (e)~\texttt{p\_deps}, the definitional equality $\texttt{coverage} = \texttt{weightedCoverage}(\beta,\theta)$. Critically, the compiled \texttt{weightedCoverage} function (Appendix~\ref{app:lean}) is defined directly on the propagated effective scores: $\texttt{weightedCoverage}(\beta,\theta) := \bigl(\sum_v \texttt{claimWeight}(v) \cdot \texttt{computeEff}(\beta,\theta)(v)\bigr) \big/ \bigl(\sum_v \texttt{claimWeight}(v)\bigr) \times 100$. Consequently $\texttt{weightedCoverage}(\beta,\theta)$ is equal to Definition~\ref{def:depscov}'s $W_{\text{cov}}$ by definitional unfolding, and the certificate's \texttt{coverage} field carries the numerical quantity reported by Algorithm~\ref{alg:dagcov-given} with no caller-side propagation precondition. The coverage-field equality $\texttt{coverage} = W_{\text{cov}}$ is therefore machine-verified, not merely argued in prose.

\section{Algorithms}
\label{sec:algorithms}

We present six formal algorithms covering the five IP use cases. Algorithm~1b (WeightedDAGCoverageGivenScores) is a shared helper variant of Algorithm~1, not a distinct use case: it separates DAG propagation from score computation so that Algorithm~3 (construction sensitivity) can reuse the propagation logic on pre-computed scores.

\noindent\textbf{Two propagation models.} This paper defines two distinct propagation models. The \textit{threshold-based} model (Definition~\ref{def:bestmatch}) computes $\textit{eff}(v) = \beta(v)$ if all dependencies meet threshold~$\theta$, else~$0$; it provides binary pass/fail semantics and is used in Algorithms~1, 2, 5, and~6 for coverage computation and proof certificates. The \textit{meet-based} model ($\textit{claimStrength}$, Definition~\ref{def:claimstrength}) computes the continuous minimum along dependency paths and is used for sensitivity analysis (Section~\ref{sec:framework}). These models produce different values in general; their relationship is discussed in detail following Theorem~\ref{thm:monotone}.

\subsection{Algorithm 1: DAG-Based Weighted Coverage}

Algorithm~\ref{alg:dagcov} computes the weighted DAG coverage score $W_{\text{cov}}$ and generates a machine-checkable proof certificate. It proceeds in three phases. First (lines~1--6), it computes match scores between every claim limitation and every evidence segment, retaining the best match $\beta(v)$ for each node; this step constitutes the ML layer and lies below the trust boundary. Second (lines~7--13), it traverses the DAG in topological order, guaranteed well-defined by \texttt{dag\_acyclic}, and propagates dependency constraints: a node whose dependencies fail threshold~$\theta$ receives $\textit{eff}(v) = 0$ regardless of its own $\beta(v)$, enforcing the all-elements rule transitively through the dependency structure (Definition~\ref{def:bestmatch}). This threshold-based zeroing is the key mechanism distinguishing DAG-weighted coverage from a flat weighted average. Third (lines~14--16), it computes $W_{\text{cov}}$ as the weighted average of effective scores using the legally-motivated weights from Table~\ref{tab:weights} and passes the results to the compiled \texttt{generateCertificate} (Section~\ref{sec:closedpath}) to produce the formal certificate~$\Pi$. Theorem~\ref{thm:algcorrect} proves that the algorithm correctly enforces dependency constraints, respects coverage bounds, and is monotone in the input scores.

\begin{algorithm}[H]
\caption{WeightedDAGCoverage \hfill {\small\normalfont\itshape Correctness: Theorem~\ref{thm:algcorrect} (informal sketch; see Table~\ref{tab:proofstatus})}}
\label{alg:dagcov}
\begin{algorithmic}[1]
\Require Claim DAG $G=(V,E,\tau,w)$, Evidence $S=\{s_1\ldots s_m\}$, threshold $\theta$
\Ensure Coverage score $W_{\text{cov}}$, proof certificate $\Pi$
\For{each $v \in V$}
  \For{each $s \in S$}
    \State $M(v,s) \gets \text{MatchScore}(v, s)$ \Comment{Section~\ref{sec:architecture}}
  \EndFor
  \State $\beta(v) \gets \max_{s \in S} M(v,s)$
\EndFor
\State $\textit{levels} \gets \text{TopologicalSort}(G)$
\For{level $\ell = 0$ to $\max\_\text{level}$}
  \For{each $v$ at level $\ell$}
    \State $\textit{deps\_met}(v) \gets \forall u \in \textit{deps}(v): \textit{eff}(u) \geq \theta$ \Comment{transitive enforcement via propagated eff}
    \State $\textit{eff}(v) \gets \begin{cases} \beta(v) & \text{if } \textit{deps\_met}(v) \\ 0 & \text{otherwise}\end{cases}$
  \EndFor
\EndFor
\State $W_{\text{cov}} \gets \frac{\sum_{v} w(\tau(v)) \cdot \textit{eff}(v)}{\sum_{v} w(\tau(v))} \times 100$
\State $\Pi \gets \text{generateCertificate}(\beta, \theta)$ \Comment{compiled generator; \S\ref{sec:closedpath}. Internally re-computes $\textit{eff}$ via \texttt{computeEff} and reports the same $W_{\text{cov}}$.}
\State \Return $(W_{\text{cov}}, \Pi)$
\end{algorithmic}
\end{algorithm}

\begin{theorem}[Correctness of Algorithm~\ref{alg:dagcov}]
\label{thm:algcorrect}
For all valid inputs $(G, S, \theta)$ where $G = (V, E, \tau, w)$ satisfies \texttt{dag\_acyclic}, $S \neq \emptyset$, and $w(\tau(v)) > 0$ for all $v \in V$, Algorithm~\ref{alg:dagcov} computes $W_{\text{cov}}$ satisfying:
\begin{enumerate}[label=(\roman*), leftmargin=2.5em, topsep=2pt, itemsep=1pt]
\item $0 \leq W_{\text{cov}} \leq 100$ \hfill (bounds)
\item $\textit{eff}(v) = 0$ for all $v$ where $\neg\textit{deps\_met}(v)$, i.e., where $\exists u \in \textit{deps}(v)$ such that $\textit{eff}(u) < \theta$ (Definition~\ref{def:bestmatch}) \hfill (dependency enforcement)
\item $\textit{eff}(v) = \beta(v)$ for all $v$ where $\textit{deps\_met}(v)$ \hfill (completeness)
\item $W_{\text{cov}}$ is pointwise monotone: if $\beta_1(v) \leq \beta_2(v)$ for all $v \in V$, then $W_{\text{cov}}(\beta_1) \leq W_{\text{cov}}(\beta_2)$ \hfill (monotonicity)
\end{enumerate}
\end{theorem}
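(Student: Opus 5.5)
The plan is to first identify the quantity $\textit{eff}$ computed by lines~7--13 of Algorithm~\ref{alg:dagcov} with the recursively defined $\textit{eff}$ of Definition~\ref{def:bestmatch}, and then read off (i)--(iv) from that identification. For the identification I would use well-founded induction on the dependency order $\prec$ of Lemma~\ref{lem:deporderwf}. The key fact about \texttt{TopologicalSort}, justified by \texttt{dag\_acyclic}, is that every $u \in \textit{deps}(v)$ lies at a strictly lower level than $v$. So when line~10 is executed for $v$, each $\textit{eff}(u)$ it reads has already been assigned, and it is never overwritten afterwards. By the induction hypothesis those stored values equal the Definition~\ref{def:bestmatch} values. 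Lines~10--11 then reproduce the defining case split verbatim, so the stored $\textit{eff}(v)$ agrees with the definition.

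Parts (ii) and (iii) are then immediate from the case split. If $\neg\textit{deps\_met}(v)$, i.e.\ some $u \in \textit{deps}(v)$ has $\textit{eff}(u) < \theta$, the second branch gives $\textit{eff}(v)=0$. If $\textit{deps\_met}(v)$, the first branch gives $\textit{eff}(v)=\beta(v)$. For the coverage-core path, (ii) is exactly the content of the compiled \texttt{propag\_proof}, and I would cite it there.

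For (i), I would first establish $0 \le \textit{eff}(v) \le 1$: in each branch $\textit{eff}(v)$ is either $0$ or $\beta(v)=\max_{s\in S} M(v,s)$. The maximum exists because $S\neq\emptyset$, and it lies in $[0,1]$ by the boundedness of $M$ (Definition~\ref{def:matchscore}), which is the only property of $M$ used. Since all weights are positive and the denominator is positive, $W_{\text{cov}}/100$ is a convex combination of values in $[0,1]$, hence lies in $[0,1]$. This is precisely the compiled \texttt{coverage\_in\_range} applied to $\beta$. Proposition~\ref{prop:gensound}, together with the definitional equality $\texttt{coverage}=W_{\text{cov}}$, transfers the bound to Algorithm~\ref{alg:dagcov}.

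For (iv), I would prove the stronger pointwise statement $\textit{eff}_{\beta_1}(v) \le \textit{eff}_{\beta_2}(v)$ for all $v$, again by induction on $\prec$. By the induction hypothesis, $\textit{eff}_{\beta_1}(u) \le \textit{eff}_{\beta_2}(u)$ for every $u\in\textit{deps}(v)$, so $\textit{deps\_met}_1(v)$ implies $\textit{deps\_met}_2(v)$, because $\theta \le \textit{eff}_1(u) \le \textit{eff}_2(u)$. Now split on $\textit{deps\_met}_1(v)$:
\begin{itemize}
\item if it holds, then $\textit{eff}_1(v)=\beta_1(v) \le \beta_2(v) = \textit{eff}_2(v)$;
\item if it fails, then $\textit{eff}_1(v)=0 \le \textit{eff}_2(v)$ by nonnegativity.
\end{itemize}
Multiplying by the positive weights $w(\tau(v))$ and summing preserves the inequality, and dividing by the fixed positive total weight then gives $W_{\text{cov}}(\beta_1)\le W_{\text{cov}}(\beta_2)$. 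The threshold zeroing is not a continuous operation, so the point to check is that it is nevertheless monotone; the case split above shows it is, because raising scores can only turn unmet dependencies into met ones, never the reverse.

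I expect the main obstacle to be the first step: making rigorous that the level-by-level imperative loop computes the recursive definition. This needs the invariant ``all dependencies of a node at level $\ell$ sit at levels $<\ell$''. I would discharge it by taking the level to be the longest dependency-path depth, which is the analogue of the compiled \texttt{topoDepth} that is proven strictly decreasing along edges, and then inducting on the level.
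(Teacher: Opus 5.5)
Your proposal is correct and follows essentially the paper's route: (ii)--(iii) come from the line-11 case split once topological order makes the dependency values final, (i) from $\textit{eff}(v)\in[0,1]$ and the positive-weight average, and (iv) by induction on the topological order using that $\textit{deps\_met}$ under $\beta_1$ implies $\textit{deps\_met}$ under $\beta_2$. Your explicit identification of the level-by-level loop with Definition~\ref{def:bestmatch} makes rigorous what the paper asserts in one line, and your two-case split in (iv) compresses the paper's three cases. One caveat: the compiled \texttt{coverage\_in\_range} and \texttt{propag\_proof} are stated only for the 15-node case-study \texttt{ClaimNode} type, so for a general $G$ it is your arithmetic argument, not those citations, that carries the proof.
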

\begin{proof}
We prove (ii), (iii), (i), (iv) in that order; (i) uses (ii) and (iii), so this ordering avoids forward references.

\textit{(ii) Dependency enforcement:} The topological sort (line~7) ensures all predecessors of $v$ are processed before $v$, so $\textit{eff}(u)$ values are final when $\textit{deps\_met}(v) \leftarrow \forall u \in \textit{deps}(v): \textit{eff}(u) \geq \theta$ is evaluated (line~10). By line~11, $\textit{eff}(v) \leftarrow 0$ whenever $\neg\textit{deps\_met}(v)$.

\textit{(iii) Completeness:} By line~11, $\textit{eff}(v) \leftarrow \beta(v)$ when $\textit{deps\_met}(v)$. Since $\beta(v) = \max_{s \in S} M(v,s) \in [0,1]$ by Definition~\ref{def:matchscore}, $\textit{eff}(v) = \beta(v)$ unconditionally whenever all dependencies are met, including the case $\beta(v) = 0$.

\textit{(i) Bounds:} We first show $\textit{eff}(v) \in [0,1]$ by case analysis on $\textit{deps\_met}(v)$: when $\textit{deps\_met}(v)$ holds, (iii) gives $\textit{eff}(v) = \beta(v)$, and $\beta(v) \in [0,1]$ by Definition~\ref{def:matchscore} (boundedness of $M$); when $\textit{deps\_met}(v)$ fails, (ii) gives $\textit{eff}(v) = 0 \in [0,1]$. Hence $0 \leq \textit{eff}(v) \leq 1$ for all $v$. \textit{Lower bound:} Since $w(\tau(v)) > 0$ by assumption and $\textit{eff}(v) \geq 0$, $\sum_v w(\tau(v)) \cdot \textit{eff}(v) \geq 0$, so $W_{\text{cov}} \geq 0$. \textit{Upper bound:} Since $\textit{eff}(v) \leq 1$, $\sum_v w(\tau(v)) \cdot \textit{eff}(v) \leq \sum_v w(\tau(v)) \cdot 1 = \sum_v w(\tau(v))$. Dividing: $W_{\text{cov}} = \frac{\sum_v w(\tau(v)) \cdot \textit{eff}(v)}{\sum_v w(\tau(v))} \times 100 \leq 100$.

\textit{(iv) Monotonicity:} Suppose $\beta_1(v) \leq \beta_2(v)$ for all $v \in V$. We prove $\textit{eff}_2(v) \geq \textit{eff}_1(v)$ for all $v$ by induction on the topological order. Consider three cases for each $v$: (a)~$\textit{deps\_met}$ fails under both $\beta_1$ and $\beta_2$: $\textit{eff}_1(v) = \textit{eff}_2(v) = 0$. (b)~$\textit{deps\_met}$ holds under both: $\textit{eff}_2(v) = \beta_2(v) \geq \beta_1(v) = \textit{eff}_1(v)$. (c)~$\textit{deps\_met}$ fails under $\beta_1$ but holds under $\beta_2$: $\textit{eff}_2(v) = \beta_2(v) \geq 0 = \textit{eff}_1(v)$. The reverse case ($\textit{deps\_met}$ holds under $\beta_1$ but fails under $\beta_2$) cannot occur: by the inductive hypothesis, $\textit{eff}_2(u) \geq \textit{eff}_1(u)$ for all predecessors $u \in \textit{deps}(v)$, so if $\textit{eff}_1(u) \geq \theta$ (as required by $\textit{deps\_met}$ under $\beta_1$), then $\textit{eff}_2(u) \geq \textit{eff}_1(u) \geq \theta$, and $\textit{deps\_met}$ holds under $\beta_2$ as well. In all cases $\textit{eff}_2(v) \geq \textit{eff}_1(v)$. Since $w(\tau(v)) > 0$, the weighted sum is non-decreasing, so $W_{\text{cov}}(\beta_2) \geq W_{\text{cov}}(\beta_1)$.
\end{proof}

\noindent\textit{Note on verification status.} Theorem~\ref{thm:algcorrect} remains an informal proof sketch per Table~\ref{tab:proofstatus}. The concrete certificate-generation path for Algorithm~\ref{alg:dagcov-given} is closed separately in Section~\ref{sec:closedpath} (Proposition~\ref{prop:gensound}), which provides a narrower but machine-verified guarantee covering propagation correctness, coverage bounds, and generator soundness.

\noindent\textbf{Complexity.} $O(n \cdot m \cdot d)$ for match computation (where $d$ is the embedding dimension) plus $O(n + |E|)$ for DAG propagation.

\begin{algorithm}[H]
\renewcommand{\thealgorithm}{1b}
\caption{WeightedDAGCoverageGivenScores (Variant of Algorithm~1) \hfill {\small\normalfont\itshape Correctness: Theorem~\ref{thm:algcorrect} applied with $\beta$ as input}}
\label{alg:dagcov-given}
\begin{algorithmic}[1]
\Require Claim DAG $G=(V,E,\tau,w)$, pre-computed scores $\beta : V \to [0,1]$, threshold $\theta$
\Ensure Coverage score $W_{\text{cov}}$, proof certificate $\Pi$
\State $\textit{levels} \gets \text{TopologicalSort}(G)$
\For{level $\ell = 0$ to $\max\_\text{level}$}
  \For{each $v$ at level $\ell$}
    \State $\textit{deps\_met}(v) \gets \forall u \in \textit{deps}(v): \textit{eff}(u) \geq \theta$
    \State $\textit{eff}(v) \gets \begin{cases} \beta(v) & \text{if } \textit{deps\_met}(v) \\ 0 & \text{otherwise}\end{cases}$
  \EndFor
\EndFor
\State $W_{\text{cov}} \gets \frac{\sum_{v} w(\tau(v)) \cdot \textit{eff}(v)}{\sum_{v} w(\tau(v))} \times 100$
\State $\Pi \gets \text{generateCertificate}(\beta, \theta)$ \Comment{compiled generator; \S\ref{sec:closedpath}. Internally re-computes $\textit{eff}$ via \texttt{computeEff} and reports the same $W_{\text{cov}}$.}
\State \Return $(W_{\text{cov}}, \Pi)$
\end{algorithmic}
\end{algorithm}
\renewcommand{\thealgorithm}{\arabic{algorithm}}
\addtocounter{algorithm}{-1}

\noindent\textbf{Closed formal core.} Algorithm~\ref{alg:dagcov-given} is accompanied by a fully specified, compiled Lean~4 certificate generator (\texttt{generateCertificate}, Section~\ref{sec:closedpath}). Because Algorithm~\ref{alg:dagcov} is semantically equivalent to computing $\beta(v) = \max_{s \in S} M(v,s)$ for each $v \in V$ and then invoking Algorithm~\ref{alg:dagcov-given}, the formal phase of Algorithm~\ref{alg:dagcov} inherits this generator soundness once $\beta$ is fixed. The ML computation of $\beta$ remains below the trust boundary.

\noindent Algorithm~\ref{alg:dagcov-given} separates DAG propagation and coverage computation (above the trust boundary) from score computation (below it). Algorithm~\ref{alg:dagcov} is semantically equivalent to computing $\beta(v) = \max_{s \in S} M(v,s)$ for each $v \in V$ and then invoking Algorithm~\ref{alg:dagcov-given} on $(G, \beta, \theta)$; we retain the monolithic form of Algorithm~\ref{alg:dagcov} for self-contained presentation, while sensitivity analyses that reuse construction-specific pre-computed scores (Algorithm~\ref{alg:construction}) invoke Algorithm~\ref{alg:dagcov-given} directly. Theorem~\ref{thm:algcorrect} applies to Algorithm~\ref{alg:dagcov-given} with $\beta$ as an external input: properties (i)--(iv) depend only on boundedness $\beta \in [0,1]$, which is the caller's precondition.

\subsection{Algorithm 2: Freedom-to-Operate Proof Construction}
\label{sec:fto}

\noindent\textbf{Result-asymmetry warning (read first).} The FTO result returned by Algorithm~\ref{alg:fto} is asymmetric and must not be read as a two-sided infringement determination: a $\textsc{Clear}$ result produces a formal proof of non-infringement (conditional on the ML layer's scoring accuracy: a missed or misscored relevant evidence segment can still yield a formally valid but legally misleading certificate), whereas a $\textsc{Risk}$ result does \textit{not} assert that infringement exists, only that per-element impossibility could not be established and human legal judgment is required. This ``no evidence of impossibility'' guarantee is weaker than a ``no infringement'' guarantee; see Section~\ref{sec:discussion} (Legal Implications) for the practitioner-facing discussion.

\begin{figure}[H]
\centering
\includegraphics[width=\textwidth]{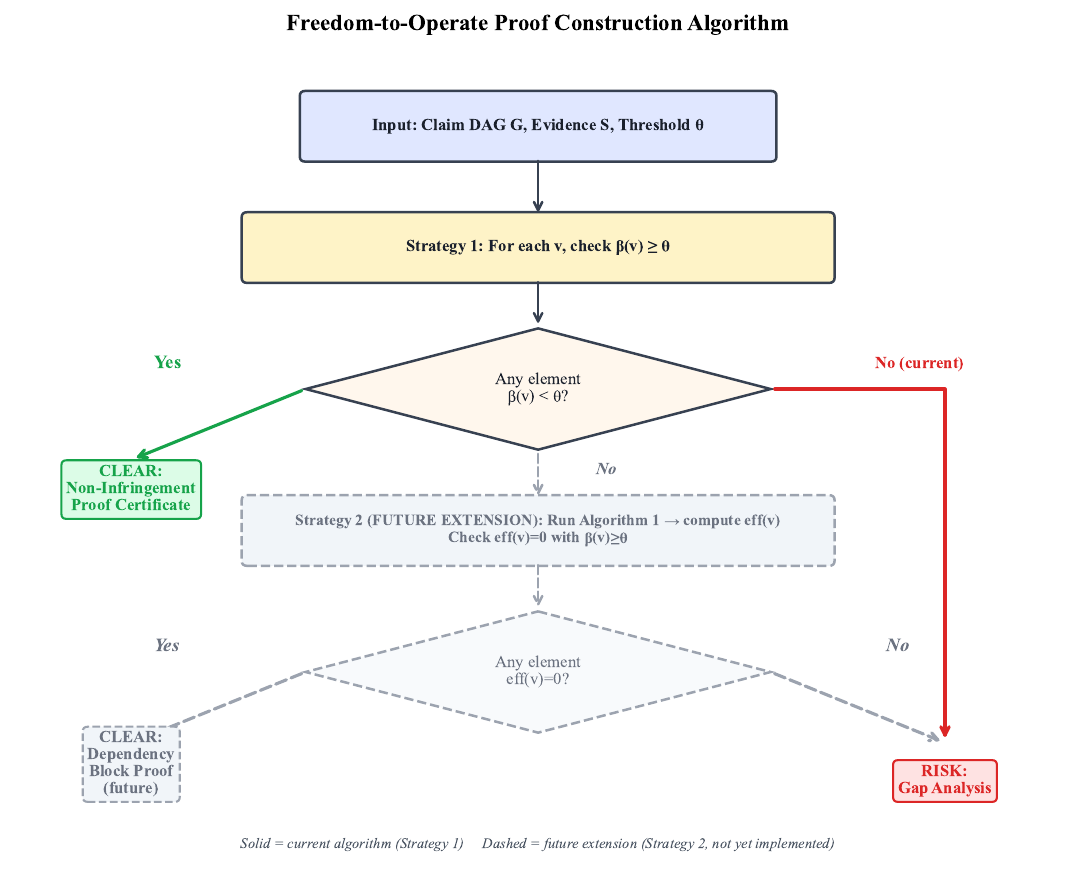}
\caption{FTO proof construction flow. \textbf{Strategy~1 (current implementation):} if any element's best match score falls below threshold, the algorithm returns $\textsc{Clear}$ with a non-infringement certificate; otherwise it returns $\textsc{Risk}$ with a gap analysis. \textbf{Strategy~2} (the dependency-chain branch shown in the figure) \textbf{is a planned future extension, not current behavior}; it would handle the case where per-element scores all clear the threshold but simultaneous satisfaction is blocked by the dependency structure under injective assignment constraints. See the Remark following Algorithm~\ref{alg:fto}.}
\label{fig:fto}
\end{figure}

Algorithm~\ref{alg:fto} (see also Figure~\ref{fig:fto}) addresses the freedom-to-operate use case: determining whether a product can be formally shown to avoid infringement. Unlike Algorithm~\ref{alg:dagcov}, which quantifies coverage, this algorithm seeks a formal proof of non-infringement, a certificate that no assignment of evidence to claim limitations achieves full coverage under Definition~\ref{def:fullcoverage}. The algorithm checks each limitation individually (lines~3--8): if no evidence segment scores above~$\theta$, infringement on that element is impossible and a \texttt{ProveNoMatch} certificate is returned immediately. The algorithm returns one of two labeled outputs: $\textsc{Clear}(\Pi)$, indicating formal non-infringement with proof certificate $\Pi : \texttt{ProofCertificate}$ encoding the impossibility of full coverage; or $\textsc{Risk}(\textit{gap})$, indicating the analysis was inconclusive, with $\textit{gap}$ an informal diagnostic report identifying the weakest elements and constraining dependency paths (see the Note on \texttt{AnalyzeWeakestElements} below). Theorem~\ref{thm:fto} proves soundness: whenever the algorithm certifies $\textsc{Clear}$, no full-coverage assignment exists.

\begin{algorithm}[H]
\caption{FTOProofConstruction \hfill {\small\normalfont\itshape Soundness: Theorem~\ref{thm:fto} (informal sketch)}}
\label{alg:fto}
\begin{algorithmic}[1]
\Require Claim DAG $G=(V,E,\tau,w)$, Evidence $S$, threshold $\theta$
\Ensure $(\textsc{Clear}, \Pi)$ or $(\textsc{Risk}, \textit{gap})$
\State $\beta(v) \gets \text{BestMatchScore}(v, S)$ for all $v \in V$
\State \Comment{\textit{Per-element impossibility}}
\For{each $v \in V$}
  \If{$\beta(v) < \theta$}
    \State $\Pi \gets \text{ProveNoMatch}(v, S, \theta)$ \Comment{no evidence meets threshold for $v$}
    \State \Return $(\textsc{Clear}, \Pi)$
  \EndIf
\EndFor
\State $\textit{gap} \gets \text{AnalyzeWeakestElements}(G, \beta, \theta)$
\State \Return $(\textsc{Risk}, \textit{gap})$
\end{algorithmic}
\end{algorithm}

\noindent\textit{Note on \texttt{AnalyzeWeakestElements}.} The function \texttt{AnalyzeWeakestElements} is an informal diagnostic procedure that ranks elements by proximity of $\beta(v)$ to $\theta$ and traces the dependency paths constraining them. Its output is not formally verified and is not included in the proof certificate~$\Pi$; it serves as practitioner-facing guidance for the $\textsc{Risk}$ case.

\noindent\textbf{Remark (Strategy~2 as Future Extension).} The current algorithm addresses FTO through per-element threshold checking. A dependency-chain impossibility strategy (Strategy~2) would handle the structurally distinct case where individual elements clear the threshold but simultaneous satisfaction is blocked by the dependency structure under injective assignment constraints. Under the current non-injective formulation, per-element checking is sufficient: if all $\beta(v) \geq \theta$, then by induction on the topological order all $\textit{eff}(v) = \beta(v) \geq \theta$, so no dependency-chain impossibility can arise. Strategy~2 is deferred to future work. It would produce a qualitatively different proof object, a dependency-chain impossibility certificate, providing additional diagnostic information about which structural dependencies block infringement.

\begin{definition}[Full Coverage]
\label{def:fullcoverage}
An assignment $A : V \to S$ achieves full coverage under threshold $\theta$ iff $\forall v \in V,\; M(v, A(v)) \geq \theta \;\wedge\; \textit{deps\_met}_A(v)$, where $\textit{deps\_met}_A(v) \Leftrightarrow \forall u \in \textit{deps}(v),\; M(u, A(u)) \geq \theta$. Note: $\textit{deps\_met}_A(v)$ is logically implied by the outer universal quantifier (since $\textit{deps}(v) \subseteq V$, the condition $\forall v \in V,\; M(v, A(v)) \geq \theta$ already ensures $M(u, A(u)) \geq \theta$ for all dependencies $u$). The conjunct is retained for clarity and to highlight the structural parallel with $\textit{deps\_met}$ in Definition~\ref{def:bestmatch}, which checks propagated effective scores rather than raw match scores. The former ($\textit{deps\_met}_A$) reflects the legal standard for infringement (all-elements rule applied per-assignment); the latter is a conservative approximation used in coverage computation that additionally enforces transitive dependency satisfaction.
\end{definition}

\begin{theorem}[Soundness of Algorithm~\ref{alg:fto}]
\label{thm:fto}
For all valid inputs $(G, S, \theta)$ where $G = (V, E, \tau, w)$ satisfies \texttt{dag\_acyclic} and $S \neq \emptyset$:
\begin{enumerate}[label=(\roman*), leftmargin=2.5em, topsep=2pt, itemsep=1pt]
\item If Algorithm~\ref{alg:fto} returns $(\textsc{Clear}, \Pi)$, then there exists $v \in V$ such that $\beta(v) < \theta$, meaning no assignment $A : V \to S$ can achieve $M(v, A(v)) \geq \theta$, and hence no full-coverage assignment exists (Definition~\ref{def:fullcoverage}). $\Pi$ encodes this per-element impossibility.
\item If Algorithm~\ref{alg:fto} returns $(\textsc{Risk}, \textit{gap})$, then $\beta(v) \geq \theta$ for all $v \in V$, meaning per-element impossibility cannot be established, and $\textit{gap}$ provides diagnostic information about proximity to the threshold.
\end{enumerate}
Note: This version does not claim completeness, a $\textsc{Risk}$ return does not mean infringement exists, only that this algorithm cannot rule it out.
\end{theorem}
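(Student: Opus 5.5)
The argument is a direct case analysis on the two return points of Algorithm~\ref{alg:fto}, followed by unpacking Definition~\ref{def:fullcoverage} and Definition~\ref{def:bestmatch}. The only properties needed are that $S$ is finite and non-empty, so that $\beta(v)=\max_{s\in S}M(v,s)$ is well-defined and attained, and the boundedness of $M$ (Definition~\ref{def:matchscore}). Acyclicity is used only to guarantee that the inputs are well-formed; no dependency propagation is needed for either clause.

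For clause~(i), the algorithm returns $(\textsc{Clear},\Pi)$ only at line~6. This line is reached only inside the branch guarded by $\beta(v)<\theta$ for the current loop variable $v$, which gives the required witness $v\in V$. Now fix an arbitrary assignment $A:V\to S$. Since $A(v)\in S$, the definition of $\beta$ as a maximum gives $M(v,A(v))\le\beta(v)<\theta$. Hence the conjunct $M(v,A(v))\ge\theta$ in Definition~\ref{def:fullcoverage} fails at this $v$, so $A$ does not achieve full coverage. Because $A$ was arbitrary, no full-coverage assignment exists. The claim that $\Pi$ ``encodes'' this impossibility I would state only at the level of its intended proposition $\forall s\in S,\ M(v,s)<\theta$, which is exactly the fact just used. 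Per Table~\ref{tab:proofstatus}, the \texttt{ProveNoMatch} generator is architecturally mitigated, so validity of $\Pi$ itself rests on Definition~\ref{def:validcert} and is not part of this argument.

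For clause~(ii), the algorithm returns $(\textsc{Risk},\textit{gap})$ only at line~10, which is reached only after the loop over all of $V$ completes without returning. Take any $v\in V$. If $\beta(v)<\theta$ held, the guard would have fired on that iteration, or on an earlier one, and the algorithm would have returned \textsc{Clear}. Therefore $\beta(v)\ge\theta$ for every $v$. The diagnostic content of $\textit{gap}$ is informal by the Note on \texttt{AnalyzeWeakestElements}, so nothing further is claimed about it. To support the non-completeness note, I would observe that under these conditions choosing $A(v)\in\arg\max_s M(v,s)$ actually yields a full-coverage assignment in the non-injective formulation. So \textsc{Risk} means exactly that per-element impossibility fails, not that infringement holds.

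The main obstacle is not mathematical but one of bookkeeping. The theorem's bullet (i) conflates the algorithmic fact ($\beta(v)<\theta$) with the certificate's content. I would keep these separate and prove only the former rigorously. I would also make the loop invariant explicit: at the start of each iteration, every previously visited node has $\beta\ge\theta$. Making this invariant explicit is what justifies the universal statement in (ii), and it is the one step where the informal argument could be sloppy.
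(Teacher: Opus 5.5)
Your proof is correct and follows essentially the same route as the paper: a case analysis on the two return points (line~6 for \textsc{Clear}, line~10 for \textsc{Risk}), using $M(v,A(v)) \le \beta(v) < \theta$ for clause (i) and the fact that the loop completed without firing the guard for clause (ii). Your two additions are sound: the explicit loop invariant, and the observation that an $\arg\max$ assignment witnesses full coverage whenever \textsc{Risk} is returned, which matches the paper's remark that per-element checking suffices in the non-injective formulation.
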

\begin{proof}
\textit{(i):} Algorithm~\ref{alg:fto} returns $(\textsc{Clear}, \Pi)$ only at line~6, which is reached when some element $v$ has $\beta(v) < \theta$, meaning no $s \in S$ satisfies $M(v, s) \geq \theta$. Therefore no assignment $A$ can satisfy $M(v, A(v)) \geq \theta$, so full coverage is impossible. $\Pi$ is constructed by the \texttt{ProveNoMatch} procedure encoding the per-element impossibility argument. The internal construction of this procedure is designed for the prototype described in Section~\ref{sec:implementation} but not formally specified in this paper; see the Remark following the \texttt{ProofCertificate} structure in Section~\ref{sec:framework} for a discussion of this gap and its mitigation by the trust model.

\textit{(ii):} If the loop completes without returning $\textsc{Clear}$, all $\beta(v) \geq \theta$, so per-element impossibility cannot be established. The algorithm returns $\textsc{Risk}$ with a gap analysis identifying the weakest elements ($\beta(v)$ closest to $\theta$).
\end{proof}

\noindent\textbf{Complexity.} $O(n \cdot m \cdot d)$ for score computation (where $d$ is the embedding dimension). No DAG propagation cost is incurred: the current implementation uses only Strategy~1 and returns immediately upon finding any element with $\beta(v) < \theta$. Strategy~2 (future work) would add $O(n + |E|)$ for DAG propagation.

\noindent\textbf{Miniature example: per-element impossibility.} Consider the two-node claim reused in the UC5 example below (same claim structure; the thresholds differ because UC5 enables DOE and follows the recommended $\theta = \theta_{\text{lit}}$ configuration): $E_1$ (functional, $w = 1.5$) ``a filter element configured to attenuate high-frequency noise'' and $E_2$ (wherein, $w = 3.0$) ``wherein the filter element operates by subtracting a delayed copy of the input signal from the original signal,'' with $E_1 \to E_2$ and $\theta = 0.65$. Suppose the accused product is a purely analog amplifier with no filtering stage, so the ML layer returns $\beta(E_1) = 0.58$ and $\beta(E_2) = 0.44$. Algorithm~\ref{alg:fto} evaluates $\beta(E_1) < \theta$ in the first iteration of the per-element loop: $\beta(E_1) = 0.58 < 0.65 = \theta$, so no evidence segment can satisfy $E_1$ at threshold, and by the all-elements rule no assignment $A$ can achieve full coverage. The algorithm returns $(\textsc{Clear}, \Pi)$ at line~6, where $\Pi$ is a \texttt{ProveNoMatch} certificate encoding $\beta(E_1) < \theta$. The remaining element $E_2$ is never examined: per-element impossibility at any single node suffices. Under the current formulation, Strategy~2 (dependency-chain impossibility; see Remark above) is not required for this result because per-element checking already discharges the obligation.

\subsection{Algorithm 3: Claim Construction Sensitivity}

Algorithm~\ref{alg:construction} addresses claim construction sensitivity: the same patent claim may yield different coverage outcomes depending on how contested terms are legally interpreted. Given $k$ alternative constructions $\{I_1, \ldots, I_k\}$ ordered from broadest ($I_1$, the baseline) to narrowest, the algorithm determines which constructions yield sufficient coverage, identifies the claim terms whose interpretation is outcome-determinative, and computes the threshold construction $I_{\text{threshold}}$, the satisfied construction with the fewest satisfied elements (smallest scope). For each construction, it recomputes match scores under the ML layer and invokes Algorithm~\ref{alg:dagcov-given} to obtain $W_{\text{cov},j}$ (lines~1--5). It then identifies determinative terms (lines~6--15): for each term~$t \in \Sigma$, the algorithm constructs a single-term perturbation $I_t$ that differs from $I_1$ on $t$ alone and tests whether this isolated change alters the coverage-satisfaction outcome; $t$ is determinative if and only if it does. For unsatisfied constructions, it identifies which limitations are zeroed by dependency cascade versus direct score reduction (lines~16--18). The threshold construction is selected as the satisfied construction with the smallest scope (line~22), where $|\textit{scope}(I_j)| = |\{v : \textit{eff}_j(v) \geq \theta\}|$. Note: $|\textit{scope}|$ is a DAG-level metric (count of elements above threshold) that serves as a proxy for construction restrictiveness; it does not directly measure legal scope (the set of products that would infringe under a given construction), which requires semantic reasoning beyond the formal framework. The sensitivity report~$R$ returned by the algorithm consists of: coverage scores $\{W_{\text{cov},j}\}_{j=1}^k$, satisfaction outcomes $\{\textit{satisfied}(j)\}_{j=1}^k$, the set of determinative terms, coverage-breaking elements $\textit{breakers}(j)$ per unsatisfied construction, and the threshold construction $I_{\text{threshold}}$. The case study in Section~\ref{sec:implementation} concretizes this analysis for the memory module patent. Theorem~\ref{thm:construction} proves correctness of the coverage computation, determinative term identification, and threshold construction selection.

\begin{algorithm}[H]
\caption{ConstructionSensitivity \hfill {\small\normalfont\itshape Correctness: Theorem~\ref{thm:construction} (informal sketch)}}
\label{alg:construction}
\begin{algorithmic}[1]
\Require Claim DAG $G$, Evidence $S$, Constructions $\{I_1, \ldots, I_k\}$ where $I_1$ is the broadest (baseline) construction
\Ensure Sensitivity report $R$, formal proofs
\For{$j = 1$ to $k$}
  \State $\beta_j \gets \text{ComputeScoresUnder}(G, S, I_j)$ \Comment{$\beta_j : V \to [0,1]$; below trust boundary}
  \State $(W_{\text{cov},j}, \_) \gets \text{WeightedDAGCoverageGivenScores}(G, \beta_j, \theta)$ \Comment{Alg.~\ref{alg:dagcov-given}}
  \State $\textit{satisfied}(j) \gets (W_{\text{cov},j} \geq \textit{threshold\_cov})$
\EndFor
\State $\textit{determinative} \gets \emptyset$
\For{each term $t \in \Sigma$}
  \State $I_t \gets$ construction identical to $I_1$ except with term $t$ interpreted under $I_k$ (narrowest) \Comment{single-term perturbation}
  \State $\beta_t \gets \text{ComputeScoresUnder}(G, S, I_t)$
  \State $(W_t, \_) \gets \text{WeightedDAGCoverageGivenScores}(G, \beta_t, \theta)$ \Comment{Alg.~\ref{alg:dagcov-given}}
  \State $\textit{satisfied}_t \gets (W_t \geq \textit{threshold\_cov})$
  \If{$\textit{satisfied}_t \neq \textit{satisfied}(1)$}
    \State $\textit{determinative} \gets \textit{determinative} \cup \{t\}$
  \EndIf
\EndFor
\For{$j$ where $\neg\textit{satisfied}(j)$}
  \State $\textit{breakers}(j) \gets \{v \in V \mid \textit{eff}_j(v) < \theta \wedge \textit{eff}_1(v) \geq \theta\}$ \Comment{$I_1$ = broadest construction (baseline)}
\EndFor
\If{$\nexists j: \textit{satisfied}(j)$}
  \State $I_{\text{threshold}} \gets \bot$ \Comment{no construction satisfies coverage}
\Else
  \State $I_{\text{threshold}} \gets \arg\min_{j:\textit{satisfied}(j)} |\textit{scope}(I_j)|$ \Comment{$|\textit{scope}(I_j)| = |\{v : \textit{eff}_j(v) \geq \theta\}|$; ties broken by smallest $j$}
\EndIf
\State $\Pi \gets \text{ProveConstructionResults}(G, S, \{W_{\text{cov},j}\}, \textit{determinative}, I_{\text{threshold}})$
\State \Return $(R, \Pi)$
\end{algorithmic}
\end{algorithm}

\begin{theorem}[Correctness of Algorithm~\ref{alg:construction}]
\label{thm:construction}
For all valid inputs $(G, S, \{I_1, \ldots, I_k\})$ where $G$ satisfies \texttt{dag\_acyclic}, $S \neq \emptyset$, and $I_1$ is the broadest construction:
\begin{enumerate}[label=(\roman*), leftmargin=2.5em, topsep=2pt, itemsep=1pt]
\item \textit{Coverage correctness:} For each $j$, $W_{\text{cov},j}$ is correctly computed by Algorithm~\ref{alg:dagcov-given} under construction $I_j$ with pre-computed scores $\beta_j$, satisfying Theorem~\ref{thm:algcorrect}.
\item \textit{Determinative soundness:} If term $t \in \textit{determinative}$, then the tested perturbation $I_1 \to I_t$, changing $t$'s interpretation from $I_1$ to $I_k$ while holding all other terms at $I_1$, changes the satisfaction outcome: $\textit{satisfied}_t \neq \textit{satisfied}(1)$.
\item \textit{Determinative completeness (tested perturbation):} If term $t \notin \textit{determinative}$, then the tested perturbation $I_1 \to I_t$ does not change the satisfaction outcome: $\textit{satisfied}_t = \textit{satisfied}(1)$.
\item \textit{Determinative completeness under monotone constructions:} Assume the family $\{I_1, \ldots, I_k\}$ is term-wise monotonically ordered: narrowing any single term's interpretation (moving that term alone from $I_1$'s reading toward $I_k$'s reading) never increases $\beta(v)$ for any $v \in V$. Under this assumption, if $t \notin \textit{determinative}$, then no single-term perturbation of $t$ to any intermediate $I_j$ changes the satisfaction outcome either. Without the monotonicity assumption, completeness is guaranteed only for the tested extreme perturbation $I_1 \to I_t$; intermediate constructions must be tested separately if monotonicity cannot be established. Term-wise monotonicity is an \emph{empirical} property of the concrete scoring function $M$ that practitioners must validate for their implementation: it tends to hold for scoring functions whose lexical component dominates and whose semantic component is calibrated on matched-vocabulary corpora, but it can fail for TF-IDF plus embedding-cosine combinations when narrowing a term's interpretation removes ambiguous vocabulary and concentrates the similarity mass on a smaller, more precisely matched subset. Implementations should verify monotonicity on a representative construction pair (e.g., $I_1$ and $I_k$) before relying on the completeness extension of (iv). A concrete recipe: for each contested term $t$, compute $\beta(v)$ under $I_1$ and under the single-term perturbation $I_t$ (i.e., $I_1$ with only $t$'s interpretation narrowed to $I_k$'s reading) for every $v \in V$ where $t$ appears in $v$'s claim-language text; check that $\beta(v; I_1) \geq \beta(v; I_t)$ holds for every such $v$. If any counterexample is observed (an increase when narrowing), fall back to testing every intermediate construction $I_j$ explicitly rather than relying on the monotone-ordering extension.
\item \textit{Threshold construction:} If any construction is satisfied, $I_{\text{threshold}}$ is the satisfied construction with the smallest scope $|\{v : \textit{eff}_j(v) \geq \theta\}|$. If no construction is satisfied, $I_{\text{threshold}} = \bot$.
\end{enumerate}
\end{theorem}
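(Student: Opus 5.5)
The plan is to prove the five clauses in the order (i), (ii), (iii), (v), (iv). The first four are essentially read off from the control flow of Algorithm~\ref{alg:construction}. Clause (iv) is the only one with mathematical content.

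For (i), lines~1--5 call Algorithm~\ref{alg:dagcov-given} on $(G,\beta_j,\theta)$. Each $\beta_j$ has codomain $[0,1]$ because \textsc{ComputeScoresUnder} is built from $M$ and so inherits its boundedness (Definition~\ref{def:matchscore}). Bounded input is the only precondition Theorem~\ref{thm:algcorrect} needs in its Algorithm~\ref{alg:dagcov-given} form. So bounds, dependency enforcement, completeness and monotonicity hold for each $W_{\text{cov},j}$; the bounds part is machine-verified through \texttt{coverage\_in\_range} and Proposition~\ref{prop:gensound}.

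For (ii) and (iii), I will observe that lines~6--15 start $\textit{determinative}$ at $\emptyset$ and add $t$ exactly when $\textit{satisfied}_t \neq \textit{satisfied}(1)$. Nothing else modifies the set, and each $t \in \Sigma$ is visited exactly once. Hence membership of $t$ is equivalent to that inequality, which gives both directions at once; a one-line loop invariant makes this precise. Clause (v) follows the same way from lines~19--23: the two branches are split on $\exists j\,\textit{satisfied}(j)$, and in the nonempty case the argmin over a finite nonempty index set exists, with ties resolved by smallest $j$.

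Clause (iv) is the main step. I will fix $t\notin\textit{determinative}$ and an intermediate construction $I_j$, and write $I_t^{(j)}$ for the construction that is $I_1$ except that $t$ takes $I_j$'s reading. Term-wise monotonicity gives the pointwise ordering
\[
\beta_t \le \beta_t^{(j)} \le \beta_1 .
\]
Theorem~\ref{thm:algcorrect}(iv) then gives
\[
W_t \le W_t^{(j)} \le W_{\text{cov},1}.
\]
Because $t\notin\textit{determinative}$, clause (iii) says $W_t$ and $W_{\text{cov},1}$ fall on the same side of $\textit{threshold\_cov}$. Coverage satisfaction is an up-closed predicate in $W$, so a value sandwiched between two points on the same side is also on that side. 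Therefore $\textit{satisfied}$ under $I_t^{(j)}$ equals $\textit{satisfied}(1)$.

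The delicate part is making term-wise monotonicity precise enough to yield this pointwise sandwich. It has to compare the intermediate reading with both extremes, not only with $I_1$. I would state it as monotonicity along the chain $I_1 \to I_j \to I_k$ restricted to the single term $t$. If only the comparison with $I_1$ is available, the argument still gives $W_t^{(j)} \le W_{\text{cov},1}$. That bound settles the case where $\textit{satisfied}(1)$ is false, but it leaves the true case open. This is exactly why the statement falls back to explicit testing when monotonicity cannot be established.
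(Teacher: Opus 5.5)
Your proposal is correct and follows the paper's proof essentially step for step. Clause (i) comes from applying Theorem~\ref{thm:algcorrect} to Algorithm~\ref{alg:dagcov-given} on each $(G,\beta_j,\theta)$. Clauses (ii), (iii) and (v) are read directly off the control flow of Algorithm~\ref{alg:construction}. Clause (iv) uses the pointwise sandwich $\beta_t \le \beta_t^{(j)} \le \beta_1$, then Theorem~\ref{thm:algcorrect}(iv), then the fact that $W \ge \textit{threshold\_cov}$ is up-closed.

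Your closing remark adds something useful. You note that the monotonicity hypothesis must supply both comparisons, with $I_1$ and with $I_k$. You also note that the comparison with $I_1$ alone only settles the case $\textit{satisfied}(1)=\textit{false}$. The paper uses this tacitly when it asserts $\beta_1 \ge \beta' \ge \beta_t$ ``by the monotonicity assumption applied to the single term $t$,'' and you make it explicit.
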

\begin{proof}
\textit{(i)} follows directly from Theorem~\ref{thm:algcorrect} applied to Algorithm~\ref{alg:dagcov-given} with inputs $(G, \beta_j, \theta)$, invoked at line~3 for each construction $I_j$.

\textit{(ii)} follows from the algorithm at lines~7--14: $t$ is added to $\textit{determinative}$ iff $\textit{satisfied}_t \neq \textit{satisfied}(1)$ by the line~12 decision, so every $t \in \textit{determinative}$ witnesses the claimed outcome change on the tested perturbation.

\textit{(iii)} For the tested perturbation, $t \notin \textit{determinative}$ means the algorithm found $\textit{satisfied}_t = \textit{satisfied}(1)$ by construction.

\textit{(iv)} For the extension to intermediate perturbations under term-wise monotonicity: any intermediate single-term perturbation of $t$ to $I_j$ (changing $t$'s interpretation to $I_j$'s while holding all other terms at $I_1$) yields scores $\beta'$ satisfying $\beta_1(v) \geq \beta'(v) \geq \beta_t(v)$ for all $v$, by the monotonicity assumption applied to the single term $t$. By Theorem~\ref{thm:algcorrect}(iv), $W_{\text{cov}}(\beta_1) \geq W_{\text{cov}}(\beta') \geq W_{\text{cov}}(\beta_t)$. If $\textit{satisfied}(1) = \textit{satisfied}_t = \textit{true}$, both endpoints exceed $\textit{threshold\_cov}$, so $W_{\text{cov}}(\beta') \geq W_{\text{cov}}(\beta_t) \geq \textit{threshold\_cov}$ and the perturbed outcome is $\textit{true}$. If both are $\textit{false}$, $W_{\text{cov}}(\beta') \leq W_{\text{cov}}(\beta_1) < \textit{threshold\_cov}$ and the perturbed outcome is $\textit{false}$. Either way the intermediate outcome matches.

\textit{(v)} When the set $\{j : \textit{satisfied}(j)\}$ is non-empty, the result follows from the $\arg\min$ at line~22, which selects over all $j$ with $\textit{satisfied}(j) = \text{true}$, minimizing $|\textit{scope}(I_j)|$. When the set is empty (line~19), $I_{\text{threshold}} = \bot$ is returned, correctly indicating that no construction achieves sufficient coverage.
\end{proof}

\noindent\textit{Remark on determinative term identification.} The revised Algorithm~\ref{alg:construction} tests each term's independent causal effect by constructing single-term perturbations $I_t$ that differ from $I_1$ on exactly one term. This provides both soundness and completeness for independent determinativeness: a term is identified as determinative if and only if changing its interpretation alone changes the coverage-satisfaction outcome. Completeness holds under the assumption that constructions are monotonically ordered (a narrower construction never increases a term's match score); under this assumption, a term that is non-determinative under the extreme perturbation ($I_1 \to I_k$) is also non-determinative under any intermediate perturbation. If non-monotone score changes can occur, intermediate constructions should be tested separately. The cost of this approach is $O(|\Sigma|)$ construction evaluations, each requiring $O(n \cdot m \cdot d)$ for score computation, giving $O(|\Sigma| \cdot n \cdot m \cdot d)$ total, compared to $O(k \cdot n \cdot m \cdot d)$ in the original formulation. For typical patent claims where $|\Sigma|$ is small (tens of terms), this cost is negligible relative to the score computation.

\begin{figure}[H]
\centering
\includegraphics[width=\textwidth]{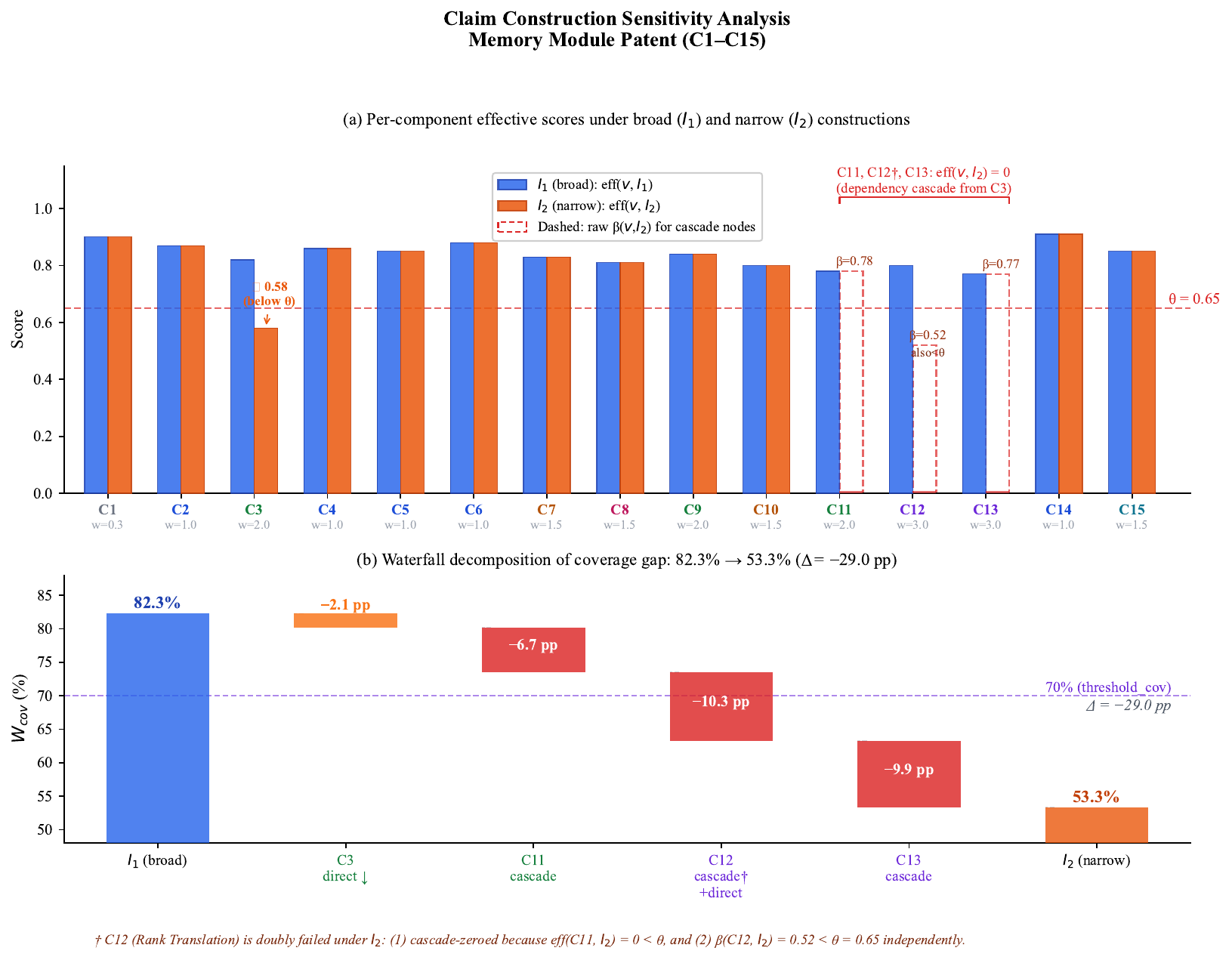}
\caption{Claim construction sensitivity analysis for the memory module patent (C1--C15). (a)~Per-component effective scores under broad $I_1$ (blue) and narrow $I_2$ (orange/red), computed via Definition~\ref{def:bestmatch}. Dashed outlines on C11, C12\textsuperscript{\dag}, C13 show raw $\beta(v, I_2)$; their eff is zeroed by dependency propagation. C3's score drops below $\theta = 0.65$, triggering the cascade. (b)~Waterfall decomposition of the 29.0~pp coverage gap. Direct raw-score drops (C3, C12\textsuperscript{\dag}~direct) account for 5.7~pp (${\sim}$20\%); cascade-driven zeroing (C11, C12\textsuperscript{\dag}~cascade, C13) accounts for 23.3~pp (${\sim}$80\%). $W_{\text{cov}}(I_1) = 82.3\%$ ($\geq 70\%$, satisfied); $W_{\text{cov}}(I_2) = 53.3\%$ ($< 70\%$, not satisfied). Determinative terms: C3 and C12.}
\label{fig:construction}
\end{figure}

\noindent\textbf{Complexity.} $O(k \cdot n \cdot m \cdot d)$ for $k$ constructions. Determinative term identification is $O(|\Sigma| \cdot n \cdot m \cdot d)$ (one construction evaluation per term).

\noindent\textbf{Case study: claim construction sensitivity for the memory module patent.}
We apply Algorithm~\ref{alg:construction} to the memory module case study (Section~\ref{sec:implementation}) using the claim DAG $G$ from Figure~\ref{fig:claimdag}, which contains fifteen atomic limitations $C_1$--$C_{15}$ and has total weight $\sum_{v \in V} w(\tau(v)) = 23.3$. Two claim constructions are defined for the key contested terms. \textit{Construction $I_1$ (broad):} the term ``rank translation'' in $C_{12}$ is interpreted to encompass any mechanism by which the memory module presents a different number of ranks to the controller than it physically contains. The term ``first number of ranks'' in $C_3$ refers to any quantity of logical ranks. \textit{Construction $I_2$ (narrow):} the term ``rank translation'' in $C_{12}$ requires a dedicated rank translation circuit with an explicit address mapping stage. The term ``first number of ranks'' in $C_3$ requires a specific 2:1 ratio to the second number in $C_9$. The system parameter is set to $\theta = 0.65$ throughout (Definition~\ref{def:bestmatch}). Only $C_3$ and $C_{12}$ receive different raw scores across constructions; all other raw scores are construction-independent. Complete scores for all 15 nodes are given in Table~\ref{tab:scores}.

Under $I_1$, all fifteen nodes satisfy $\theta$ and all dependencies are met, giving $W_{\text{cov}}(G, I_1) = 19.165 / 23.3 \times 100 \approx 82.3\%$. Under $I_2$, the cascade unfolds as follows: $C_3$'s raw score drops to $\beta(C_3, I_2) = 0.58 < \theta$, but its dependency ($C_2$) is met, so $\textit{eff}(C_3, I_2) = 0.58$. Since $0.58 < \theta$, $C_{11}$ (which depends on $C_3$) fails $\textit{deps\_met}$ and is zeroed despite $\beta(C_{11}, I_2) = 0.78 \geq \theta$. $C_{12}$ then fails because $\textit{eff}(C_{11}, I_2) = 0 < \theta$, and additionally $\beta(C_{12}, I_2) = 0.52 < \theta$ (a double failure). $C_{13}$ is zeroed because $\textit{eff}(C_{12}, I_2) = 0 < \theta$. This gives $W_{\text{cov}}(G, I_2) = 12.415 / 23.3 \times 100 \approx 53.3\%$.

The 29.0~pp gap decomposes into two components: \textit{direct raw-score reductions} account for 5.7~pp (${\sim}$20\%), comprising $C_3$ (2.1~pp) and $C_{12}$'s direct drop (3.6~pp under the counterfactual that $\textit{deps\_met}(C_{12})$ holds, computed as $(0.80 - 0.52) \times 3.0 / 23.3 \times 100 \approx 3.6$~pp (exactly $3.605$~pp per Table~\ref{tab:waterfall})); \textit{dependency-cascade zeroing} accounts for 23.3~pp (${\sim}$80\%), comprising $C_{11}$ (6.7~pp), $C_{12}$'s cascade component (6.7~pp), and $C_{13}$ (9.9~pp) (the cascade contribution $23.3$~pp coincidentally matches the total DAG weight $23.3$; this is a numerical coincidence for this case study, not a definitional relationship). The DAG dependency mechanism is therefore outcome-determinative: a 0.24-point reduction in $C_3$'s raw score propagates through the dependency structure to eliminate three of the highest-weighted nodes from the coverage sum.

We set the coverage-satisfaction threshold $\textit{threshold\_cov} = 70\%$ for Algorithm~\ref{alg:construction} line~4; this is distinct from the dependency-enforcement threshold $\theta = 0.65$ used within Algorithm~\ref{alg:dagcov}. Algorithm~\ref{alg:construction} identifies $C_3$ (``first number of ranks'', quantitative) and $C_{12}$ (``rank translation'', wherein) as the determinative terms. Under $\textit{threshold\_cov} = 70\%$: $W_{\text{cov}}(G, I_1) = 82.3\% \geq 70\%$ (satisfied) and $W_{\text{cov}}(G, I_2) = 53.3\% < 70\%$ (not satisfied). The threshold construction is $I_{\text{threshold}} = I_1$. A claim construction ruling adopting the narrow interpretation of either $C_{12}$ or $C_3$ would therefore be outcome-determinative for infringement. Figure~\ref{fig:construction} visualises this analysis.

\subsection{Algorithm 4: Cross-Claim Consistency Verification}

Algorithm~\ref{alg:consistency} addresses cross-claim consistency: if the same technical term appears across multiple patents in a portfolio, legal coherence requires a consistent interpretation in all of them. Inconsistent interpretations can create prosecution history estoppel vulnerabilities or invite claim construction challenges. The algorithm takes as input a portfolio of $p$ claim DAGs, a shared vocabulary $\Sigma = \bigcup_{i=1}^p \textit{terms}(G_i)$ (the union of all term vocabularies across claims), and an interpretation function $f(i,t)$ computed via LLM-assisted extraction (below the trust boundary). It exhaustively checks all pairs $(i,j)$ of claims sharing each term~$t$, returning a formal counterexample if any inconsistency is found. Because the interpretation function is ML-computed, the formal layer certifies only that the extracted interpretations are mutually consistent, it cannot certify that any individual interpretation correctly captures the legal meaning of the term. The algorithm returns one of two labeled outputs: $\textsc{Inconsistent}(i, j, t, f(i,t), f(j,t))$, identifying a specific counterexample (the pair of claims, the shared term, and the conflicting interpretations); or $(\textsc{Consistent}, \Pi)$, where $\Pi : \texttt{ProofCertificate}$ certifies universal consistency across the portfolio. The restriction to pairs with $i < j$ is without loss of generality by symmetry. Theorem~\ref{thm:consistency} establishes both soundness (a reported inconsistency reflects a genuine mismatch) and completeness (a $\textsc{Consistent}$ result means no inconsistency was overlooked).

\begin{algorithm}[H]
\caption{CrossClaimConsistency \hfill {\small\normalfont\itshape Correctness: Theorem~\ref{thm:consistency} (informal sketch)}}
\label{alg:consistency}
\begin{algorithmic}[1]
\Require Patent portfolio $\{G_1, \ldots, G_p\}$, shared vocabulary $\Sigma$, interpretation function $f : \{1,\ldots,p\} \times \Sigma \to \mathsf{Int}$
\Ensure \textsc{Consistent} or counterexample $(i, j, t, f(i,t), f(j,t))$
\For{each term $t \in \Sigma$}
  \State $\textit{users}(t) \gets \{i \mid \text{uses}(G_i, t)\}$
\EndFor
\For{$i = 1$ to $p$}
  \For{each $t \in \textit{terms}(G_i)$} \Comment{$\textit{terms}(G_i) = \bigcup_{v \in V_i} \textit{ann}_i(v)$, Def.~\ref{def:annotation}}
    \State $f(i, t) \gets \text{ExtractInterpretation}(G_i, t)$ \Comment{LLM-assisted}
  \EndFor
\EndFor
\For{each $t \in \Sigma$}
  \For{each $(i, j) \in \textit{users}(t) \times \textit{users}(t)$ where $i < j$}
    \If{$f(i, t) \neq f(j, t)$}
      \State \Return $\textsc{Inconsistent}(i, j, t, f(i,t), f(j,t))$
    \EndIf
  \EndFor
\EndFor
\State $\Pi \gets \text{ProveConsistency}(\Sigma, f, \textit{users})$
\State \Return $(\textsc{Consistent}, \Pi)$
\end{algorithmic}
\end{algorithm}

\begin{theorem}[Correctness of Algorithm~\ref{alg:consistency}]
\label{thm:consistency}
For all valid inputs $(\{G_1, \ldots, G_p\}, \Sigma, f)$ where $f : \{1,\ldots,p\} \times \Sigma \to \mathsf{Int}$ is the interpretation function (computed by \texttt{ExtractInterpretation}, which is LLM-assisted and below the trust boundary):
\begin{enumerate}[label=(\roman*), leftmargin=2.5em, topsep=2pt, itemsep=1pt]
\item \textit{Soundness:} If Algorithm~\ref{alg:consistency} returns $\textsc{Inconsistent}(i, j, t, f(i,t), f(j,t))$, then $\text{uses}(G_i, t) \wedge \text{uses}(G_j, t) \wedge f(i,t) \neq f(j,t)$.
\item \textit{Completeness:} If Algorithm~\ref{alg:consistency} returns $\textsc{Consistent}$, then $\forall i,j.\; \forall t \in \Sigma.\; \text{uses}(G_i,t) \wedge \text{uses}(G_j,t) \Rightarrow f(i,t) = f(j,t)$.
\end{enumerate}
The restriction to pairs $(i, j)$ with $i < j$ is without loss of generality since equality is symmetric. Note: the formal verification certifies consistency of the ML-computed interpretations $f(i,t)$; it cannot certify that $f(i,t)$ correctly captures the legal meaning of term $t$ in $G_i$.
\end{theorem}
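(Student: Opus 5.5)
The argument is a direct reading of the control flow of Algorithm~\ref{alg:consistency}, split into soundness and completeness. One preliminary step makes everything else go through. Lines~1--3 fix $\textit{users}(t) = \{i \mid \text{uses}(G_i,t)\}$, and lines~4--8 populate $f(i,t)$ for every $t \in \textit{terms}(G_i)$. By Definition~\ref{def:annotation} and the definition of $\text{uses}$ in Problem~4, $t \in \textit{terms}(G_i)$ holds iff $\exists v \in V_i.\; t \in \textit{ann}_i(v)$, iff $\text{uses}(G_i,t)$. So for every $t \in \Sigma$ and every $i \in \textit{users}(t)$, the value $f(i,t)$ is defined before the checking loop (lines~9--15) begins.

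\emph{Soundness (i).} $\textsc{Inconsistent}$ is returned only at line~12. That line is reached inside the loop with some $t \in \Sigma$ and $(i,j) \in \textit{users}(t)\times\textit{users}(t)$ with $i<j$, after the guard $f(i,t) \neq f(j,t)$ at line~11 has succeeded. Membership $i,j \in \textit{users}(t)$ gives $\text{uses}(G_i,t)\wedge\text{uses}(G_j,t)$ by the characterization of lines~1--3. The guard supplies the inequality, and the returned tuple is exactly the witness.

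\emph{Completeness (ii).} The $\textsc{Consistent}$ return at line~17 is reached only if the nested loop over the finite sets $\Sigma$ and $\textit{users}(t)^2$ runs to completion without triggering line~12. I will state this as a loop invariant: after processing a prefix of $\Sigma$, every pair $i<j$ in $\textit{users}(t)$ for every processed $t$ satisfies $f(i,t)=f(j,t)$. The invariant follows by induction on the iteration count, since termination is trivial for finite loops.

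Now take arbitrary $i,j,t$ with $\text{uses}(G_i,t)\wedge\text{uses}(G_j,t)$, so that $i,j\in\textit{users}(t)$. There are three cases. If $i<j$, the invariant applies directly. If $i>j$, apply it to $(j,i)$ and use symmetry of equality; this is the ``without loss of generality'' remark. If $i=j$, the equality $f(i,t)=f(i,t)$ is reflexive.

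\emph{Main obstacle.} The only real subtlety is the well-definedness step tying $\textit{users}(t)$ to the domain on which $f$ was extracted. The hypothesis $\Sigma = \bigcup_i \textit{terms}(G_i)$ ensures the loop over $\Sigma$ covers every used term. I would state explicitly that the theorem treats $f$ as the function produced by $\texttt{ExtractInterpretation}$. Its entries outside $\{(i,t) : \text{uses}(G_i,t)\}$ are irrelevant, because both the algorithm and the consistency specification only inspect pairs in $\textit{users}(t)$.

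The certificate $\Pi$ from $\texttt{ProveConsistency}$ plays no role in the proof. Its validity falls under the architecturally mitigated status of Table~\ref{tab:proofstatus}.
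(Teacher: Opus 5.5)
Your proposal is correct and follows the same control-flow argument as the paper. For (i), line~12 is the only $\textsc{Inconsistent}$ exit, and it is guarded both by $i,j \in \textit{users}(t)$ and by $f(i,t) \neq f(j,t)$; for (ii), reaching line~17 means the exhaustive enumeration of pairs $i<j$ in $\textit{users}(t)$, over all $t \in \Sigma$, found no mismatch. Your extra steps (well-definedness of $f$ on the checked pairs via $t \in \textit{terms}(G_i) \Leftrightarrow \text{uses}(G_i,t)$, the loop invariant, and the explicit $i>j$ and $i=j$ cases) make rigorous what the paper leaves implicit rather than changing the route. One small point: the hypothesis $\Sigma = \bigcup_i \textit{terms}(G_i)$ is not actually needed for (ii), since its conclusion quantifies only over $t \in \Sigma$.
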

\begin{proof}
\textit{(i):} The algorithm returns $\textsc{Inconsistent}$ only at line~12, which is reached only when $f(i,t) \neq f(j,t)$ for $(i,j)$ with $i < j$ and $t \in \Sigma$ such that both $i, j \in \textit{users}(t)$, i.e., $\text{uses}(G_i, t) \wedge \text{uses}(G_j, t)$.

\textit{(ii):} Algorithm~\ref{alg:consistency} enumerates all terms $t \in \Sigma$ (line~9) and all pairs $(i,j) \in \textit{users}(t) \times \textit{users}(t)$ with $i < j$ (line~10). This enumeration is exhaustive over all pairs using each shared term. If any pair has $f(i,t) \neq f(j,t)$, line~11 detects it and line~12 returns $\textsc{Inconsistent}$. Therefore if the algorithm reaches line~17 and returns $\textsc{Consistent}$, no such inconsistent pair exists.
\end{proof}

\noindent\textbf{Complexity.} $O(p^2 \cdot |\Sigma|)$ for the pairwise consistency checks (lines~9--15). In practice, the extraction phase (lines~4--8) requires $p \times |\Sigma|$ calls to \texttt{ExtractInterpretation}, which lies below the trust boundary. For large portfolios (e.g., $p = 5{,}000$, $|\Sigma| = 200$), the $10^6$ LLM invocations may dominate wall-clock time despite the formal complexity being dominated by the $O(p^2 \cdot |\Sigma|)$ comparison step.

\noindent\textbf{Case study: cross-claim consistency for a two-patent portfolio.} Consider a portfolio of two patents in the memory module technology area. Patent $P_1$ is the memory buffer module claim of Section~\ref{sec:implementation} and uses the terms ``rank translation'' (in $C_{12}$), ``first number of ranks'' (in $C_3$), and ``DDR memory devices'' (in $C_2$). Patent $P_2$ is a related patent on memory controllers and uses ``rank translation'' (in $D_4$), ``DDR memory devices'' (in $D_2$), and ``registered memory module'' (in $D_7$). The shared vocabulary is therefore $\Sigma = \{\text{``rank translation''}, \text{``DDR memory devices''}\}$; the terms ``first number of ranks'' and ``registered memory module'' appear in only one patent each and require no consistency check.

The ML layer (LLM-assisted \texttt{ExtractInterpretation}) produces interpretation strings $f(i, t)$ for each shared term in each patent. For $P_1$: $f(P_1, \text{``rank translation''}) = $ ``any mechanism mapping a second rank count to a first rank count, implemented in the rank interposition circuit''; $f(P_1, \text{``DDR memory devices''}) = $ ``double-data-rate SDRAM devices conforming to JEDEC standards.'' For $P_2$: $f(P_2, \text{``rank translation''}) = $ ``a firmware-controlled remapping of logical rank addresses to physical rank addresses''; $f(P_2, \text{``DDR memory devices''}) = $ ``double-data-rate SDRAM devices conforming to JEDEC standards.''

Running Algorithm~\ref{alg:consistency}: $\textit{users}(\text{``rank translation''}) = \{P_1, P_2\}$ and $\textit{users}(\text{``DDR memory devices''}) = \{P_1, P_2\}$. Only the pair $(P_1, P_2)$ is checked for each term. When the loop reaches ``rank translation,'' $f(P_1, \text{``rank translation''}) \neq f(P_2, \text{``rank translation''})$, so line~12 returns $\textsc{Inconsistent}(P_1, P_2, \text{``rank translation''}, f(P_1, \cdot), f(P_2, \cdot))$ and the algorithm halts. The ``DDR memory devices'' pair, which happens to be consistent in this portfolio, is either examined and passes (if iterated first) or never examined (if iterated after the inconsistent term); the final result is order-independent. The returned counterexample is itself the certificate: a human attorney can independently confirm that the two interpretation strings are unequal. Legally, the inconsistency matters because a narrow construction of ``rank translation'' in $P_2$ litigation (requiring firmware control) could be imported by prosecution history into $P_1$, narrowing $C_{12}$.

A contrast case: suppose $P_2$'s interpretation is revised to ``any mechanism mapping a second rank count to a first rank count, implemented in dedicated circuitry,'' producing string equality with $f(P_1, \text{``rank translation''})$ after normalization. Both pairs now compare equal, and the algorithm returns $(\textsc{Consistent}, \Pi)$ with $\Pi$ witnessing that every shared-term pair was checked. Note that the equality test on line~11 is syntactic string equality after normalization; semantic equivalence of interpretations remains the responsibility of the ML layer (below the trust boundary).

\subsection{Algorithm 5: Doctrine of Equivalents}

Algorithm~\ref{alg:doe} extends coverage analysis to the doctrine of equivalents, which permits a finding of infringement even when the accused product does not literally satisfy every claim limitation, provided each non-literal element is equivalent under the function-way-result test of \textit{Graver Tank} subject to the prosecution history estoppel constraints of \textit{Festo}. The algorithm has two phases. Phase~1 (lines~1--18) classifies each limitation into one of three categories from the $\textsc{MatchType}$ inductive type (Definition~\ref{def:matchtype}): $\textsc{Literal}$ if $\beta(v) \geq \theta_{\text{lit}}$; $\textsc{Equivalent}$ if $\beta(v) < \theta_{\text{lit}}$ but the best non-estopped evidence $s^*(v)$ (Definition~\ref{def:bestevidence}) passes all three prongs of the function-way-result test at threshold~$\theta_{\text{eq}}$ and is not vitiated; or $\textsc{NoMatch}$ otherwise. The estoppel filter (line~6) removes evidence within $\textit{EstoppedRegion}(v, \textit{PH})$, enforcing the \textit{Festo} presumption. Phase~2 (lines~20--28) propagates dependency constraints on the DOE-adjusted scores in topological order, mirroring Algorithm~\ref{alg:dagcov}. The resulting $W_{\text{DOE}}$ is not directly comparable to $W_{\text{cov}}$, as equivalence can both increase and decrease coverage relative to the literal analysis (Definition~\ref{def:doeeff}). The formal guarantees are conditional in a stronger sense than for Algorithms~\ref{alg:dagcov}--\ref{alg:consistency}: the certificate certifies mathematical correctness of classifications given ML-computed scores, but not the legal correctness of those scores as measures of function, way, and result. Theorem~\ref{thm:doe} establishes classification correctness, dependency enforcement, and coverage bounds.

\begin{algorithm}[H]
\caption{DoctrineOfEquivalents \hfill {\small\normalfont\itshape Correctness: Theorem~\ref{thm:doe} (informal sketch; stronger conditionality, see \S\ref{sec:algorithms})}}
\label{alg:doe}
\begin{algorithmic}[1]
\Require Claim DAG $G = (V, E, \tau, w)$, Evidence $S$, thresholds $\theta_{\text{lit}}, \theta_{\text{eq}}$ with $\theta_{\text{eq}} < \theta_{\text{lit}}$, propagation threshold $\theta_{\text{prop}} \in \{\theta_{\text{lit}}, \theta_{\text{eq}}\}$ (default $\theta_{\text{prop}} := \theta_{\text{lit}}$), prosecution history $\textit{PH}$, parsing functions $\textit{func}, \textit{way}, \textit{res}$, discount factor $\delta$
\Ensure Match classification $\textit{match} : V \to \textsc{MatchType}$, DOE-adjusted coverage $W_{\text{DOE}}$, proof certificate $\Pi$
\For{each $v \in V$}
  \State $\beta(v) \gets \max_{s \in S} M(v, s)$
  \If{$\beta(v) \geq \theta_{\text{lit}}$}
    \State $\textit{match}(v) \gets \textsc{Literal}(\beta(v))$; $\textit{eff}_{\text{DOE}}(v) \gets \beta(v)$
  \Else
    \State $S_v \gets S \setminus \textit{EstoppedRegion}(v, \textit{PH})$ \Comment{Def.~\ref{def:estoppel}}
    \If{$S_v = \emptyset$} $\textit{match}(v) \gets \textsc{NoMatch}$; $\textit{eff}_{\text{DOE}}(v) \gets \bot\; (= 0)$; \textbf{continue}
    \EndIf
    \State $s^* \gets \arg\max_{s \in S_v} M(v, s)$ \Comment{canonical tie-breaking}
    \State $f_{\text{sim}} \gets M(\textit{func}(v), s^*)$; $w_{\text{sim}} \gets M(\textit{way}(v), s^*)$; $r_{\text{sim}} \gets M(\textit{res}(v), s^*)$ \Comment{Def.~\ref{def:fwr}}
    \If{$f_{\text{sim}} \geq \theta_{\text{eq}}$ \textbf{and} $w_{\text{sim}} \geq \theta_{\text{eq}}$ \textbf{and} $r_{\text{sim}} \geq \theta_{\text{eq}}$ \textbf{and} $\neg\textit{vitiated}(v, s^*)$} \Comment{$s^*$ and $f_{\text{sim}}, w_{\text{sim}}, r_{\text{sim}}$ are well-defined on this branch: the empty-$S_v$ case continues at line~7.}
      \State $\textit{match}(v) \gets \textsc{Equivalent}(f_{\text{sim}}, w_{\text{sim}}, r_{\text{sim}})$
      \State $\textit{eff}_{\text{DOE}}(v) \gets \delta \cdot \min(f_{\text{sim}}, w_{\text{sim}}, r_{\text{sim}})$ \Comment{Def.~\ref{def:doeeff}}
    \Else
      \State $\textit{match}(v) \gets \textsc{NoMatch}$; $\textit{eff}_{\text{DOE}}(v) \gets \bot$
    \EndIf
  \EndIf
\EndFor
\State \Comment{\textit{Phase 2: Dependency enforcement in topological order}}
\State $\textit{levels} \gets \text{TopologicalSort}(G)$
\For{level $\ell = 0$ to $\max\_\text{level}$}
  \For{each $v$ at level $\ell$}
    \State $\textit{deps\_met\_DOE}(v) \gets \forall u \in \textit{deps}(v): \textit{eff}_{\text{DOE}}(u) \geq \theta_{\text{prop}}$ \Comment{Def.~\ref{def:doeeff}}
    \If{$\neg\textit{deps\_met\_DOE}(v)$}
      \State $\textit{eff}_{\text{DOE}}(v) \gets 0$
    \EndIf
  \EndFor
\EndFor
\State $W_{\text{DOE}} \gets \frac{\sum_{v} w(\tau(v)) \cdot \textit{eff}_{\text{DOE}}(v)}{\sum_{v} w(\tau(v))} \times 100$ \Comment{Def.~\ref{def:doeeff}}
\State $\Pi \gets \text{ProveDOE}(\textit{match}, \textit{eff}_{\text{DOE}}, W_{\text{DOE}}, G, S, \textit{PH})$
\State \Return $(\textit{match}, W_{\text{DOE}}, \Pi)$
\end{algorithmic}
\end{algorithm}

\begin{theorem}[Correctness of Algorithm~\ref{alg:doe}]
\label{thm:doe}
For all valid inputs $(G, S, \theta_{\text{lit}}, \theta_{\text{eq}}, \theta_{\text{prop}}, \textit{PH}, \delta)$ where $G$ satisfies \texttt{dag\_acyclic}, $S \neq \emptyset$, $\theta_{\text{eq}} < \theta_{\text{lit}}$, and $\theta_{\text{prop}} \in \{\theta_{\text{lit}}, \theta_{\text{eq}}\}$, Algorithm~\ref{alg:doe} computes $(\textit{match}, W_{\text{DOE}}, \Pi)$ satisfying:
\begin{enumerate}[label=(\roman*), leftmargin=2.5em, topsep=2pt, itemsep=1pt]
\item \textit{Classification correctness:} For each $v \in V$, $\textit{match}(v) \in \textsc{MatchType}$ satisfies the proof obligations of Definition~\ref{def:matchtype}.
\item \textit{Dependency enforcement:} $\textit{eff}_{\text{DOE}}(v) = 0$ for all $v$ where $\neg\textit{deps\_met\_DOE}(v)$.
\item \textit{Bounds:} $0 \leq W_{\text{DOE}} \leq 100$.
\item \textit{Score relationship (summary):} $\textit{eff}_{\text{DOE}}$ and $\textit{eff}$ can differ, and consequently $W_{\text{DOE}}$ and $W_{\text{cov}}$ are not directly comparable in general. The detailed per-element analysis is stated separately as Proposition~\ref{prop:doe-cov} below to avoid embedding a multi-paragraph analysis inside a theorem statement.
\end{enumerate}
\end{theorem}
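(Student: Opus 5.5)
I will prove the four clauses in the order (i), (ii), (iii), (iv), mirroring the proof of Theorem~\ref{thm:algcorrect}. The argument proceeds by reading off the control flow of Algorithm~\ref{alg:doe}. The one non-routine ingredient is an invariant that Phase~2 only ever lowers scores to $0$ and never changes a classification.

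\textbf{Classification correctness (i).} I case-split on the Phase~1 branch taken for each $v$.
\begin{itemize}[leftmargin=2em, topsep=2pt, itemsep=1pt]
\item If line~3 fires, then $\beta(v) \geq \theta_{\text{lit}}$, which is exactly the obligation of $\textsc{Literal}$.
\item Otherwise $\beta(v) < \theta_{\text{lit}}$. If $S_v = \emptyset$, the algorithm assigns $\textsc{NoMatch}$, which carries no obligations.
\item If $S_v \neq \emptyset$, then $s^*$ at line~8 coincides with $s^*(v)$ of Definition~\ref{def:bestevidence}, since both use the same argmax over $S \setminus \textit{EstoppedRegion}(v,\textit{PH})$ with canonical tie-breaking.
\item The guard at line~10 is literally the conjunction of the three prong conditions at $\theta_{\text{eq}}$ and $\neg\textit{vitiated}(v,s^*)$. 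Together with $\beta(v) < \theta_{\text{lit}}$ and the fact that $s^*(v)$ is defined, this gives every obligation of $\textsc{Equivalent}$; the non-estoppel condition holds automatically because $s^* \in S_v$.
\item The else branch yields $\textsc{NoMatch}$.
\end{itemize}
Phase~2 never reassigns $\textit{match}$, so the classification is final.

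\textbf{Dependency enforcement (ii).} I will argue as in Theorem~\ref{thm:algcorrect}(ii). The topological sort, well-defined by \texttt{dag\_acyclic} (via Lemma~\ref{lem:deporderwf}), processes every $u \in \textit{deps}(v)$ before $v$. Hence, when line~23 evaluates $\textit{deps\_met\_DOE}(v)$, the values $\textit{eff}_{\text{DOE}}(u)$ are final. Lines~24--25 then set $\textit{eff}_{\text{DOE}}(v)=0$ whenever the predicate fails.

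The subtle point is that the predicate in Definition~\ref{def:doeeff} refers to the final $\textit{eff}_{\text{DOE}}$ values. I therefore state, as an induction on the topological order, that after $v$ is processed, $\textit{eff}_{\text{DOE}}(v)$ equals the Definition~\ref{def:doeeff} value. The step uses the induction hypothesis on the dependencies of $v$ plus the observation that each Phase~1 value matches the corresponding case of the definition. This is the step I expect to need the most care: the algorithm overwrites a Phase~1 value in place, while the definition is a single case split. I will handle it by noting that the zeroing case of the definition takes precedence over the others, exactly as the in-place overwrite does.

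\textbf{Bounds (iii).} I first show $0 \le \textit{eff}_{\text{DOE}}(v) \le 1$ for every $v$.
\begin{itemize}[leftmargin=2em, topsep=2pt, itemsep=1pt]
\item The value $\beta(v)$ lies in $[0,1]$ by boundedness of $M$ (Definition~\ref{def:matchscore}).
\item The value $\delta\min(f_{\text{sim}},w_{\text{sim}},r_{\text{sim}})$ lies in $[0,1]$ because each prong is an $M$-value in $[0,1]$ and $\delta \in (0,1]$.
\item The value $0$ is trivially in range.
\end{itemize}
Phase~2 only replaces values by $0$, so the bound is preserved. The convex-combination argument of Theorem~\ref{thm:algcorrect}(i) then applies verbatim, since every weight satisfies $w(\tau(v))>0$ and the denominator is positive (Definition~\ref{def:claimdag}).

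\textbf{Score relationship (iv).} Clause~(iv) is a non-comparability statement, so I will discharge it by exhibiting witnesses in both directions.
\begin{itemize}[leftmargin=2em, topsep=2pt, itemsep=1pt]
\item \emph{DOE lowers coverage.} Take a single node with $\beta(v) \in [\theta_{\text{eq}},\theta_{\text{lit}})$ and no dependencies, chosen so that the FWR test fails. Then $\textit{eff}(v)=\beta(v)>0$ but $\textit{eff}_{\text{DOE}}(v)=0$.
\item \emph{DOE raises coverage.} Take a node whose FWR prongs pass with $\delta\min(f_{\text{sim}},w_{\text{sim}},r_{\text{sim}}) > \beta(v)$.
\end{itemize}
The detailed per-element characterization is deferred to Proposition~\ref{prop:doe-cov}, as the statement indicates.
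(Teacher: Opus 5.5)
Your proposal is correct and follows essentially the paper's route: Phase~1 case analysis for (i), the topological-order argument with the line~25 zeroing for (ii), $\textit{eff}_{\text{DOE}}(v)\in[0,1]$ plus the convex-combination argument of Theorem~\ref{thm:algcorrect}(i) for (iii), and Proposition~\ref{prop:doe-cov} for (iv); your induction against Definition~\ref{def:doeeff} and your explicit witnesses in both directions for (iv) are, if anything, more careful than the paper. Two small points: the argmax and the FWR guard are at lines~9 and~11, not 8 and~10 (the end-if after line~7 is numbered), and your ``DOE lowers coverage'' witness needs a non-preamble node or some estoppel, because for a preamble node $\textit{func}(v)=\textit{way}(v)=\textit{res}(v)=v$, so with $S_v=S$ every prong equals $\beta(v)\ge\theta_{\text{eq}}$ and the FWR test cannot fail.
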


\begin{proposition}[Relationship between $W_{\text{DOE}}$ and $W_{\text{cov}}$]
\label{prop:doe-cov}
Under the hypotheses of Theorem~\ref{thm:doe}, for each $v \in V$ the following element-level relationships hold between Algorithm~\ref{alg:doe}'s $\textit{eff}_{\text{DOE}}(v)$ and Algorithm~\ref{alg:dagcov}'s $\textit{eff}(v)$:

\smallskip\noindent\textit{Literal elements} ($\beta(v) \geq \theta_{\text{lit}}$): Phase~1 assigns $\textit{eff}_{\text{DOE}}(v) = \beta(v)$; Phase~2 retains this score when $\textit{deps\_met\_DOE}(v)$ holds under $\theta_{\text{prop}}$ and zeros it otherwise. Hence $\textit{eff}_{\text{DOE}}(v) \in \{\beta(v), 0\}$, and $\textit{match}(v) = \textsc{Literal}$ can coexist with $\textit{eff}_{\text{DOE}}(v) = 0$ when a transitive dependency fails the Phase~2 check. The equality $\textit{eff}_{\text{DOE}}(v) = \textit{eff}(v)$ is not guaranteed even for Literal elements: Algorithm~\ref{alg:dagcov} evaluates $\textit{deps\_met}(v)$ using $\textit{eff}(u) \in \{\beta(u), 0\}$ at each dependency, while Algorithm~\ref{alg:doe} evaluates $\textit{deps\_met\_DOE}(v)$ using $\textit{eff}_{\text{DOE}}(u)$, which for a non-Literal $u$ with a passing FWR test equals $\delta \cdot \min(f_{\text{sim}}, w_{\text{sim}}, r_{\text{sim}}) \neq \beta(u)$ in general. The two analyses may therefore disagree on dependency satisfaction, and $\textit{eff}_{\text{DOE}}(v)$ and $\textit{eff}(v)$ can differ accordingly; additional divergence arises when $\theta_{\text{prop}} \neq \theta$.

\smallskip\noindent\textit{Non-Literal elements} ($\beta(v) < \theta_{\text{lit}}$): Phase~1 sets $\textit{eff}_{\text{DOE}}(v) = \delta \cdot \min(f_{\text{sim}}, w_{\text{sim}}, r_{\text{sim}})$ when FWR passes and $\neg\textit{vitiated}$, else $0$; this can be greater than, equal to, or less than $\textit{eff}(v) = \beta(v)$ (under $\textit{deps\_met}(v)$) or divergent from $\textit{eff}(v) = 0$ otherwise. Estoppel filtering can force $\textit{eff}_{\text{DOE}}(v) = 0$ even when $\textit{eff}(v) > 0$.

\smallskip\noindent Consequently $W_{\text{DOE}}$ and $W_{\text{cov}}$ are not directly comparable in general (see Definition~\ref{def:doeeff}).
\end{proposition}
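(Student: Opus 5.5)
The plan is a case analysis on the Phase~1 branch taken for $v$, followed by a trace of what Phase~2 of Algorithm~\ref{alg:doe} does to the Phase~1 value. I would fix $v$ and write $e_1(v)$ for the value assigned in lines~1--18 and $\textit{eff}_{\text{DOE}}(v)$ for the final value after lines~20--28. The first step is to observe from lines~23--25 that Phase~2 only ever overwrites with $0$. Hence $\textit{eff}_{\text{DOE}}(v) \in \{e_1(v), 0\}$, with $\textit{eff}_{\text{DOE}}(v) = e_1(v)$ exactly when $\textit{deps\_met\_DOE}(v)$ holds. The topological order guarantees that the $\textit{eff}_{\text{DOE}}(u)$ read at line~23 are already final, by the same argument as Theorem~\ref{thm:algcorrect}(ii). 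Together with Theorem~\ref{thm:algcorrect}(ii)--(iii), which give $\textit{eff}(v) \in \{\beta(v), 0\}$, this reduces every claim to comparing the two dependency predicates and the two Phase~1 values.

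\textbf{Literal case.} Here line~4 gives $e_1(v) = \beta(v)$, so $\textit{eff}_{\text{DOE}}(v) \in \{\beta(v), 0\}$ immediately. Coexistence of $\textsc{Literal}$ with $\textit{eff}_{\text{DOE}}(v)=0$ follows because $\textit{match}(v)$ is never rewritten in Phase~2. The claim that equality with $\textit{eff}(v)$ is \emph{not guaranteed} is an existence statement, so I would prove it with a two-node witness in the style of the $E_1 \to E_2$ miniature example. Take $u = E_1$ non-Literal with FWR passing and $\delta\min(f_{\text{sim}},w_{\text{sim}},r_{\text{sim}}) < \theta_{\text{prop}} = \theta \le \beta(u)$. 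This requires $\theta < \theta_{\text{lit}}$, or else I swap roles using the other threshold option. Take $v = E_2$ Literal. Then $\textit{deps\_met}(v)$ holds but $\textit{deps\_met\_DOE}(v)$ fails, giving $\textit{eff}(v)=\beta(v) \neq 0 = \textit{eff}_{\text{DOE}}(v)$. A second witness would vary only $\theta_{\text{prop}}$ versus $\theta$, covering the ``additional divergence'' clause.

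\textbf{Non-Literal case.} Lines~6--16 give $e_1(v) = \delta\min(f_{\text{sim}},w_{\text{sim}},r_{\text{sim}})$ on the FWR-pass, non-vitiated branch and $0$ otherwise, including the $S_v=\emptyset$ branch at line~7. The trichotomy (greater, equal, less than $\beta(v)$) is again existential. I would realize it by choosing the inputs $f_{\text{sim}},w_{\text{sim}},r_{\text{sim}}$ freely. This is legitimate since they are unconstrained inputs below the trust boundary, related to $\beta(v)=M(v,\cdot)$ only through boundedness. The estoppel clause would use $\textit{EstoppedRegion}(v,\textit{PH}) = S$, which forces $S_v = \emptyset$ and hence $e_1(v) = 0$, while $\textit{eff}(v) = \beta(v) > 0$ whenever $\textit{deps\_met}(v)$ holds and $\beta(v)>0$.

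Finally, non-comparability of $W_{\text{DOE}}$ and $W_{\text{cov}}$ would follow by feeding these witnesses into the two weighted sums. With positive weights, one witness gives $W_{\text{DOE}} < W_{\text{cov}}$ and another gives the reverse, so no inequality holds in general. I expect the main obstacle to be producing witnesses consistent with all of the standing constraints at once: $\theta_{\text{eq}} < \theta_{\text{lit}}$, $\theta_{\text{vit}} < \theta_{\text{eq}}$, $\theta_{\text{prop}} \in \{\theta_{\text{lit}},\theta_{\text{eq}}\}$, and $\delta \in (0,1]$. I would settle this first by fixing concrete numbers (e.g.\ $\theta = \theta_{\text{prop}} = \theta_{\text{lit}} = 0.65$, $\theta_{\text{eq}} = 0.5$) and checking each branch of lines~3--16 explicitly.
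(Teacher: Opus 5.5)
Your proposal is correct and follows the same skeleton as the paper's proof. The paper also splits on the Phase~1 branch, observes that Phase~2 only ever overwrites with $0$ (so $\textit{eff}_{\text{DOE}}(v)$ is either the Phase~1 value or $0$), and compares this with $\textit{eff}(v) \in \{\beta(v), 0\}$ from Theorem~\ref{thm:algcorrect}(ii)--(iii).

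The difference is in how the existential claims are handled. The paper stops at a structural remark: $\textit{deps\_met}$ and $\textit{deps\_met\_DOE}$ have the same form but use different score functions and thresholds, and $\delta \cdot \min(f_{\text{sim}}, w_{\text{sim}}, r_{\text{sim}})$ is unconstrained relative to $\beta(v)$. It leaves the ``can differ'' claims, and the non-comparability of $W_{\text{DOE}}$ and $W_{\text{cov}}$, to Definition~\ref{def:doeeff} and the filter case study. You construct explicit witnesses instead, which actually proves the existential content at little extra cost.

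Three points to check when you instantiate:
\begin{itemize}
\item In your first Literal witness, $\theta_{\text{prop}} = \theta < \theta_{\text{lit}}$ forces $\theta_{\text{prop}} = \theta_{\text{eq}}$. A passing FWR test gives $\min(f_{\text{sim}}, w_{\text{sim}}, r_{\text{sim}}) \geq \theta_{\text{eq}}$, so $\delta \cdot \min(\cdot) < \theta_{\text{prop}}$ requires $\delta < 1$. This is admissible, but it is not the default.
\item With your stated numbers ($\theta = \theta_{\text{prop}} = \theta_{\text{lit}}$), the first witness is unavailable, so you need the swapped one. Take $\delta \cdot \min(\cdot) \geq \theta_{\text{lit}} > \beta(u)$. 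Then $\textit{deps\_met}(v)$ fails while $\textit{deps\_met\_DOE}(v)$ holds, giving $\textit{eff}_{\text{DOE}}(v) = \beta(v) \neq 0 = \textit{eff}(v)$, and this works even with $\delta = 1$.
\item Your estoppel witness $\textit{EstoppedRegion}(v,\textit{PH}) = S$ is realizable, for example by a $\textit{PH}_{\text{arg}}$ entry for $v$ with $\textit{scope}_k = \Phi_v$.
\end{itemize}

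For the $W$-level conclusion you can simply reuse the paper's $E_1 \to E_2$ filter example, which already satisfies every standing constraint. It gives $W_{\text{DOE}} \approx 28.3\% < 55.7\% \approx W_{\text{cov}}$ in the estopped, FWR-failure case and $68.3\% > 55.7\%$ in the contrast case.
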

\begin{proof}
\textit{(i)} Phase~1 (lines~1--18) classifies each element. For $\textsc{Literal}$: $\beta(v) \geq \theta_{\text{lit}}$ is checked at line~3. For $\textsc{Equivalent}$: all three prongs $\geq \theta_{\text{eq}}$ and $\neg\textit{vitiated}$ are checked at line~11; $s^*$ is selected from $S_v = S \setminus \textit{EstoppedRegion}(v, \textit{PH})$, satisfying the non-estoppel condition by construction. For $\textsc{NoMatch}$: the else branch requires no proof obligations.

\textit{(ii)} Phase~2 (lines~20--28) processes nodes in topological order, zeroing $\textit{eff}_{\text{DOE}}(v)$ whenever any dependency $u$ has $\textit{eff}_{\text{DOE}}(u) < \theta_{\text{prop}}$, analogous to Theorem~\ref{thm:algcorrect}(ii). The argument is independent of which of $\{\theta_{\text{lit}}, \theta_{\text{eq}}\}$ is supplied as $\theta_{\text{prop}}$: only the topological-order invariant and the line~25 zeroing rule are used.

\textit{(iii)} Since $\textit{eff}_{\text{DOE}}(v) \in [0,1]$ for all $v$ (bounded by $\beta(v) \leq 1$ for literals and $\delta \cdot \min(\ldots) \leq 1$ for equivalents) and weights are positive, the argument follows as in Theorem~\ref{thm:algcorrect}(i).

\textit{(iv)} The summary statement follows from the more detailed Proposition~\ref{prop:doe-cov} below, whose proof is given inline with that proposition.
\end{proof}

\begin{proof}[Proof of Proposition~\ref{prop:doe-cov}]
\textit{Literal case.} Phase~1 line~4 sets $\textit{eff}_{\text{DOE}}(v) \leftarrow \beta(v)$ when $\beta(v) \geq \theta_{\text{lit}}$. Phase~2 line~25 sets $\textit{eff}_{\text{DOE}}(v) \leftarrow 0$ iff $\neg\textit{deps\_met\_DOE}(v)$; otherwise the Phase~1 value is retained. So $\textit{eff}_{\text{DOE}}(v) \in \{\beta(v), 0\}$. The standard analysis similarly yields $\textit{eff}(v) \in \{\beta(v), 0\}$ depending on $\textit{deps\_met}(v)$ (Theorem~\ref{thm:algcorrect}(ii)--(iii)). The predicates $\textit{deps\_met}$ and $\textit{deps\_met\_DOE}$ share the same structural form $\forall u \in \textit{deps}(v): (\cdot)(u) \geq (\cdot)$ but use different effective-score functions and thresholds, so they can diverge as stated.

\textit{Non-Literal case.} When $\beta(v) < \theta_{\text{lit}}$, Phase~1 assigns $\textit{eff}_{\text{DOE}}(v) = \delta \cdot \min(f_{\text{sim}}, w_{\text{sim}}, r_{\text{sim}})$ at line~13 if the FWR test passes and $\neg\textit{vitiated}(v, s^*)$; otherwise $\textit{eff}_{\text{DOE}}(v) = 0$ via line~7 (when $S_v = \emptyset$) or line~15 (when FWR fails or $\textit{vitiated}$). The arithmetic $\delta \cdot \min(\ldots)$ is unconstrained relative to $\beta(v)$, so $\textit{eff}_{\text{DOE}}(v)$ can lie above, below, or equal to $\textit{eff}(v) = \beta(v)$ (under $\textit{deps\_met}(v)$).
\end{proof}

\noindent\textbf{Note on Phase~2 threshold interaction.} When $\theta_{\text{prop}} = \theta_{\text{lit}}$ in Phase~2 (line~23), an $\textsc{Equivalent}$ match at node $u$ with $\textit{eff}_{\text{DOE}}(u) = \delta \cdot \min(f_{\text{sim}}, w_{\text{sim}}, r_{\text{sim}})$ may fail the dependency check $\textit{eff}_{\text{DOE}}(u) \geq \theta_{\text{lit}}$ even though $u$ passed the Phase~1 FWR test at threshold $\theta_{\text{eq}} < \theta_{\text{lit}}$. For example, with $\theta_{\text{eq}} = 0.50$, $\theta_{\text{lit}} = 0.65$, and $\delta = 1$, a node with $\min(f_{\text{sim}}, w_{\text{sim}}, r_{\text{sim}}) = 0.55$ passes classification as $\textsc{Equivalent}$ but yields $\textit{eff}_{\text{DOE}}(u) = 0.55 < 0.65 = \theta_{\text{lit}}$, zeroing its dependents. Consequently, $\textsc{Equivalent}$ matches provide propagation benefit only when $\delta \cdot \min(f_{\text{sim}}, w_{\text{sim}}, r_{\text{sim}}) \geq \theta_{\text{lit}}$. The alternative choice $\theta_{\text{prop}} = \theta_{\text{eq}}$ allows all classified $\textsc{Equivalent}$ matches to propagate. This is a defensible configuration that maximizes DOE coverage while remaining mathematically sound, at the cost of weaker dependency enforcement for equivalent-only paths. Practitioners should select $\theta_{\text{prop}}$ based on whether conservative dependency enforcement ($\theta_{\text{prop}} = \theta_{\text{lit}}$) or maximized DOE coverage ($\theta_{\text{prop}} = \theta_{\text{eq}}$) better serves their analysis goals.

\noindent\textbf{Note on \textit{match}$(v)$ vs.\ $\textit{eff}_{\text{DOE}}(v)$.} As established in Proposition~\ref{prop:doe-cov} (Literal case), $\textit{match}(v) = \textsc{Literal}$ and $\textit{eff}_{\text{DOE}}(v) = 0$ can coexist when $\neg\textit{deps\_met\_DOE}(v)$. The classification records the Phase~1 element-level result; the effective score records the Phase~2 dependency-propagated result. Practitioners should use $\textit{eff}_{\text{DOE}}(v)$ for coverage accounting and $\textit{match}(v)$ for element-level classification reporting; the two should not be conflated.

\noindent\textbf{Complexity.} $O(n \cdot m \cdot d)$ for initial score computation ($\beta(v)$ for all $v \in V$), plus $O(|V_{\text{sub}}| \cdot m \cdot d)$ for FWR similarity on the DOE-eligible subset $V_{\text{sub}} = \{v \in V \mid \beta(v) < \theta_{\text{lit}}\}$, plus $O(n \cdot |\textit{PH}|)$ for estoppel filtering, plus $O(n + |E|)$ for DAG propagation of $\textit{eff}_{\text{DOE}}$. The dominant term is $O(n \cdot m \cdot d)$.

The formal encoding in Lean~4 uses an inductive type ordered $\textsc{NoMatch} < \textsc{Equivalent} < \textsc{Literal}$:

\smallskip
\noindent\colorbox{orange!20}{\small\textsc{Lean 4: illustrative, not compiled}}\nopagebreak

\begin{lstlisting}
-- Pseudocode illustration of the MatchType encoding.
-- In the full formalization, score, theta_lit, theta_eq,
-- f_sim, w_sim, r_sim, Sv, v, and s_star are parameters
-- of the inductive type; shown here as free variables
-- for readability.
inductive MatchType where
  | literal   : (h  : score >= theta_lit) -> MatchType
  | equivalent : (hl : score < theta_lit) ->
                  (hf : f_sim >= theta_eq) ->
                  (hw : w_sim >= theta_eq) ->
                  (hr : r_sim >= theta_eq) ->
                  (hne : Sv.Nonempty) ->
                  (hnv : Not (vitiated v s_star)) -> MatchType
  | no_match  : MatchType
\end{lstlisting}
\noindent The \texttt{equivalent} constructor carries \texttt{hl~:~score~<~theta\_lit} to enforce the precedence rule (Definition~\ref{def:matchtype}): literal classification takes priority when $\beta(v) \geq \theta_{\text{lit}}$. The field \texttt{hne~:~Sv.Nonempty} captures the proof obligation that non-estopped evidence exists for $v$ (i.e., $S_v = S \setminus \textit{EstoppedRegion}(v, \textit{PH}) \neq \emptyset$), which is the precondition for $s^*(v)$ to be well-defined (Definition~\ref{def:bestevidence}). The non-estoppel condition on $s^*(v)$ itself is automatically satisfied by the selection from $S_v$.

\noindent\textbf{Scope of formal verification for DOE.} The proof certificate $\Pi$ for Problem~5 provides guarantees that are conditional in a stronger sense than for Problems~1--4. Three layers of conditionality apply:
\begin{itemize}[leftmargin=2em, topsep=2pt, itemsep=1pt]
\item \textit{Score conditionality:} As with Problems~1--4, $f_{\text{sim}}, w_{\text{sim}}, r_{\text{sim}}$ are ML-computed and lie below the trust boundary.
\item \textit{Estoppel conditionality:} $\textit{ER}_{\text{arg}}$ and the Festo rebuttal predicates require natural language interpretation of prosecution history text. These are treated as inputs whose correctness is assumed, not verified.
\item \textit{Legal concept conditionality:} The formal layer verifies that $f_{\text{sim}} \geq \theta_{\text{eq}}$, but it cannot verify that $f_{\text{sim}}$ correctly measures ``substantially the same function'' in the legal sense. This is a semantic legal judgment that resists machine verification.
\end{itemize}
The DOE proof certificate therefore provides a weaker guarantee than those for Problems~1--4: mathematical correctness of classification given inputs, not legal correctness of the classification itself.

\noindent\textbf{Case study: doctrine of equivalents for a signal filter claim.} We reuse the two-element filter claim from the FTO miniature example of Section~\ref{sec:fto}, now with DOE-specific thresholds ($\theta_{\text{lit}} = 0.70$, $\theta_{\text{eq}} = 0.45$) per the recommended $\theta = \theta_{\text{lit}}$ configuration; the claim structure is identical, only the threshold choice differs because DOE is enabled here. The two limitations are $E_1$ (functional, $w = 1.5$) ``a filter element configured to attenuate high-frequency noise'' and $E_2$ (wherein, $w = 3.0$) ``wherein the filter element operates by subtracting a delayed copy of the input signal from the original signal.'' The wherein clause $E_2$ depends on $E_1$, so the DAG is $E_1 \to E_2$ with total weight $1.5 + 3.0 = 4.5$. We also set $\theta_{\text{vit}} = 0.10$, $\delta = 1.0$, and $\theta_{\text{prop}} = \theta_{\text{lit}} = 0.70$ per the default configuration of Definition~\ref{def:doeeff}. For the standard-analysis comparison below, we use the matching $\theta = 0.70$ in Algorithm~\ref{alg:dagcov}. The accused product is a moving-average filter that attenuates high frequencies by averaging recent samples rather than by the subtraction mechanism of $E_2$.

\textit{Phase~1 (classification).} The ML layer computes $\beta(E_1) = 0.85$ and $\beta(E_2) = 0.41$. Since $\beta(E_1) \geq \theta_{\text{lit}}$, $\textit{match}(E_1) = \textsc{Literal}$ and $\textit{eff}_{\text{DOE}}(E_1) = 0.85$. Since $\beta(E_2) < \theta_{\text{lit}}$, $E_2$ enters DOE analysis. Suppose the prosecution history records one amendment to $E_2$: original language ``reduces unwanted signal components'' was narrowed to the current ``subtracts a delayed copy $\ldots$'' language for patentability reasons, placing moving-average and integration-based implementations in the surrendered territory $\Phi_{E_2}^{\text{orig}} \setminus \Phi_{E_2}^{\text{amend}}$. The three Festo rebuttal predicates (unforeseeability, tangentiality, other reason) all evaluate to false, so $\textit{Festo\_rebutted} = \textit{false}$ and the product's moving-average segment is estopped. Assume one non-estopped segment remains: documentation of an unrelated notch-filter stage. The best non-estopped evidence is therefore $s^*(E_2) = $ the notch-filter segment, with $M(E_2, s^*) \approx 0.37 \geq \theta_{\text{vit}} = 0.10$, so $\neg\textit{vitiated}(E_2, s^*)$. The function-way-result test against $s^*$ produces $f_{\text{sim}} = 0.71$, $w_{\text{sim}} = 0.38$, $r_{\text{sim}} = 0.68$. Since $w_{\text{sim}} < \theta_{\text{eq}}$, the FWR test fails, $\textit{match}(E_2) = \textsc{NoMatch}$, and $\textit{eff}_{\text{DOE}}(E_2) = 0$ by Definition~\ref{def:doeeff}.

\textit{Phase~2 (propagation).} In topological order: $E_1$ has no dependencies, so $\textit{deps\_met\_DOE}(E_1) = \textit{true}$ and $\textit{eff}_{\text{DOE}}(E_1) = 0.85$ is retained. For $E_2$, $\textit{deps\_met\_DOE}(E_2)$ requires $\textit{eff}_{\text{DOE}}(E_1) \geq \theta_{\text{prop}} = 0.70$; since $0.85 \geq 0.70$, this holds, but the Phase~1 classification already set $\textit{eff}_{\text{DOE}}(E_2) = 0$. Therefore $W_{\text{DOE}} = (1.5 \cdot 0.85 + 3.0 \cdot 0) / 4.5 \times 100 \approx 28.3\%$.

\textit{Comparison with literal-only coverage.} Under the standard analysis of Definition~\ref{def:bestmatch}, $\textit{eff}(v) = \beta(v)$ whenever $\textit{deps\_met}(v)$ holds; there is no element-level zeroing by the literal threshold. Here $\textit{eff}(E_1) = 0.85$ and $\textit{deps\_met}(E_2)$ holds because $\textit{eff}(E_1) = 0.85 \geq \theta = 0.70$, so $\textit{eff}(E_2) = \beta(E_2) = 0.41$ and $W_{\text{cov}} = (1.5 \cdot 0.85 + 3.0 \cdot 0.41) / 4.5 \times 100 \approx 55.7\%$. This example therefore concretizes Theorem~\ref{thm:doe}(iv): DOE analysis \textit{decreased} coverage from $55.7\%$ to $28.3\%$ because a non-literal element that the standard analysis credits at $\beta(E_2) = 0.41$ is reclassified as $\textsc{NoMatch}$ under DOE once estoppel removes the moving-average evidence and the remaining non-estopped evidence fails the way prong.

\textit{Contrast: when DOE raises coverage, and the Phase~2 threshold subtlety.} Suppose instead the way prong passed with $w_{\text{sim}} = 0.60$ while $f_{\text{sim}}$ and $r_{\text{sim}}$ remain as above. Then $\textit{match}(E_2) = \textsc{Equivalent}$ and Phase~1 assigns $\textit{eff}_{\text{DOE}}(E_2) = \delta \cdot \min(0.71, 0.60, 0.68) = 0.60$. In Phase~2, $E_2$ has no dependents, so no cascade applies and $W_{\text{DOE}} = (1.5 \cdot 0.85 + 3.0 \cdot 0.60) / 4.5 \times 100 \approx 68.3\%$, above the $55.7\%$ literal-only figure. However, if $E_2$ had had a dependent $E_3$, the Phase~2 check at $E_3$ would have evaluated $\textit{eff}_{\text{DOE}}(E_2) = 0.60 < 0.70 = \theta_{\text{prop}}$ and zeroed $E_3$, concretizing the Note on Phase~2 threshold interaction above: an $\textsc{Equivalent}$ classification does not automatically support downstream propagation when $\theta_{\text{prop}} = \theta_{\text{lit}}$.

\textit{What the certificate attests.} The proof certificate $\Pi$ generated by \texttt{ProveDOE} certifies, in the failure case: $\beta(E_2) = 0.41 < \theta_{\text{lit}}$; the estoppel region calculation given the scope classifications of $\Phi_{E_2}^{\text{orig}}, \Phi_{E_2}^{\text{amend}}$; the FWR threshold comparison $w_{\text{sim}} = 0.38 < \theta_{\text{eq}}$; the $\textsc{NoMatch}$ classification discharging the Definition~\ref{def:matchtype} proof obligations; and the bound $0 \leq W_{\text{DOE}} = 28.3\% \leq 100$. What $\Pi$ does \textit{not} certify, per the three-layer conditionality above, is whether $f_{\text{sim}}, w_{\text{sim}}, r_{\text{sim}}$ correctly measure ``substantially the same function, way, and result,'' whether the moving-average segment in fact belongs to the surrendered territory, or whether the Festo rebuttal predicates were correctly evaluated. Those judgments remain below the trust boundary.

\subsection{Algorithm 6: Kleene Fixed-Point Propagation}

Algorithm~\ref{alg:kleene} provides an alternative to the topological-sort approach of Algorithm~\ref{alg:dagcov} by computing propagated effective scores via Kleene fixed-point iteration on the match strength lattice. Rather than processing nodes in a precomputed topological order, it initializes all effective scores to $\bot = 0$ and iteratively applies the propagation function~$F$ (Definition~\ref{def:propfunc}) until convergence.

The convergence guarantee combines two distinct mathematical facts. First, on the idealized lattice $(V \to [0,1])$ with pointwise order, the \textit{Knaster-Tarski fixed-point theorem}~\cite{cousot1979systematic, dang2022faster} guarantees that $F$ has a least fixed point $\textit{eff}^*$ satisfying $F(\textit{eff}^*) = \textit{eff}^*$, because $F$ is monotone (Lemma~\ref{lem:Fmono}). Second, the implementation uses the finite 10001-point discretization of \texttt{MatchStrength} described in Appendix~\ref{app:lean}, on which \textit{Kleene iteration} from $\bot$ converges to the least fixed point in at most $h+1$ steps, where $h$ is the DAG depth. The $h+1$ bound rests on the finiteness of the discretized lattice (so ascending chains terminate), the monotonicity of $F$, and the acyclicity of $G$ (so each node stabilizes at an iteration bounded by its depth). Scott-continuity of $F$ on the idealized $[0,1]$ lattice is neither claimed nor needed: $F$'s threshold comparison at $\theta$ is discontinuous, but the finite discretization makes Kleene iteration converge regardless. The fixed point computes the same threshold-based effective scores as Definition~\ref{def:bestmatch}.

Termination occurs within $h+1$ iterations, where $h$ is the DAG depth (height). The complexity is $O((h{+}1)(n + |E|))$, which is $O(n^2 + n|E|)$ in the worst case, strictly less efficient than Algorithm~\ref{alg:dagcov}'s unconditional $O(n + |E|)$. Algorithm~\ref{alg:kleene} is provided as a theoretically motivated alternative whose fixed-point framing (Knaster-Tarski existence plus finite-lattice Kleene convergence) offers different formal guarantees than the direct topological induction used for Algorithm~\ref{alg:dagcov}. Theorem~\ref{thm:convergence} establishes convergence and the relationship between the fixed point of~$F$ and the meet-based $\textit{claimStrength}$ model (Definition~\ref{def:claimstrength}).

\begin{algorithm}[H]
\caption{KleeneFixedPointPropagation \hfill {\small\normalfont\itshape Convergence: Theorem~\ref{thm:convergence} (informal sketch)}}
\label{alg:kleene}
\begin{algorithmic}[1]
\Require Claim DAG $G$, raw scores $\beta$, threshold $\theta$
\Ensure Effective scores $\textit{eff}$ (the least fixed point of $F$), proof certificate $\Pi$
\State $\textit{eff}_0(v) \gets \bot$ for all $v \in V$
\State $k \gets 0$
\Repeat
  \State $k \gets k + 1$
  \For{each $v \in V$}
    \If{$\forall u \in \textit{deps}(v): \textit{eff}_{k-1}(u) \geq \theta$}
      \State $\textit{eff}_k(v) \gets \beta(v)$
    \Else
      \State $\textit{eff}_k(v) \gets \bot$
    \EndIf
  \EndFor
\Until{$\textit{eff}_k = \textit{eff}_{k-1}$}
\State $\Pi \gets \text{ProveFixedPoint}(F, \textit{eff}_k)$
\State \Return $(\textit{eff}_k, \Pi)$
\end{algorithmic}
\end{algorithm}

\begin{definition}[Propagation Function]
\label{def:propfunc}
The propagation function $F : (V \to \mathbb{L}) \to (V \to \mathbb{L})$ is defined by:
\[
F(\textit{eff})(v) = \begin{cases} \beta(v) & \text{if } \forall u \in \textit{deps}(v):\; \textit{eff}(u) \geq \theta \\ \bot & \text{otherwise} \end{cases}
\]
\end{definition}

\begin{lemma}[Monotonicity of $F$]
\label{lem:Fmono}
$F$ is monotone on the complete lattice $(V \to \mathbb{L})$ with pointwise order: if $\textit{eff}_1 \leq \textit{eff}_2$ pointwise, then $F(\textit{eff}_1) \leq F(\textit{eff}_2)$ pointwise.
\end{lemma}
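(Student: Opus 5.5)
The plan is a direct pointwise argument with no induction, since $F$ consults only the input function's values at the dependencies $u \in \textit{deps}(v)$ and never recurses. Fix $\textit{eff}_1 \leq \textit{eff}_2$ pointwise and an arbitrary node $v \in V$; we must show $F(\textit{eff}_1)(v) \leq F(\textit{eff}_2)(v)$. Following Definition~\ref{def:propfunc}, we split into cases on whether the guard $\forall u \in \textit{deps}(v):\ \textit{eff}(u) \geq \theta$ holds under $\textit{eff}_1$ and under $\textit{eff}_2$. This mirrors the three-case analysis in the proof of Theorem~\ref{thm:algcorrect}(iv), but without the inductive hypothesis, because the ordering $\textit{eff}_1 \leq \textit{eff}_2$ is now given as a hypothesis on all of $V$.

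\textbf{Case 1: the guard fails under $\textit{eff}_1$.} Then $F(\textit{eff}_1)(v) = \bot$. Since $\bot$ is the least element of $\mathbb{L}$, we get $F(\textit{eff}_1)(v) \leq F(\textit{eff}_2)(v)$ whatever value the right-hand side takes. This case also covers the situation where the guard fails under both functions.

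\textbf{Case 2: the guard holds under $\textit{eff}_1$.} For every $u \in \textit{deps}(v)$ we have $\theta \leq \textit{eff}_1(u) \leq \textit{eff}_2(u)$, so transitivity of $\leq$ on $\mathbb{L}$ gives the guard under $\textit{eff}_2$ as well. In the compiled setting this is the \texttt{le\_trans} witness already used for \texttt{p\_lattice}. Hence $F(\textit{eff}_1)(v) = \beta(v) = F(\textit{eff}_2)(v)$, and reflexivity closes the case.

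These two cases are exhaustive, so the two functions are ordered at every $v$, which is exactly the pointwise order on $(V \to \mathbb{L})$. The essential step is the upward-closure observation in Case~2: the guard is an upward-closed predicate of $\textit{eff}$, which rules out the one problematic configuration in which the guard holds under $\textit{eff}_1$ but fails under $\textit{eff}_2$.

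There is no real technical obstacle. The only care needed is to use nothing about $\beta$ beyond $\beta(v) \in \mathbb{L}$; in particular, $\beta(v)$ need not be $\geq \theta$, since Case~2 compares $\beta(v)$ with itself. The argument also uses no continuity, so the discontinuity of the threshold comparison does not matter, which fits the remark preceding Algorithm~\ref{alg:kleene}. Finally, the argument holds verbatim on the finite 10001-point discretization, since it uses only the order axioms and the least element.
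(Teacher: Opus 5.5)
Your proof is correct and matches the paper's argument essentially verbatim: both split on whether the guard holds under $\textit{eff}_1$, use that $\bot$ is the least element when it fails, and use $\theta \leq \textit{eff}_1(u) \leq \textit{eff}_2(u)$ when it holds to get $F(\textit{eff}_1)(v) = \beta(v) = F(\textit{eff}_2)(v)$. One aside is slightly off but harmless: the \texttt{p\_lattice} witness is stated on \texttt{DMatchStrength}, whereas the guard comparisons in \texttt{computeEff} are on rationals, so the relevant \texttt{le\_trans} is at a different type.
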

\begin{proof}
Suppose $\textit{eff}_1(u) \leq \textit{eff}_2(u)$ for all $u$, and consider $F(\textit{eff}_1)(v)$ vs.\ $F(\textit{eff}_2)(v)$. If $\textit{deps\_met}$ fails under $\textit{eff}_1$, then $F(\textit{eff}_1)(v) = \bot \leq F(\textit{eff}_2)(v)$. If $\textit{deps\_met}$ holds under $\textit{eff}_1$, then $\textit{eff}_1(u) \geq \theta$ for all $u \in \textit{deps}(v)$, so $\textit{eff}_2(u) \geq \textit{eff}_1(u) \geq \theta$, meaning $\textit{deps\_met}$ also holds under $\textit{eff}_2$, and $F(\textit{eff}_1)(v) = \beta(v) = F(\textit{eff}_2)(v)$.
\end{proof}

\begin{theorem}[Convergence of Algorithm~\ref{alg:kleene}]
\label{thm:convergence}
Assume $G$ is acyclic (\texttt{dag\_acyclic}). Let $F$ be the propagation function (Definition~\ref{def:propfunc}). Algorithm~\ref{alg:kleene}, starting from $\textit{eff}_0(v) = \bot$ for all $v \in V$, terminates in at most $h+1$ iterations where $h$ is the depth (height) of $G$ (length of the longest path). The result is the least fixed point of $F$, the smallest function $\textit{eff}^* : V \to \mathbb{L}$ satisfying $F(\textit{eff}^*) = \textit{eff}^*$.
\end{theorem}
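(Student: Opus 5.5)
The proof is a stratification argument on the depth of nodes. Define $d(v)$ as the length of the longest dependency chain ending at $v$: $d(v)=0$ if $\textit{deps}(v)=\emptyset$, and otherwise $d(v)=1+\max_{u\in\textit{deps}(v)} d(u)$. This is well defined by well-founded recursion on $\prec$ (Lemma~\ref{lem:deporderwf}), and $d(v)\le h$ for every $v$. Let $\textit{eff}^\dagger$ be the threshold-based effective score of Definition~\ref{def:bestmatch}, evaluated in topological order. Since $\textit{eff}^\dagger(v)$ depends only on $\beta$ and on $\textit{eff}^\dagger$ of its dependencies, it satisfies $F(\textit{eff}^\dagger)=\textit{eff}^\dagger$ directly from Definition~\ref{def:propfunc}.

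The central claim, proved by induction on $k$, is
\[
\textit{eff}_k(v)=\textit{eff}^\dagger(v)\quad\text{whenever } d(v)<k .
\]
For $k=1$, a node with $d(v)=0$ has no dependencies, so $\textit{eff}_1(v)=\beta(v)=\textit{eff}^\dagger(v)$, since the universal condition is vacuous. For the step, take $d(v)<k+1$. Every $u\in\textit{deps}(v)$ then has $d(u)<k$, so $\textit{eff}_k(u)=\textit{eff}^\dagger(u)$ by the induction hypothesis. The guard evaluated for $\textit{eff}_{k+1}(v)$ therefore coincides with the guard defining $\textit{eff}^\dagger(v)$, and the two values agree.

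It follows that $\textit{eff}_{h+1}=\textit{eff}^\dagger$. Applying $F$ once more gives $\textit{eff}_{h+2}=F(\textit{eff}^\dagger)=\textit{eff}^\dagger=\textit{eff}_{h+1}$, so the \textbf{until} test succeeds no later than that comparison. One subtlety is the indexing convention: with the stopping test $\textit{eff}_k=\textit{eff}_{k-1}$, termination is certified at iteration $h+2$ in the worst case, while the fixed point is already reached at iteration $h+1$. I would read ``at most $h+1$ iterations'' as the number of iterations needed to reach the fixed point, and state explicitly that one further confirming application is required to detect it. If the algorithm halts earlier, at some $k$, then $\textit{eff}_k$ is a fixed point of $F$, and the leastness argument below shows it equals $\textit{eff}^\dagger$ in that case as well.

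For leastness, and in fact uniqueness, let $g$ be any fixed point of $F$. By well-founded induction on $\prec$ we get $g=\textit{eff}^\dagger$: if $g(u)=\textit{eff}^\dagger(u)$ for all $u\in\textit{deps}(v)$, then $g(v)=F(g)(v)=\textit{eff}^\dagger(v)$. So on an acyclic $G$ the fixed point is unique, and hence trivially the least.

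As a cross-check, consistent with the Knaster–Tarski framing, Lemma~\ref{lem:Fmono} together with $\textit{eff}_0=\bot$ gives $\textit{eff}_k\le \textit{eff}_{k+1}$, so the Kleene chain is ascending. The argument above does not need this: the uniqueness induction is the substantive step, and the only delicate point is the off-by-one in the iteration count.
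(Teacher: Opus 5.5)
Your proof is correct, and it takes a different route from the paper's. The paper gets leastness from general fixed-point theory. It uses monotonicity of $F$ (Lemma~\ref{lem:Fmono}) and Knaster--Tarski for existence. On the finite discretized lattice, the ascending Kleene chain from $\bot$ must stabilize, and the induction $F^n(\bot) \leq y$ for every fixed point $y$ gives leastness. The $h+1$ bound is then argued separately by a depth-stabilization remark, and the identification with Definition~\ref{def:bestmatch} is a third step. You use only acyclicity instead. The invariant $\textit{eff}_k(v) = \textit{eff}^\dagger(v)$ for $d(v) < k$ gives convergence and the iteration count at once. Well-founded induction on $\prec$ then shows $F$ has exactly one fixed point, so leastness is automatic.

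What your route buys: it needs neither Lemma~\ref{lem:Fmono} nor finiteness of the discretization, so it holds verbatim on the continuous $[0,1]$. It proves the stronger uniqueness statement, and it absorbs the paper's Step~3 for free. What the paper's route buys: the Kleene argument identifies the limit as the least fixed point without using acyclicity. That is the part that would survive if $F$ could have several fixed points (e.g.\ cyclic dependencies), although the depth bound would be lost there anyway. On an acyclic $G$ the Knaster--Tarski machinery is redundant.

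Your off-by-one observation is also right, and the paper's proof glosses over it (``stabilization occurs by iteration $h+1$. The repeat-until condition detects this''). With the test $\textit{eff}_k = \textit{eff}_{k-1}$, the worst case needs $h+2$ passes. In the case study under $I_1$ ($h = 6$, all $\beta \geq \theta$), $C_{13}$ first receives $0.77$ at $k = 7$, so the test first succeeds at $k = 8$. The paper's intermediate claim, that after iteration $k$ all nodes of depth $\leq k$ have stabilized, is likewise off by one when depth starts at $0$. Your invariant $d(v) < k$ is the correct one. Your reading, that the fixed point is reached by iteration $h+1$ and detected one iteration later, is the appropriate repair of the statement.
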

\begin{proof}
\textit{Step~1: Existence and computability.} By Lemma~\ref{lem:Fmono}, $F$ is monotone. On the idealized complete lattice $(V \to [0,1])$, the Knaster-Tarski theorem yields a least fixed point $\textit{eff}^*$ satisfying $F(\textit{eff}^*) = \textit{eff}^*$; monotonicity suffices for existence. On the discretized lattice used in the implementation ($V \to \texttt{MatchStrength}$, finite by Appendix~\ref{app:lean}), Kleene iteration from $\bot$ converges to the least fixed point in finitely many steps: the Kleene chain $\bot \leq F(\bot) \leq F^2(\bot) \leq \cdots$ is non-decreasing by monotonicity and must stabilize because the lattice is finite; once $F(\textit{eff}) = \textit{eff}$, induction on the chain gives $F^n(\bot) \leq y$ for every fixed point $y$, so the stabilized value is the least fixed point on the discretized lattice. Scott-continuity of $F$ on $[0,1]$ is neither claimed nor needed: $F$'s threshold comparison at $\theta$ is discontinuous, and finiteness rather than Scott-continuity is what makes Kleene iteration computationally tractable.

\textit{Step~2: Termination:} Since $F(\textit{eff})(v) \in \{0, \beta(v)\}$ for all $v$, each node can change value from $0$ to $\beta(v)$ at most once across iterations. After iteration $k$, all nodes at DAG depth $\leq k$ whose dependencies are met have stabilized. Since the maximum depth is $h$ (finite by \texttt{dag\_acyclic}), stabilization occurs by iteration $h+1$. The repeat-until condition $\textit{eff}_k = \textit{eff}_{k-1}$ (line~12) detects this.

\textit{Step~3: Relationship to Definition~\ref{def:effcov} and claimStrength:} The fixed point of $F$ computes exactly the threshold-based effective score from Definition~\ref{def:effcov}: a node receives $\beta(v)$ iff all its transitive dependencies meet the threshold, and $0$ otherwise. This follows directly from Step~2: at convergence, leaf nodes have $F(\textit{eff})(v) = \beta(v)$ (vacuously), and each subsequent node receives $\beta(v)$ iff all predecessors have stabilized above $\theta$, matching the structural induction in Definition~\ref{def:effcov}. This is related to but distinct from $\textit{claimStrength}$ (Definition~\ref{def:claimstrength}), which computes the continuous meet along dependency paths. The two models are not equivalent in general. The threshold model zeros out any node whose transitive dependency falls below $\theta$, regardless of that dependency's actual score; the meet model instead propagates the minimum score along dependency paths, which is zero only when a dependency has score~$0$. Concretely: for a node $v$ whose dependency $u$ has $\textit{score}(u) = s_u$ with $0 < s_u < \theta$, the threshold model sets $\textit{eff}(v) = 0$, while the meet model sets $\textit{claimStrength}(\textit{score}, v) = \textit{score}(v) \wedge s_u \wedge \bigwedge_{u' \in \textit{anc}(v) \setminus \{u\}} \textit{score}(u')$, which is strictly positive whenever $\textit{score}(v) > 0$ and every ancestor score is positive. For unsatisfied-threshold dependencies with non-zero scores, the two models therefore produce different values for the same node: zero under the threshold model versus a strictly positive meet under $\textit{claimStrength}$. The two models coincide only in the special cases where $\textit{score}(u) = 0$ for some dependency $u$, or where every dependency satisfies $\textit{score}(u) \geq \theta$ and the meet on the right-hand side already equals $\textit{score}(v)$. Even in the latter case, the fixed point of $F$ returns $\beta(v)$ while $\textit{claimStrength}$ returns the possibly strictly smaller meet when any ancestor has a score below $\textit{score}(v)$.
\end{proof}

\noindent\textbf{Complexity.} $O((h+1) \cdot (n + |E|))$, where $h$ is the DAG depth. For sparse patent claim DAGs with bounded depth (the typical case, as in the 15-node memory-module case study with $h = 6$), this is effectively $O(n + |E|)$, only slightly worse than Algorithm~\ref{alg:dagcov}'s unconditional $O(n + |E|)$. In the worst case (e.g., a chain DAG with $h = n - 1$) the bound degrades to $O(n^2 + n|E|)$, which is where Algorithm~\ref{alg:dagcov}'s unconditional $O(n + |E|)$ strictly dominates. Algorithm~\ref{alg:kleene} is provided as a theoretically motivated alternative with different convergence guarantees (Theorem~\ref{thm:convergence}), and the typical-case gap between the two is small.

\section{IP Use Cases and Applications}
\label{sec:usecases}

\begin{figure}[H]
\centering
\includegraphics[width=\textwidth]{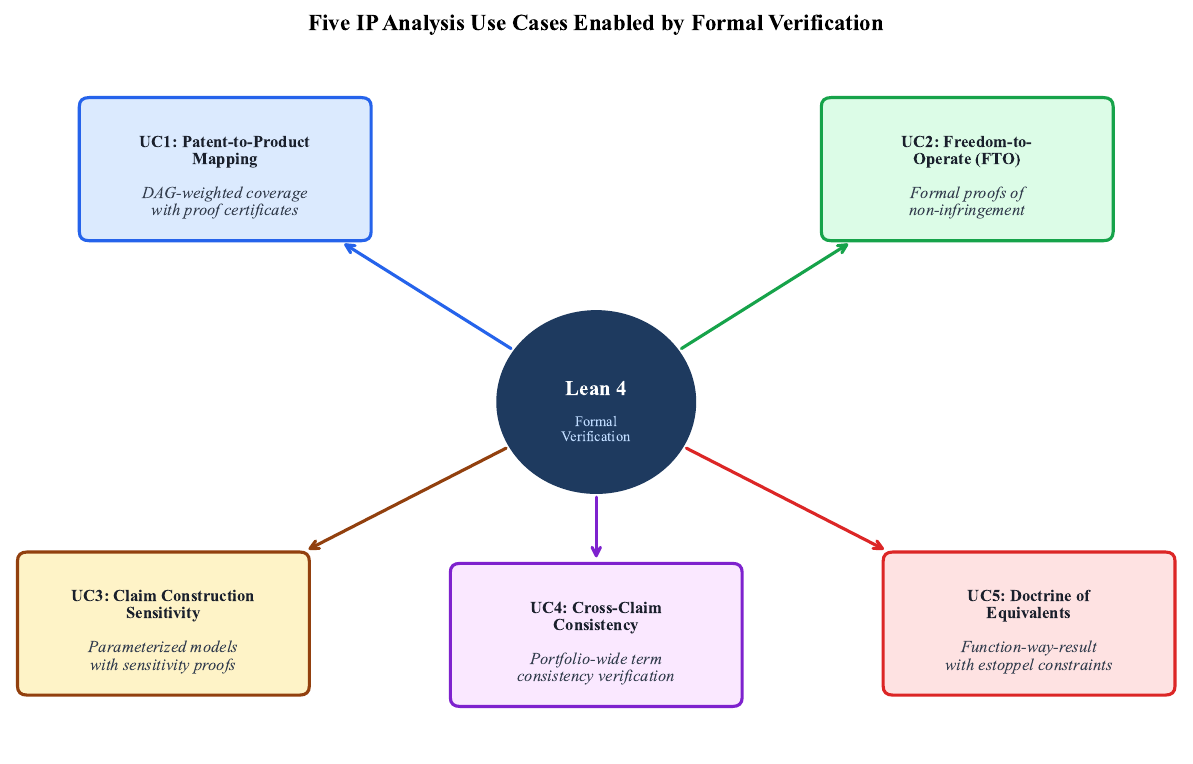}
\caption{Five IP analysis use cases enabled by formal verification. For UC1 (Algorithm~\ref{alg:dagcov-given}), a compiled Lean~4 generator produces certificates accepted under Definition~\ref{def:validcert} (Proposition~\ref{prop:gensound}). For UC2--UC5, the prototype generates candidate certificates validated by independent kernel type-checking and a \texttt{sorry}-free axiom audit.}
\label{fig:usecases}
\end{figure}

\noindent This section (see also Figure~\ref{fig:usecases}) serves as a practitioner-facing index of how the six formal algorithms in Section~\ref{sec:algorithms} address the five IP analysis problems formulated in Section~\ref{sec:formulation}; full technical development, correctness arguments, and case-study numerics are in Section~\ref{sec:algorithms} and Section~\ref{sec:implementation}. The use-case bullets below summarize the scope, output, and conditionality of each algorithm in terms meaningful for legal and technical stakeholders, without re-deriving the mechanisms. Trust-boundary placement of the per-algorithm inputs and outputs is consolidated in the Guarantee Map of Section~\ref{sec:architecture}.

\textbf{UC1: Patent-to-Product Mapping.} Algorithm~\ref{alg:dagcov} maps claim limitations to product evidence using DAG-weighted coverage with legally-motivated weights (Table~\ref{tab:weights}). Output: a coverage score $W_{\text{cov}} \in [0,100]$ plus a proof certificate attesting to bounds, dependency enforcement, and coverage-definition equality. For the formal phase (Algorithm~\ref{alg:dagcov-given}, once $\beta$ is fixed), the certificate is generated by a compiled, machine-verified generator (Proposition~\ref{prop:gensound}); the ML computation of $\beta$ remains below the trust boundary.

\begin{table}[t]
\centering
\caption{Weight assignment scheme based on patent law doctrine.}
\label{tab:weights}
\begin{tabular}{@{}llp{7cm}@{}}
\toprule
\textbf{Type} & \textbf{Weight} & \textbf{Legal Rationale} \\
\midrule
Wherein      & 3.0 & Core inventive concept; highest specificity \\
Quantitative & 2.0 & Numeric constraints; hard to design around \\
Functional   & 1.5 & Operational requirements; 35~U.S.C.~\S112(f) \\
Coupling     & 1.5 & Connectivity and interface requirements \\
Signal       & 1.5 & Interface signal specifications \\
Structural   & 1.0 & Physical components; baseline \\
Preamble     & 0.3 & Introductory clause; rarely limiting \\
\bottomrule
\end{tabular}
\end{table}

\noindent\textit{Doctrinal rationale.} The weight assignments reflect the relative legal significance of element types in claim construction and infringement analysis. \textit{Wherein} clauses receive the highest weight ($w = 3.0$) because they typically define the functional heart of independent claims and are frequently outcome-determinative in litigation, courts have consistently held that wherein clauses narrow claim scope to the specific inventive contribution. \textit{Preamble} elements receive the lowest weight ($w = 0.3$) because preamble language rarely limits claim scope under established precedent (\textit{Pitney Bowes, Inc.\ v.\ Hewlett-Packard Co.}~\cite{pitneybowes1999}), serving primarily to set the technical field. \textit{Quantitative} constraints ($w = 2.0$) carry high weight because numeric limits are precise, hard to design around, and frequently dispositive. \textit{Functional} ($w = 1.5$) elements implicate 35~U.S.C.~\S112(f), which can substantially affect scope. \textit{Structural} elements ($w = 1.0$) serve as the baseline. These ratios are not empirically validated against litigation outcomes; see Section~\ref{sec:discussion} for limitations and planned validation.

\textbf{UC2: Freedom-to-Operate.} Algorithm~\ref{alg:fto} provides the most legally consequential application. A formal proof of non-infringement provides evidence qualitatively different from ``we didn't find a match.'' \textit{Asymmetry warning:} $\textsc{Clear}$ is a formal (conditional) non-infringement certificate; $\textsc{Risk}$ is \textit{not} an infringement finding but the absence of a per-element impossibility proof, and requires human legal judgment (Section~\ref{sec:fto}, Section~\ref{sec:discussion}).

\textbf{UC3: Claim Construction Sensitivity.} Algorithm~\ref{alg:construction} identifies determinative terms and the threshold construction (the most restrictive interpretation under which the claim is still satisfied). Output: a sensitivity report $R$ with coverage per construction, determinative-term set, and $I_{\text{threshold}}$; detailed correctness statement in Theorem~\ref{thm:construction}.

\textbf{UC4: Cross-Claim Consistency.} Algorithm~\ref{alg:consistency} verifies term interpretation consistency across a portfolio, preventing legal vulnerabilities from contradictory constructions. Output: $\textsc{Consistent}$ with a certificate, or a concrete $\textsc{Inconsistent}(i, j, t, f(i,t), f(j,t))$ counterexample.

\textbf{UC5: Doctrine of Equivalents.} Algorithm~\ref{alg:doe} extends matching beyond literal infringement with formally justified function-way-result analysis and estoppel constraints. \textit{Stronger conditionality:} the UC5 proof certificate carries materially weaker guarantees than UC1--UC4. In addition to the ML-score conditionality shared with all other use cases, UC5 certificates are conditional on (a)~the correctness of ML-computed function, way, and result similarities; (b)~the natural-language interpretation of prosecution history (the estoppel region $\textit{ER}_{\text{arg}}$, the scope classifications of $\Phi_v^{\text{orig}}$ and $\Phi_v^{\text{amend}}$, and the Festo rebuttal predicates); and (c)~the semantic judgment that these ML/NLP-computed quantities correctly capture the underlying legal concepts. See \S\ref{sec:algorithms} (``Scope of formal verification for DOE'') for the full three-layer conditionality discussion.

\section{Theoretical Analysis}
\label{sec:theory}

In this section we analyze the correctness guarantees, computational complexity, and fundamental differences between our hybrid formal/ML approach and purely probabilistic methods. The framework's correctness story is \emph{two-layered}.

\smallskip\noindent\textit{Closed coverage-core layer (Algorithm~\ref{alg:dagcov-given}).} A closed Lean~4 development (Section~\ref{sec:closedpath}, Appendices~\ref{app:lean}--\ref{app:axiomaudit}) supplies a set of \textit{machine-verified} results: the structural lemmas \texttt{dag\_acyclic} and the Mathlib \texttt{CompleteLattice} instance; the coverage bound \texttt{coverage\_in\_range} (both endpoints of $[0,100]$, for any \texttt{ScoreValid} score function); the propagation-correctness theorem \texttt{propag\_proof} for \texttt{computeEff}; the coverage-definition equality $\texttt{weightedCoverage}(\beta,\theta) = W_{\text{cov}}$ (machine-verified by definitional unfolding, because \texttt{weightedCoverage} is defined as the weighted average of \texttt{computeEff}); and the generator-soundness proposition \texttt{generateCertificate} / Proposition~\ref{prop:gensound}, audited against $\Omega = \{\texttt{propext}, \texttt{Classical.choice}, \texttt{Quot.sound}\}$.

\smallskip\noindent\textit{Higher-level algorithmic layer (Algorithms~\ref{alg:dagcov}, \ref{alg:fto}--\ref{alg:kleene}; monotonicity and weakest-link).} The coverage-bound part~(i) of Theorem~\ref{thm:algcorrect} is discharged for Algorithm~\ref{alg:dagcov-given} by \texttt{coverage\_in\_range} from the closed layer; the remaining higher-level guarantees (Theorems~\ref{thm:weakest} (weakest link), \ref{thm:monotone} (propagation monotonicity), and parts~(ii)--(iv) of Theorems~\ref{thm:algcorrect}--\ref{thm:convergence}, covering dependency enforcement, completeness, monotonicity, FTO soundness, construction-sensitivity correctness, cross-claim consistency, DOE correctness, and Kleene convergence) are presented as \textit{informal proof sketches} describing the intended Lean~4 proof structure (Table~\ref{tab:proofstatus}). The proof-generation functions for Algorithms~\ref{alg:fto}--\ref{alg:kleene} are \textit{architecturally mitigated} by independent kernel type-checking rather than by a formal specification of the generator (Section~\ref{sec:contributions}).

\smallskip\noindent We show that the framework provides six categories of correctness properties, plus a characterization of conditionality for the DOE analysis (Section~\ref{sec:correctness}), that all algorithms run in polynomial time for typical patent analysis workloads (Section~\ref{sec:complexityanalysis}), and that the hybrid approach achieves qualitatively different guarantees than either ML or formal verification alone (Section~\ref{sec:probcomparison}). Throughout, the verification status of each specific result is the one given in Table~\ref{tab:proofstatus}, not a uniform ``machine-verified'' claim.

\subsection{Correctness Guarantees}
\label{sec:correctness}

Our framework identifies six categories of correctness properties, plus a characterization of DOE conditionality. These are summarized in Table~\ref{tab:guarantees}. \textbf{The verification status of each property varies per Table~\ref{tab:proofstatus}}: some are machine-verified in Lean~4 (e.g., \texttt{dag\_acyclic}, the \texttt{CompleteLattice} instance, \texttt{coverage\_in\_range}, the coverage-definition equality $\texttt{weightedCoverage}(\beta,\theta) = W_{\text{cov}}$ by definitional unfolding, and Proposition~\ref{prop:gensound}), some are architecturally enforced by type signatures, and the higher-level algorithmic guarantees covering dependency enforcement, completeness, monotonicity, weakest-link bounding, and algorithm correctness for Algorithms~\ref{alg:fto}--\ref{alg:kleene} (Theorems~\ref{thm:weakest}, \ref{thm:monotone}, and parts~(ii)--(iv) of Theorems~\ref{thm:algcorrect}--\ref{thm:convergence}) are currently informal proof sketches describing the intended Lean~4 proof structure, not compiled proofs. The first six categories below are \textit{mathematical properties of the system itself}, not statistical observations about its outputs on test data; the seventh characterizes the additional conditionality of the DOE analysis.

\textbf{Structural soundness} ensures that the claim DAG is well-formed: no circular dependencies exist, all node types are valid, and all referenced dependency IDs correspond to existing nodes. This is verified by Lean's \texttt{decide} tactic, which exhaustively checks the finite graph structure at compile time.

\textbf{Lattice axioms} guarantee that match strengths form a complete lattice with verified reflexivity, transitivity, antisymmetry, and the existence of meets and joins for arbitrary subsets. This ensures that all score composition operations are mathematically well-defined, a property that bare floating-point arithmetic does not provide.

\textbf{Propagation correctness} is argued (informal proof sketch, Theorem~\ref{thm:algcorrect}(ii)) to hold: effective scores respect dependency constraints, i.e., if any dependency $u \in \textit{deps}(v)$ has $\textit{eff}(u) < \theta$ then $\textit{eff}(v) = 0$, enforcing transitive dependency satisfaction. For Algorithm~\ref{alg:dagcov}, the intended proof proceeds directly via topological sort; for Algorithm~\ref{alg:kleene}, it invokes the Knaster-Tarski fixed-point theorem for monotone functions on complete lattices (Theorem~\ref{thm:convergence})~\cite{cousot1979systematic, dang2022faster}. Both theorems are currently informal sketches per Table~\ref{tab:proofstatus}.

\textbf{Coverage bounds} is the combined name for two complementary results, \emph{both machine-verified for the closed Algorithm~\ref{alg:dagcov-given} path} (Table~\ref{tab:proofstatus}; see Section~\ref{sec:closedpath} and Appendix~\ref{app:lean}): (a) \textit{arithmetic bounds} (\texttt{coverage\_in\_range}) is a compiled theorem that $0 \leq \texttt{weightedCoverage}(\beta,\theta) \leq 100$ for \textit{any} bounded score function (\texttt{ScoreValid}) and any $\theta \geq 0$; (b) \textit{coverage-definition equality} is the identity $\texttt{weightedCoverage}(\beta,\theta) = W_{\text{cov}}$ of Definition~\ref{def:depscov}, machine-verified by definitional unfolding because \texttt{weightedCoverage} is defined directly as the weighted average of the propagated effective scores $\texttt{computeEff}(\beta,\theta)$. Together they establish that the certificate's \texttt{coverage} field is both in range and numerically equal to Definition~\ref{def:depscov}'s $W_{\text{cov}}$, with no caller-side propagation precondition.

\textbf{Monotonicity} is stated (informal proof sketch, Theorem~\ref{thm:algcorrect}(iv)) to hold: improving any individual match score $\beta(v)$ can never decrease the overall coverage $W_{\text{cov}}$. The intended theorem says that additional evidence always helps (or at worst has no effect), a property relevant to iterative analysis workflows where new product documentation may be discovered during litigation.

\textbf{Weakest-link bounding} is argued (informal proof sketch, Theorem~\ref{thm:weakest}) to hold: the effective claim strength along any dependency path is bounded above by the weakest node on that path. This provides the theoretical foundation for the actionable insight ``your claim chart is only as strong as C12 at 45\%.''

\textbf{DOE conditionality} (distinct from the six categories above) characterizes the additional assumptions under which the DOE analysis (Problem~5, Algorithm~\ref{alg:doe}) operates: the proof certificate for Problem~5 is conditional on ML-computed FWR scores, estoppel inputs, and the legal concept mapping, in addition to the ML scores on which all other algorithms are conditional. This is not a formal property but a trust boundary characterization; its method in Table~\ref{tab:guarantees} is accordingly listed as ``Trust boundary analysis'' rather than a Lean~4 proof.

\begin{table}[H]
\centering
\caption{Formal correctness guarantees and verification methods.}
\label{tab:guarantees}
\small
\begin{tabular}{@{}p{2.5cm}p{3.6cm}p{4.2cm}p{1.5cm}@{}}
\toprule
\textbf{Property} & \textbf{Guarantee} & \textbf{Method} & \textbf{Status$^*$} \\
\midrule
Structural soundness & DAG is acyclic; all types valid & Lean \texttt{decide} & MV \\
Lattice axioms & Match strengths form complete lattice & Mathlib instance & MV \\
Propagation correctness & Effective scores respect dependencies & Topological sort (Alg.~\ref{alg:dagcov}); Knaster-Tarski (Alg.~\ref{alg:kleene}) & IS \\
Arithmetic bounds & $0 \leq \texttt{weightedCoverage}(\beta,\theta) \leq 100$ for any bounded $\beta$ and $\theta \geq 0$ & \texttt{coverage\_in\_range} (Appendix~\ref{app:lean}) & MV \\
Coverage-semantic equality & $\texttt{weightedCoverage}(\beta,\theta) = W_{\text{cov}}$ of Def.~\ref{def:depscov} & Definitional unfolding: \texttt{weightedCoverage} $=$ weighted average of \texttt{computeEff}; certificate's \texttt{p\_deps} discharged by \texttt{rfl} & MV \\
Monotonicity & Improving evidence cannot hurt & Lattice monotonicity argument & IS \\
Weakest-link & Strength bounded by weakest path & Meet/infimum properties & IS \\
Generator soundness (Alg.~\ref{alg:dagcov-given}) & Certificate matches propagation output, satisfies coverage bounds, accepted under Def.~\ref{def:validcert} & Compiled generator + $\Omega$-audit (\S\ref{sec:closedpath}) & MV \\
\midrule
\textit{DOE conditionality} & Problem~5 conditional on ML scores, estoppel inputs, legal concept mapping & Trust boundary analysis & TB \\
\bottomrule
\end{tabular}

\smallskip\noindent{\footnotesize $^*$Status key (per Table~\ref{tab:proofstatus}): MV~=~machine-verified in Lean~4; IS~=~informal proof sketch describing the intended Lean~4 proof structure, not compiled; TB~=~trust boundary characterization rather than a formal property.}
\end{table}

Table~\ref{tab:proofstatus} (Section~\ref{sec:contributions}) summarizes the verification status of each major result.

\subsection{Complexity Analysis}
\label{sec:complexityanalysis}

Table~\ref{tab:complexity} summarizes the time complexity of each algorithm. The dominant cost across all use cases is the similarity computation in the claim-evidence match scoring component, not the formal verification layer.

\begin{table}[H]
\centering
\caption{Complexity analysis. $n{=}$claim components, $m{=}$document segments, $d{=}$feature dimension, $k{=}$constructions, $p{=}$portfolio claims, $|\Sigma|{=}$vocabulary size.}
\label{tab:complexity}
\begin{tabular}{@{}lll@{}}
\toprule
\textbf{Algorithm} & \textbf{Time Complexity} & \textbf{Dominant Term} \\
\midrule
1: DAG Coverage           & $O(nmd + n + |E|)$ & Similarity computation \\
2: FTO Proof              & $O(nmd)$  & Per-element score computation$^{\dagger}$ \\
3: Construction Sens.     & $O(knmd)$$^{\ddagger}$          & $k$ construction evaluations \\
4: Cross-Claim            & $O(p^2|\Sigma|)$   & Pairwise consistency checks \\
5: DOE                    & $O(nmd + n|\textit{PH}|)$ & Initial scoring + FWR + estoppel \\
6: Fixed-Point            & $O((h{+}1)(n + |E|))$ & Kleene iteration; $h{=}$DAG depth \\
\bottomrule
\end{tabular}

\smallskip
{\footnotesize $^{\dagger}$Algorithm~2 currently implements only Strategy~1 (per-element threshold checking); no DAG propagation is performed. Strategy~2 (future work; see Remark following Algorithm~\ref{alg:fto}) would add an $O(n + |E|)$ propagation term when implemented.\\
$^{\ddagger}$The $O(knmd)$ figure covers the $k$ construction evaluations only. Determinative-term identification adds $O(|\Sigma|nmd)$ (one construction evaluation per term in the shared vocabulary); see the complexity remark following Algorithm~\ref{alg:construction} in Section~\ref{sec:algorithms}.}
\end{table}

Concrete runtime numbers are not reported in this paper: the prototype has not been systematically benchmarked, and no directly comparable Lean~4 patent-analysis benchmark exists in the literature (existing Lean~4 deployments such as AWS Cedar~\cite{aws2024cedar} use Lean for offline verification of an evaluation engine rather than for runtime kernel type-checking). For the typical case of $n = 15$ components, $m = 50$ segments, and $d = 768$ (embedding dimension), the match-scoring stage is dominated by the ML inference cost; the DAG propagation and coverage computation in Algorithm~\ref{alg:dagcov} require only a topological sort and a weighted sum over $n$ nodes, which is expected to be dominated by ML inference for claims of this size (empirical benchmarking is future work). Kernel type-checking cost for the proof certificate depends on the size of the generated proof terms, which is a function of the prototype's generator implementation rather than of the algorithm itself. Establishing actual wall-clock runtimes for each stage (ML inference, proof generation, kernel verification) on standardized hardware is a priority for future work (Section~\ref{sec:discussion}).

Algorithm~2 (FTO) checks each element's best match score against the threshold in $O(n \cdot m \cdot d)$. The overall FTO analysis is therefore polynomial in all parameters.

Algorithm~4 (Cross-Claim Consistency) has $O(p^2 |\Sigma|)$ complexity, which for a large portfolio of $p{=}5{,}000$ claims and $|\Sigma|{=}200$ terms yields $5 \times 10^9$ comparisons. While large, this is structured (not brute-force) and amenable to parallelization across independent term groups.

Figure~\ref{fig:complexity} illustrates the asymptotic scaling shapes implied by the complexity classes in Table~\ref{tab:complexity}; it conveys no empirical information beyond the table and should not be interpreted as a runtime measurement.

\begin{figure}[H]
\centering
\includegraphics[width=\textwidth]{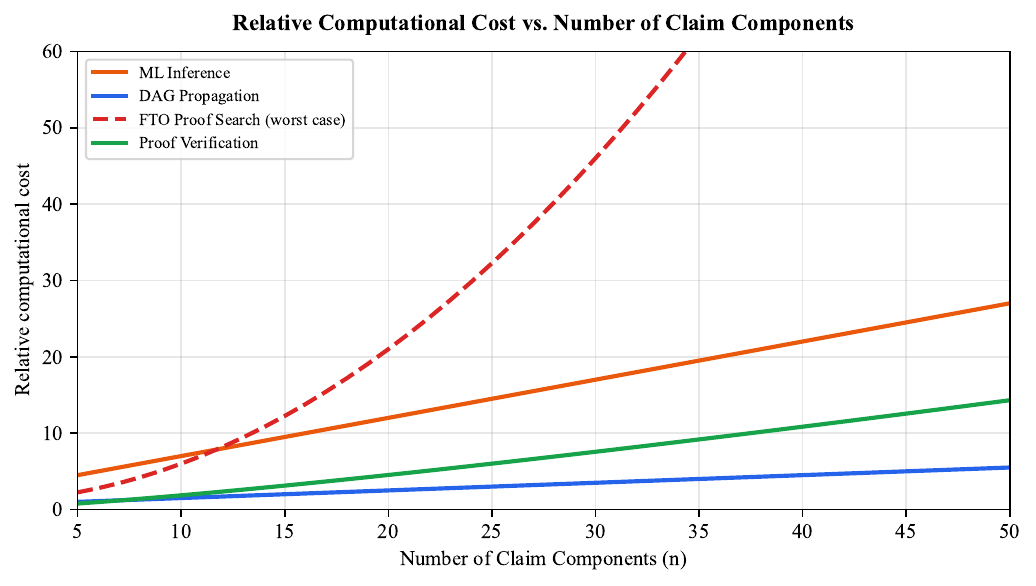}
\caption{\textbf{Schematic of asymptotic scaling under typical parameter assumptions} (not empirical runtimes). Shapes implied by the complexity classes in Table~\ref{tab:complexity}, with $m$ and $d$ held fixed. The $y$-axis is in arbitrary units; constant factors are undetermined without benchmarking. The relative ordering of slopes (DAG Propagation $\prec$ Proof Verification $\prec$ ML Inference) is qualitatively justified in Section~\ref{sec:complexityanalysis} by the structure of the primitive operations at each stage. Empirical benchmarking remains future work.}
\label{fig:complexity}
\end{figure}

\noindent\textbf{Qualitative justification of slope ordering.}
Although the specific constant factors in Table~\ref{tab:complexity} cannot be determined without empirical benchmarking, the relative ordering of computational costs across stages, DAG Propagation $\prec$ Proof Verification $\prec$ ML Inference, is qualitatively justified by the structure of the primitive operations each stage performs, independent of implementation details or hardware characteristics.

DAG Propagation (Algorithm~\ref{alg:kleene}) performs $O(n + |E|)$ operations consisting entirely of rational number threshold comparisons and score assignments: one comparison per dependency edge and one assignment per node. For the sparse patent claim DAGs considered in this work, $|E|/n$ is small (approximately 1.33 in the memory module case study, with $|E| = 20$ and $n = 15$), so the constant factor multiplying $n$ is close to~2.33. This is the computationally cheapest class of primitive operation: a single numerical comparison with no memory indirection, no branching, and no algebraic reduction.

Proof verification in the Lean~4 kernel scales with the size of the proof term, which grows linearly with $n$ as each additional claim node adds a bounded number of proof obligations to the \texttt{ProofCertificate} fields. Each proof step requires the kernel to perform operations that are structurally more expensive than a numerical comparison, including definition lookup, constructor checking, definitional equality via term reduction (beta, delta, and iota reduction in the Calculus of Inductive Constructions), and universe level checking. However, the number of such operations per additional node is bounded by the structure of the proof, with no additional multiplicative factor from external data. The slope of proof verification therefore exceeds that of DAG propagation due to a larger per-operation cost, while remaining smaller than ML inference because it carries no data-dependent multiplier.

The ML Inference component carries an explicit multiplicative factor absent from both other stages: for each additional claim node, the system must compute match scores against all $m$ evidence segments using $d$-dimensional embedding vectors, incurring $O(m \cdot d)$ floating-point multiply-accumulate operations for embedding similarity alone. With the typical parameter values of $m = 50$ document segments and $d = 768$ embedding dimensions, this amounts to approximately 38,400 floating-point operations per additional claim node. This structural multiplier dominates the slope of the ML Inference line relative to both other stages, regardless of the relative per-operation costs of floating-point arithmetic and kernel type-checking. The slope ordering DAG Propagation $\prec$ Proof Verification $\prec$ ML Inference is therefore architecturally stable: it reflects the multiplicative structure of each stage's computational pattern rather than any property of a specific implementation.

\subsection{Structural Properties and Formal Limitations of Probabilistic Approaches}
\label{sec:probcomparison}

The distinction between the hybrid approach introduced in this work and purely probabilistic ML/NLP systems is not a matter of degree but of \textit{kind}. Numeric capability comparisons would require empirical benchmarks on standardized patent analysis datasets evaluated against adjudicated case outcomes; no such datasets currently exist, and constructing them is identified as a priority for future work (Section~\ref{sec:discussion}). In the absence of empirical benchmarks, the most rigorous characterization available is a structural one: identifying properties that are provably achievable or provably unachievable by each class of system, independent of implementation quality or scale. We establish three such results before presenting a property-by-property comparison in Table~\ref{tab:comparison}.

\begin{proposition}[Proof Certificate Impossibility for Probabilistic Systems]
\label{prop:certimpossible}
A system whose output is a sample from a probability distribution cannot unconditionally guarantee the production of a valid \texttt{ProofCertificate} (Section~\ref{sec:framework}).
\end{proposition}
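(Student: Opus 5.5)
I will formalize ``a system whose output is a sample from a probability distribution'' as a map $x \mapsto \mathcal{D}_x$ from inputs to distributions over finite strings, and ``unconditionally guarantee'' as: for every input $x$ admitted by the framework, the sampled output is valid under Definition~\ref{def:validcert} with probability~$1$ and independently of any further assumption on $\mathcal{D}_x$. The proof will be a short argument by contraposition: either validity holds almost surely, in which case the randomness does no logical work, or validity fails with positive probability.

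First, I will observe that validity under Definition~\ref{def:validcert} is a \emph{deterministic} predicate $\mathsf{Valid}(\Pi)$: kernel type-checking is a deterministic procedure, as stated in the trust-model discussion, and so is the $\Omega$-audit via \texttt{\#print axioms}. The set $A_x = \{\Pi : \mathsf{Valid}(\Pi)\}$ of valid certificates for input $x$ is therefore a fixed, measurable (countable) subset of strings. Second, I will split into two cases. If $\mathcal{D}_x(A_x) < 1$ for some admitted $x$, then with positive probability the sampled output is rejected, so the guarantee fails outright. If instead $\mathcal{D}_x(A_x) = 1$ for every $x$, then the guarantee holds only because the support of $\mathcal{D}_x$ lies inside $A_x$. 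That is a property of the sampler which, for a generic learned distribution such as an LLM decoder with full-support softmax, must be established separately. Third, I will close the argument by noting that full-support samplers, such as softmax decoding at positive temperature, assign positive mass to every string of bounded length. This includes the fraudulent \texttt{sorry}-based certificate exhibited in Section~\ref{sec:framework}, which fails condition~(ii). Hence $\mathcal{D}_x(A_x) < 1$ for such samplers.

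The main obstacle is the framing, not any calculation. I must state precisely what ``probabilistic system'' excludes, since a degenerate distribution (a point mass) is simply a deterministic program, and then the proposition is false as stated. I will therefore make explicit that the claim concerns samplers whose validity is not established independently of the sampling: any guarantee of validity must come from an external deterministic check, namely the kernel together with the $\Omega$-audit. This motivates the paper's architecture of placing the probabilistic generator below the trust boundary and the checker above it. I will state the degenerate case as a remark rather than hide it.
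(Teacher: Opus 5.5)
Your proposal is correct and follows essentially the same argument as the paper. Validity under Definition~\ref{def:validcert} is a deterministic predicate, so any success probability below~$1$ is no guarantee, and probability-$1$ validity can only come from a separately verified constraint on the sampler (the paper's ``constraint mechanism being itself formally verified''), that is, from an external deterministic check. What you add is your full-support witness (positive-temperature softmax places positive mass on the \texttt{sorry}-based certificate), together with your remark that a point mass, and more generally any sampler whose support provably lies inside the valid set, falsifies the literal statement; both make explicit what the paper's proof leaves implicit.
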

\begin{proof}
A \texttt{ProofCertificate} is valid (Definition~\ref{def:validcert}) iff every proof field type-checks in the Lean~4 kernel AND the certificate is \texttt{sorry}-free. Type-checking is a deterministic binary predicate: for any term $t$ and type $T$, either $\vdash t : T$ or $\not\vdash t : T$; there is no intermediate probabilistic state. While constrained generation techniques can increase the probability that a generated term is syntactically well-formed, they cannot provide an unconditional guarantee that the generated term satisfies the semantic validity condition (that it constitutes a genuine proof of the stated proposition in the Calculus of Inductive Constructions) without that constraint mechanism being itself formally verified. No probability bound $p < 1$ constitutes an unconditional guarantee: a system that produces valid certificates with probability $p$ fails with probability $1 - p$, and those failures are precisely the cases where legal correctness is most consequential. The formal verification layer of the hybrid pipeline avoids this by construction: given identical inputs, the Lean~4 kernel either accepts or rejects deterministically, with no probabilistic intermediate.
\end{proof}

The force of this result is definitional rather than empirical. Probabilistic systems can propose candidate proof terms, but they cannot themselves provide unconditional proof-validity guarantees for their own outputs without an external deterministic checker; a confidence score is a statistical estimate conditioned on a training distribution, whereas a proof certificate's validity is a deterministic binary property of a proof term relative to the Lean~4 kernel's inference rules. A probabilistic system paired with an external deterministic proof checker (such as this paper's hybrid architecture) can produce certificates that the checker accepts or rejects; the proposition says only that the \textit{unconditional} guarantee must come from the checker, not from the probabilistic component.

\begin{proposition}[Dependency Structure Necessity under the All-Elements Rule]
\label{prop:compositionality}
Let $\mathcal{A}$ be any \textit{flat-scoring} analysis, one that computes per-element scores independently of the claim DAG's dependency structure and combines them by a function of the marginal per-element scores alone (e.g., a weighted sum, product, or minimum over $V$). Then $\mathcal{A}$ cannot faithfully model claim satisfaction under the all-elements rule, in either of the following senses: (a)~if $\mathcal{A}$ is probabilistic, the probability it computes cannot equal the joint satisfaction probability without an exogenous distributional assumption that is not available from the marginals or the DAG; (b)~if $\mathcal{A}$ is deterministic, the score it computes cannot distinguish configurations that differ only in whether transitive dependencies are satisfied. Graph-structured analyses that explicitly condition on the claim DAG (for example, graph neural networks operating on the dependency graph, including FLAN-Graph~\cite{gao2024flan}) fall outside the scope of this proposition.
\end{proposition}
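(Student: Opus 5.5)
Both parts (a) and (b) follow the same pattern. I will exhibit two inputs that have identical per-element marginal data but different all-elements satisfaction. A flat-scoring $\mathcal{A}$ sees only the marginals, so it must assign both inputs the same value, and therefore it cannot track satisfaction. Throughout I will use the smallest nontrivial DAG, the two-node chain $C_1 \to C_2$ of the running example, with threshold $\theta$. I will also use the threshold-based semantics of Definition~\ref{def:bestmatch} and Definition~\ref{def:fullcoverage} as the reference notion of ``claim satisfaction''. The first step is to make precise what ``a function of the marginal per-element scores alone'' means. I will formalize it as $\mathcal{A}(G,\beta) = \Phi(\{(w(\tau(v)), \beta(v))\}_{v \in V})$ for some fixed $\Phi$ that does not read $E$. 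A weighted sum, product or minimum over $V$ all have this form.

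For part~(b), I will fix the marginal scores $\beta(C_1) = \beta(C_2) = b$ with $0 < b < \theta$. I then compare two configurations that share the same multiset of $(w,\beta)$ pairs but differ in $E$: $G_{\text{dep}}$ has $E = \{(C_1, C_2)\}$ and $G_{\text{flat}}$ has $E = \emptyset$. By Theorem~\ref{thm:algcorrect}(ii)--(iii), the propagated scores are $\textit{eff}(C_2) = 0$ in $G_{\text{dep}}$ and $\textit{eff}(C_2) = b$ in $G_{\text{flat}}$. So $W_{\text{cov}}$ differs between the two configurations, and the transitive-dependency status differs as well. Since $\Phi$ receives identical arguments in both cases, $\mathcal{A}$ returns the same value and cannot distinguish them. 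If one prefers to keep the DAG fixed and vary only the dependency status, I will instead vary $\beta(C_1)$ across $\theta$ and compensate in $\beta(C_2)$ so that a symmetric $\Phi$ such as a sum is unchanged. Because $\Phi$ is arbitrary, however, the cleaner route is the fixed-marginals, varying-$E$ construction above, and I will commit to that one.

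For part~(a), I will model per-element satisfaction as Bernoulli events $X_v = \mathbf{1}[v \text{ satisfied}]$ with given marginals $p_v$. The all-elements rule requires $P(\bigwedge_v X_v)$. I will exhibit two joint distributions with the same marginals $p_1 = p_2 = 1/2$: in one, $X_1 = X_2$ (comonotone), giving joint probability $1/2$; in the other, $X_2 = 1 - X_1$, giving joint probability $0$. More generally, the Fréchet bounds show that the joint probability ranges over $[\max(0, p_1 + p_2 - 1), \min(p_1,p_2)]$. A function of the marginals alone returns one number, so it is wrong for at least one of these distributions unless a coupling assumption such as independence is imposed exogenously. The DAG records only logical dependency edges and not a joint law, so the DAG does not supply that coupling either.

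The main obstacle is definitional rather than technical. The statement is informal about what counts as ``faithfully model'' and what it means for $\mathcal{A}$ to be flat. The arguments above go through only once these are pinned down: the flat-scoring condition as the factorization through $\Phi$, and faithfulness as agreement with $\textit{deps\_met}$/$W_{\text{cov}}$ in (b) and with the joint probability in (a). I will state these interpretations explicitly at the start of the proof. The closing remark about graph-structured analyses such as FLAN-Graph then follows immediately, because such analyses read $E$ and so violate the factorization hypothesis.
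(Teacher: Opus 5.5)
Your part~(a) is essentially the paper's argument. The paper also invokes the Fr\'echet--Hoeffding bounds and notes that independence is a property of the joint law over real products, not of the DAG. Your explicit comonotone/countermonotone pair at $p_1 = p_2 = 1/2$ is a useful sharpening: the Fr\'echet interval is degenerate for some marginals (e.g., whenever some $p_v = 0$), so an existential witness is the correct form of the claim.

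Part~(b) takes a genuinely different route. The paper fixes the DAG and compares $\beta_1,\beta_2$ that differ only at a dependency $u \in \textit{deps}(v)$, with $\beta_1(u)=0$ and $\beta_2(u)\geq\theta$. It argues that a DAG-blind combiner registers the change at $u$ but not the induced zeroing of $v$, and illustrates this with the case-study cascade at $C_3$ (a $23.3$~pp swing that no weighted sum reproduces). You instead hold the score data fixed and vary $E$, so $\Phi$ receives literally identical arguments.

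Your route buys genuine indistinguishability for an arbitrary $\Phi$. The paper writes the flat output as $F(\beta(v_1),\ldots,\beta(v_n))$ with $F$ arbitrary. But for a single fixed $G$, the map $\beta \mapsto W_{\text{cov}}(G,\beta)$ is itself of that form. So the fixed-DAG comparison rigorously excludes only combiners whose dependence on $\beta(u)$ is local or separable, such as weighted sums. Your reading of ``flat'' as ``one $\Phi$ used across DAGs that does not read $E$'' is exactly what makes the impossibility hold in general. The paper's route, in exchange, stays within a single claim, matches the practical scenario of one claim under changed evidence, and connects quantitatively to the case study.

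One correction: do not cite Definition~\ref{def:fullcoverage} as your reference notion of satisfaction. By the note in that definition, $\textit{deps\_met}_A$ is redundant, so a full-coverage assignment exists iff $\min_v \beta(v) \geq \theta$. A flat minimum computes this exactly, and your two configurations are both non-covering in that sense. Part~(b) holds only for the propagated notion ($\textit{deps\_met}$, $\textit{eff}$, $W_{\text{cov}}$). That is the target you ultimately commit to, so state that choice from the outset.
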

\begin{proof}
Under the all-elements rule (Definition~\ref{def:claimdag}), infringement requires that every limitation $v \in V$ be satisfied simultaneously:
\[
\text{Infringes} \;\Longleftrightarrow\; \bigwedge_{v \in V} \text{satisfied}(v).
\]

\textit{Part (a), probabilistic case.} The probability of infringement, $P\bigl(\bigwedge_{v \in V} \text{satisfied}(v)\bigr)$, is a property of the \emph{joint} distribution of the events $\{\text{satisfied}(v)\}_{v \in V}$ over the population of accused products, not of the marginals $\{P(\text{satisfied}(v))\}_{v \in V}$ alone. For any fixed set of marginals, the Fr\'echet--Hoeffding bounds place the conjunction probability anywhere in a non-degenerate interval that collapses to a single point only under an additional structural assumption about the joint, for example $P\bigl(\bigwedge_{v} \text{satisfied}(v)\bigr) = \prod_v P(\text{satisfied}(v))$ under statistical independence. Statistical independence of the satisfaction events, however, is a property of the joint distribution over real products, not of the DAG $G$: two logically independent limitations may be statistically correlated across real products (e.g., ``has a screen'' and ``has a battery'' in phones), and two logically dependent limitations may happen to be statistically independent in the population being evaluated. The DAG therefore cannot by itself supply the independence assumption, and the marginals cannot self-justify one. A flat-scoring probabilistic system combining marginals by any fixed function of $\{P(\text{satisfied}(v))\}_{v}$ consequently cannot compute $P(\text{Infringes})$ under the all-elements rule without an exogenous distributional assumption that is not available from either its inputs or its own outputs. Part~(b) sharpens this into a deterministic impossibility that does not depend on any distributional premise.

\textit{Part (b), deterministic case.} Consider two score configurations $\beta_1, \beta_2 : V \to [0,1]$ that agree on every node except one: there exist nodes $u, v \in V$ with $u \in \textit{deps}(v)$ such that $\beta_1(u) = 0$ and $\beta_2(u) \geq \theta$, while $\beta_1(w) = \beta_2(w)$ for every $w \neq u$. The all-elements rule yields \textit{different} infringement outcomes for $\beta_1$ and $\beta_2$ (under $\beta_1$, $u$ is unsatisfied, so the transitive dependent $v$ cannot be credited; under $\beta_2$, $u$ is satisfied). But a flat-scoring deterministic analysis by definition outputs a value $\mathcal{A}(\beta_i) = F(\beta_i(v_1), \ldots, \beta_i(v_n))$ for some function $F$ that sees only the per-element scores and is blind to the DAG. $F$ therefore assigns the same relative contribution to $u$ in both configurations, and cannot record that $u$'s contribution in $\beta_1$ should also zero $v$'s contribution (via the cascade). The two configurations are distinguishable by the all-elements rule but indistinguishable to $\mathcal{A}$ up to the single-element change at $u$. Concretely, the memory-module case study (Section~\ref{sec:algorithms}) exhibits this: a $0.24$-point reduction in $C_3$'s score cascades to zero three additional high-weight nodes under the threshold model, a $23.3$~pp coverage swing that no weighted sum over the original per-element scores can reproduce without explicit dependency conditioning.

In both cases, correctness under the all-elements rule requires conditioning the combination step on the DAG. The weighted DAG coverage $W_{\text{cov}}$ (Definition~\ref{def:effcov}) and the dependency enforcement of Algorithm~\ref{alg:dagcov} account for this structure precisely by propagating dependency constraints through the DAG topology, zeroing the effective score of any limitation whose transitive dependencies are unsatisfied. The theorem says only that any correct analysis (probabilistic or deterministic) must similarly condition on the DAG at the combination step; our framework's output $W_{\text{cov}}$ is a deterministic weighted score that does so, not itself a probability of infringement.
\end{proof}

This result has a direct practical consequence. A patent claim is a conjunction of limitations; legally, all must be present for infringement. A system that reports per-element confidence scores and sums or averages them without conditioning on the DAG is implicitly committing to a distributional or structural assumption (statistical independence in the probabilistic case; insensitivity to dependency cascades in the deterministic case) that neither the marginals nor the DAG can justify. Algorithm~\ref{alg:dagcov}'s propagation mechanism supplies the missing dependency conditioning in the deterministic case (Part~(b)); probabilistic systems lacking a comparable joint-distribution model inherit the gap identified in Part~(a).

\textit{Scope of Proposition~\ref{prop:compositionality}.} The proposition targets flat-scoring ML pipelines, systems whose per-element scores are computed without access to the dependency edges. It does \textit{not} apply to graph-structured models that explicitly condition on the DAG (graph neural networks, graph attention networks, and similar approaches operating over the claim dependency graph such as FLAN-Graph~\cite{gao2024flan}, cited in Section~\ref{sec:related}). For such models the compositionality gap identified here does not arise from the architecture; separate claims about their calibration, reproducibility, and certificate-generation properties (Propositions~\ref{prop:certimpossible} and~\ref{prop:reproducibility}) continue to apply to the extent that the model's output remains probabilistic.

\begin{proposition}[Stochastic Inference and Unconditional Reproducibility]
\label{prop:reproducibility}
Any system whose outputs are sampled from a non-degenerate probability distribution cannot guarantee that two independent runs on identical inputs produce identical outputs.
\end{proposition}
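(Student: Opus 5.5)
To prove Proposition~\ref{prop:reproducibility}, I would first make the informal statement precise and then give a short probabilistic argument. I model a stochastic system as follows. On each input $x$, the system returns a random output $Y_x$ with law $\mu_x$ on a measurable output space $\mathcal{Y}$. I read ``non-degenerate'' as: there is some input $x$ for which $\mu_x$ is not a point mass. In other words, no single output $y_0$ has $\mu_x(\{y_0\}) = 1$. I read ``two independent runs on identical inputs'' as a pair $(Y_x^{(1)}, Y_x^{(2)})$ of independent draws from $\mu_x$. The product law $\mu_x \otimes \mu_x$ then describes the pair. The claim to prove becomes
\[
P\bigl(Y_x^{(1)} = Y_x^{(2)}\bigr) < 1 .
\]
This says the system cannot guarantee reproducibility, even in the almost-sure sense.

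The core step is to compute the probability that the two runs agree. I use Fubini on $\mu_x \otimes \mu_x$, integrating over the first draw:
\[
P\bigl(Y^{(1)} = Y^{(2)}\bigr) = \int_{\mathcal{Y}} \mu_x(\{y\})\, d\mu_x(y).
\]
Here I need the diagonal of $\mathcal{Y}\times\mathcal{Y}$ to be measurable. This holds whenever $\mathcal{Y}$ is countable or standard Borel, which covers outputs such as finite strings, score tuples, or certificates. I would state this as an explicit standing assumption.

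Next I bound the integrand. Since $\mu_x(\{y\}) \le 1$ everywhere, the integral is at most $1$. Equality would force $\mu_x(\{y\}) = 1$ for $\mu_x$-almost every $y$. Such a $y$ exists, because the set where the integrand equals $1$ has full measure and is therefore nonempty. Its existence makes $\mu_x$ the point mass at $y$, which contradicts non-degeneracy. In the countable case this is simply the strict inequality $\sum_y p_y^2 < \sum_y p_y = 1$ whenever some $p_y \in (0,1)$.

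The main obstacle is not the calculation but the modeling. The hard part is stating the hypotheses so that the proposition is neither trivially false nor vacuous. Two qualifications matter. First, a sampler with a fixed seed is deterministic, so its induced law is degenerate. I would therefore remark that the proposition concerns the law actually induced by the sampling procedure. Seeding or greedy decoding removes non-degeneracy, and the conclusion then no longer applies. Second, I would note that failure of reproducibility need only occur on some inputs, namely those on which $\mu_x$ is non-degenerate. Finally, I would contrast this with the Lean~4 kernel, which is deterministic as used in the proof of Proposition~\ref{prop:certimpossible}. Its induced law on every input is a point mass, so the proposition places no constraint on the formal layer.
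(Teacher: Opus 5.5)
Your argument is correct, but it takes a different route from the paper's. The paper stays at the level of a latent random variable. It models the system as $f(\text{input}, \omega)$, with $\omega$ drawn from a distribution whose support has more than one point. It then asserts that there are realisations $\omega_1 \neq \omega_2$ with $f(\text{input}, \omega_1) \neq f(\text{input}, \omega_2)$, calling this a definitional consequence of stochasticity, and closes by noting that the Lean~4 kernel is deterministic.

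You work instead with the output law $\mu_x$, which is exactly what the proposition's hypothesis (``outputs are sampled from a non-degenerate distribution'') refers to. You compute the agreement probability $\int \mu_x(\{y\})\,d\mu_x(y)$ of two independent runs and show it is strictly below $1$.

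Your version gains precision and strength. The paper's step from a non-degenerate $\omega$ to distinct outputs silently requires $f(\text{input},\cdot)$ to be non-constant: a seed the decoder ignores, or one that matters only on a null set, gives a degenerate output law. Moreover, the mere existence of two differing realisations does not show that independent runs disagree with positive probability. Your argument rules out reproducibility even in the almost-sure sense, and it gives an explicit agreement probability ($\sum_y p_y^2$ in the discrete case).

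What it costs is measure-theoretic overhead: the diagonal must be measurable, hence your countable or standard Borel assumption on $\mathcal{Y}$. The paper's one-line existential argument avoids this and is anchored directly in concrete mechanisms (temperature sampling, Monte Carlo dropout, hardware-dependent floating point). Your remarks that seeded or greedy sampling yields a point mass, and that the proposition places no constraint on the deterministic formal layer, match the paper's intent.
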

\begin{proof}
A system uses stochastic inference if its output is a function of both the input and a random variable $\omega$ drawn from a distribution with support of cardinality greater than one (as is the case for LLM decoding with temperature $T > 0$, Monte Carlo dropout, or non-deterministic floating-point operations across hardware). For any such system, there exist realisations $\omega_1 \neq \omega_2$ such that $f(\text{input}, \omega_1) \neq f(\text{input}, \omega_2)$. This is a definitional consequence of stochasticity, not an empirical observation about any particular system or scale. The formal verification layer of the hybrid pipeline is deterministic by construction: given identical input scores and DAG structure, the Lean~4 kernel always produces the same proof certificate, and this guarantee holds unconditionally for all valid inputs.
\end{proof}

Propositions~\ref{prop:certimpossible} through~\ref{prop:reproducibility} collectively establish that proof certification, compositional correctness under the all-elements rule, and unconditional deterministic reproducibility are not achievable by probabilistic systems through any degree of engineering improvement. They represent qualitative discontinuities separable from questions of scale, model quality, or prompting strategy. Table~\ref{tab:comparison} extends this structural analysis across nine properties covering both the structural guarantees argued in this section (Propositions~\ref{prop:certimpossible}--\ref{prop:reproducibility}, with inline mathematical proofs; verification status per Table~\ref{tab:proofstatus}) and the practical capabilities of the respective approaches.

\begin{table}[H]
\centering
\caption{Structural property comparison across ML/NLP-only, formal verification-only, and hybrid approaches for patent analysis. All entries are derived from definitions, theorems and proof sketches in this paper (verification status per Table~\ref{tab:proofstatus}), or cited formal results; no entry is based on empirical measurement. ``Conditional'' denotes that the stated property holds for the formal verification layer given ML-layer inputs; the correctness of those inputs remains the responsibility of the ML component. See Section~\ref{sec:discussion} for a full discussion of the conditional trust model.}
\label{tab:comparison}
\scriptsize
\resizebox{\textwidth}{!}{%
\begin{tabular}{p{2.5cm}p{4.0cm}p{3.2cm}p{3.8cm}}
\toprule
\textbf{Property} & \textbf{ML/NLP Only} & \textbf{Formal Verification Only} & \textbf{Hybrid (This Work)} \\
\midrule
Produces machine-checkable proof certificate & No (Prop.~\ref{prop:certimpossible}): probabilistic outputs cannot unconditionally type-check & Yes: \texttt{ProofCertificate} verified by Lean~4 kernel & Yes, conditional on ML-layer input scores \\[4pt]
Output deterministic given identical inputs & No (Prop.~\ref{prop:reproducibility}): stochastic decoding is definitionally non-deterministic & Yes: Lean~4 kernel is deterministic by construction & Formal layer: yes; ML layer: no \\[4pt]
Can process raw natural language & Yes & No: requires formally pre-structured input & Yes, via the ML/NLP layer components \\[4pt]
Trusted computing base & Entire model, training data, inference code, hardware & Lean~4 kernel (${\sim}$5K lines C++), self-verified~\cite{carneiro2024lean4lean}, \textit{plus} the correctness of the manual formalization (which a pure formal system requires a human to supply) & Lean~4 kernel for proof validity; manual formalization of structural inputs; full ML stack for scoring \\[4pt]
Guarantees for all valid inputs & No (PAC, probably approximately correct bounds) & Structural lemmas (\texttt{dag\_acyclic}, \texttt{CompleteLattice}) machine-verified; higher-level theorems (\ref{thm:weakest}, \ref{thm:monotone}, \ref{thm:algcorrect}, \ref{thm:fto}, \ref{thm:construction}, \ref{thm:consistency}, \ref{thm:doe}, \ref{thm:convergence}) currently informal sketches per Table~\ref{tab:proofstatus} & Same as formal-only column for the formal layer, conditional on ML-layer input scores \\[4pt]
Independently verifiable by third party & No: requires model weights and runtime & Yes: certificate is self-contained & Yes for formal outputs; ML scores require separate audit \\[4pt]
Correct composition under all-elements rule & No for flat-scoring systems (Prop.~\ref{prop:compositionality}): $P(A \wedge B)$ cannot be computed from marginals without independence$^\dagger$ & Yes: $\wedge = \min$ is associative, commutative, idempotent (Thm.~\ref{thm:weakest}) & Yes for formal computation layer \\[4pt]
Explicit trust boundary & No: no structural record of where guarantees end & Yes: all non-kernel inputs explicitly unverified & Yes; see Section~\ref{sec:architecture} and Figure~\ref{fig:architecture} \\[4pt]
Adversarial robustness of certified output & No: text perturbations alter similarity scores & Proof type-checks or not, regardless of input construction & ML scores: vulnerable; certificate structure: immune \\
\bottomrule
\end{tabular}%
}
\end{table}

\noindent$^\dagger$\textit{Proposition~\ref{prop:compositionality} applies to systems that compute per-element scores independently without conditioning on the DAG. Graph-structured models that explicitly model dependency structure (e.g., graph neural networks operating on claim DAGs) are not covered by this result.}

\noindent\textbf{Note on $\Phi_v$.} The claim scope space $\Phi_v$ is a formal input to the framework. Its construction (whether by human experts or ML tools) is outside the formal layer for all three system types. The hybrid approach makes this dependency explicit through the trust boundary; pure ML approaches embed it implicitly in the model's internal representations.

\noindent\textbf{Discussion.}
Table~\ref{tab:comparison} reveals a structure of complementary strengths that the hybrid approach is uniquely positioned to combine. Formal verification contributes the properties that are provably impossible for probabilistic systems: machine-checkable certificates (Proposition~\ref{prop:certimpossible}), compositionally correct propagation (Proposition~\ref{prop:compositionality}), unconditional deterministic reproducibility (Proposition~\ref{prop:reproducibility}), a minimal trusted computing base, and immunity to adversarial manipulation of the proof structure itself. The ML/NLP component contributes the one capability that formal verification structurally cannot supply: the ability to process raw natural language patent claim text without requiring prior manual formalization.

The hybrid approach inherits both sets of properties subject to the trust boundary established in Section~\ref{sec:architecture}. That trust boundary is itself a structural feature absent from both pure approaches. A pure ML system produces no explicit record of where its guarantees end; its entire output is probabilistic without a structurally encoded boundary between what has been verified and what has been estimated. A pure formal verification system, when applied to manually formalized claims, provides unconditional guarantees for the formal computation but offers no mechanism to reach the natural language input; the conditionality of the guarantee on the human formalizer's correctness is implicit rather than structurally encoded. The hybrid pipeline makes conditionality explicit: proof certificates certify exactly the formally verified portions of the analysis, and the trust boundary in Figure~\ref{fig:architecture} identifies precisely which components lie below it.

The conditional nature of the hybrid's guarantees warrants direct acknowledgment. A formally verified computation applied to systematically biased ML scores will certify incorrect conclusions with mathematical precision. The formal guarantee (correctness of computation given the inputs) does not extend to the correctness of the inputs themselves. This limitation is not a defect unique to the hybrid; it is the honest characterization of any system that processes natural language through a non-formal intermediary. The contribution of the formal layer is to make this conditionality auditable: one can inspect the proof certificate, identify the input scores on which it rests, and independently evaluate those scores, in a way that is not possible for an opaque probabilistic output. The framework does not eliminate the need for judgment about ML-layer quality; it isolates and makes explicit the precise point at which that judgment is required.

No claim is made that the hybrid approach achieves superior performance on any empirically measurable task. The contribution is formal and architectural: this work introduces machine-checkable proof certificates into patent analysis for the first time, establishes a formally characterized trust boundary between probabilistic and verified computation, and argues conditional mathematical correctness for six algorithms covering five IP use cases through informal proof sketches grounded in machine-verified structural lemmas (verification status per Table~\ref{tab:proofstatus}). Whether this framework improves on practitioner outcomes in deployed settings is an empirical question that requires adjudicated case data and is identified as the primary direction for future work.

\section{Implementation and Case Study}
\label{sec:implementation}

\subsection{Prototype}

The framework takes the form of a prototype realizing the full hybrid pipeline described in Section~\ref{sec:architecture}. The prototype comprises an LLM-based claim decomposition module, a document extraction and segmentation backend, a claim-evidence match scoring component using combined lexical and semantic matching, a Lean~4 formal verification core for DAG encoding and lattice-theoretic propagation, and a web-based interface with interactive DAG visualization.

\noindent\textbf{Prototype stack.}
\begin{itemize}[leftmargin=2em, topsep=2pt, itemsep=1pt]
\item \emph{ML/NLP layer:} Python~3.11 with PyTorch for the embedding pipeline; sentence-level embeddings from a general-purpose transformer encoder combined with TF-IDF lexical scores per Section~\ref{sec:architecture}. Claim decomposition is performed by an instruction-tuned LLM under structured prompting; the resulting decomposition is written out as a JSON document that the Lean code generator consumes.
\item \emph{Lean~4 layer:} Lean~4 v4.30.0-rc1 with Mathlib. Organized as a Lake project (\texttt{patent\_verification/}) with the modules \texttt{ClaimNode.lean} (15 constructors for the case-study claim, dependency function, \texttt{claimWeight}), \texttt{MatchStrength.lean} (bounded-rational and discretized lattice representations), \texttt{DAGAcyclicity.lean} (\texttt{topoDepth} and \texttt{dag\_acyclic}), \texttt{Coverage.lean} (\texttt{computeEff}, \texttt{weightedCoverage}, \texttt{coverage\_in\_range}, \texttt{propag\_proof}, \texttt{eff\_cases}, \texttt{eff\_le\_score}), \texttt{Certificate.lean} (\texttt{ProofCertificate}, \texttt{generateCertificate}), and \texttt{CaseStudy.lean} (per-construction score tables and certificate instantiations).
\item \emph{Code generation:} A Python driver (\texttt{generate\_lean.py}) consumes the JSON decomposition and emits \texttt{ClaimNode.lean}, \texttt{DAGAcyclicity.lean}, \texttt{Coverage.lean}, and \texttt{CaseStudy.lean}; the remaining modules are hand-written. The generator is untrusted; its output is independently type-checked by the Lean~4 kernel and subjected to the Definition~\ref{def:validcert}(ii) $\Omega$-audit before acceptance.
\item \emph{Approximate sizes:} ML pipeline $\approx 1.5$k SLOC of Python; Lean~4 core $\approx 0.7$k SLOC (of which roughly half is generated); case-study Lean encoding $\approx 0.3$k SLOC.
\item \emph{End-to-end run status:} The prototype has been exercised end-to-end on the case-study claim (Section~\ref{sec:implementation}): the ML layer produces the best-match score tables, the generator emits the Lean source, and \texttt{lake build} compiles the project without warnings. The three top-level \texttt{\#print axioms} invocations (on \texttt{generateCertificate}, \texttt{cert\_I1\_broad}, and \texttt{cert\_I2\_narrow}) each report exactly $\Omega = \{\texttt{propext}, \texttt{Classical.choice}, \texttt{Quot.sound}\}$, satisfying Definition~\ref{def:validcert}(ii). No wall-clock runtimes are reported in this paper.
\item \emph{Axiom audit on a real claim.} When the prototype is applied to a real (non-synthetic) claim, the audit result is expected to be identical to the synthetic case for Algorithm~\ref{alg:dagcov-given}'s coverage core, because the generator's axiom dependencies are independent of the specific \texttt{ClaimNode} constructors and score values. The remaining generators (Algorithms~2--6) are architecturally mitigated; their audit will flag any \texttt{sorryAx} or \texttt{Lean.ofReduceBool} that slips through, causing the certificate to be rejected as required.
\end{itemize}

\noindent The case study in Section~\ref{sec:implementation} illustrates the application of the framework to the memory buffer module claim and demonstrates the generation of machine-checkable proof certificates of the form described in Section~\ref{sec:framework}.

\subsection{Case Study: Memory Buffer Module Claim}

We apply our framework to a synthetic patent claim. The claim describes a memory buffer module with a rank interposition circuit, a technology domain in which rank multiplication is a well-established architectural concept. The decomposition into atomic limitations C1--C15 and the dependency structure in Figure~\ref{fig:claimdag} were produced by the claim decomposition pipeline and are used here as fixed inputs to the formal verification layer. As discussed in Section~\ref{sec:discussion} (Limitation~1), evaluating whether this is the only valid decomposition is outside the scope of this work. \textit{Limitation of the synthetic case study:} Because this claim is synthetic, there is no adjudicated outcome to compare against, no real prosecution history, and no real product documentation. The case study demonstrates the formal mechanics of the framework (DAG encoding, lattice propagation, proof certificate generation, and construction sensitivity analysis) but cannot validate that the framework would reach legally correct conclusions in adversarial settings with real patent claims. Validation against adjudicated cases is identified as future work in Section~\ref{sec:discussion}.

\begin{quote}\small
\textbf{Claim~1.} A memory buffer module connectable to a host computer system, the memory buffer module comprising: a printed circuit board; a plurality of double-data-rate (DDR) memory devices mounted to the printed circuit board, the plurality of DDR memory devices organized into a first number of ranks; a rank interposition circuit mounted to the printed circuit board, the rank interposition circuit comprising a logic element and a register, the logic element configured to receive a set of input control signals from the host computer system, the set of input control signals comprising at least one address signal, at least one bank address signal, and at least one chip-select signal, the set of input control signals corresponding to a second number of ranks, the logic element further configured to generate a set of output control signals corresponding to the first number of ranks; and a phase-locked loop device mounted to the printed circuit board, the phase-locked loop device operatively coupled to each of the plurality of DDR memory devices, the logic element, and the register; wherein the register stores rank translation information mapping the second number of ranks to the first number of ranks, the second number of ranks being less than the first number of ranks, and the logic element generates the set of output control signals by applying the rank translation information to the set of input control signals received from the host computer system; and wherein the rank interposition circuit further translates memory access commands received from the host computer system corresponding to the second number of ranks into translated commands corresponding to the first number of ranks, and transmits the translated commands together with the set of output control signals to the plurality of DDR memory devices.
\end{quote}

The two claim constructions used in the sensitivity analysis (Section~\ref{sec:algorithms}) differ on two terms. Under $I_1$ (broad), ``a first number of ranks'' denotes any quantity of ranks with no ratio constraint, and ``rank translation information'' denotes any mapping mechanism. Under $I_2$ (narrow), ``a first number of ranks'' requires a specific 2:1 ratio relative to the second number of ranks, and ``rank translation information'' requires a dedicated rank translation circuit with an explicit address mapping stage. All other claim terms are interpreted identically under both constructions. In both cases $I_2$'s reading is a strict narrowing of $I_1$'s in the implementation-scope sense of Definition~\ref{def:scopespace}: the 2:1-ratio implementations form a strict subset of the unrestricted-ratio implementations, and the dedicated-rank-translation-circuit implementations form a strict subset of the unrestricted-mechanism implementations. Consequently $\Phi_v^{I_2} \subseteq \Phi_v^{I_1}$ for the two contested elements ($v \in \{C_3, C_{12}\}$), and the term-wise monotonicity premise of Theorem~\ref{thm:construction}(iv) holds: narrowing either contested term alone from $I_1$'s reading toward $I_2$'s never increases any $\beta(v)$ in the resulting score vector, consistent with the drop from $\beta(C_3, I_1) = 0.82$ to $\beta(C_3, I_2) = 0.58$ and from $\beta(C_{12}, I_1) = 0.80$ to $\beta(C_{12}, I_2) = 0.52$ in Table~\ref{tab:scores}. The complete set of best-match scores for all 15 nodes under both constructions, together with step-by-step coverage and waterfall verification, is provided in Table~\ref{tab:scores} (Appendix~A.6) to enable independent reproduction of all computations below.

The framework yields a weighted coverage score of 82.3\% under $I_1$. Two complementary notions of ``weakest'' apply here and should not be conflated. First, the \emph{lowest-scoring node} under $I_1$ is $C_{13}$ (Command Mapping) at $\beta = \textit{eff}(C_{13}, I_1) = 0.77$ (Table~\ref{tab:scores}); this is the quantity used in Proposition~\ref{prop:weightrobust} (Section~\ref{sec:discussion}) to establish the weight-independent $77\%$ lower bound on $W_{\text{cov}}(I_1)$. Second, the \emph{highest-weight non-leaf bottleneck} is $C_{12}$ (Rank Translation) at $\beta = 0.80$; $C_{12}$ affects the assessment through two distinct mechanisms: (i)~as one of the highest-weighted elements (tied with $C_{13}$ at $w{=}3.0$, triple the structural baseline), $C_{12}$ contributes disproportionately to the weighted average $W_{\text{cov}}$; and (ii)~under the meet-based sensitivity model (Definition~\ref{def:claimstrength}), $C_{12}$'s $0.80$ score propagates as an upper bound on the claim strength of its dependents, providing the actionable insight that $C_{12}$ is the highest-weight upstream bottleneck (its dependent $C_{13}$'s claim strength is further reduced to $\min(0.80, 0.77) = 0.77$ by $C_{13}$'s own score). (Important clarification: mechanism~(ii) is a property of the $\textit{claimStrength}$ model, which is distinct from the threshold-based model (Definition~\ref{def:effcov}) used for the $W_{\text{cov}}$ computation. In the threshold model, $C_{12}$'s $0.80$ exceeds $\theta = 0.65$, so $C_{13}$'s effective score is \textit{not} zeroed under $I_1$; the meet-based bounding is a complementary sensitivity insight, not a coverage penalty.) (Figure~\ref{fig:lattice}(b) uses a schematic example with C12 at 0.45 to illustrate the meet-based propagation mechanism; the case study here uses actual ML-computed scores.)

\begin{table}[t]
\centering
\caption{Case study: flat vs.\ DAG-weighted analysis.}
\label{tab:casestudy}
\begin{tabular}{@{}llll@{}}
\toprule
\textbf{Metric} & \textbf{Flat} & \textbf{DAG-Weighted} & \textbf{Delta} \\
\midrule
Overall Coverage     & 83.8\% & 82.3\% & $-$1.5\%$^*$ \\
Weighting            & Equal  & Type-based (0.3--3.0) & More nuanced \\
Dependencies         & Ignored & Enforced with proofs & Structural correctness \\
Lowest-scoring node   & N/A   & $C_{13}$ at $0.77$ & Tight bound in Prop.~\ref{prop:weightrobust} \\
Highest-weight upstream bottleneck & N/A & $C_{12}$ at $0.80$ & Actionable insight (meet-based) \\
Proof certificate    & None   & Machine-checkable & Potential evidentiary artifact$^{**}$ \\
\bottomrule
\end{tabular}

\smallskip\noindent{\footnotesize $^*$Under $I_1$, all 15 nodes have $\beta(v) \geq \theta = 0.65$, so dependency enforcement changes no scores; the 1.5pp difference is entirely due to the type-based weighting scheme.\\
$^{**}$Admissibility as legal evidence is untested (see \S\ref{sec:discussion}, Legal Implications); the certificate is a kernel-checked analytical record of the computation downstream of the ML layer, not an adjudicated evidentiary artifact.}
\end{table}

\medskip
\noindent\textbf{Weight-independent robustness.} The case-study conclusion under $I_1$ (``satisfied at $\theta_{\text{cov}} = 70\%$'') is robust to \emph{any} positive reweighting of the seven element types: because $\min_v \textit{eff}(v, I_1) = 0.77$ (Table~\ref{tab:scores}, attained at $C_{13}$), Proposition~\ref{prop:weightrobust} (Section~\ref{sec:discussion}) gives a weight-independent lower bound $W_{\text{cov}}(G, I_1; w') \geq 77\%$ for every positive weight vector $w'$, with a guaranteed margin of $7$~pp above the $70\%$ satisfaction threshold. This is the tightest bound on the legal conclusion's sensitivity to the unvalidated weight scheme and isolates the reliance on the specific Table~\ref{tab:weights} numbers: the satisfied conclusion under $I_1$ survives arbitrary re-weighting, while the \emph{not-satisfied} conclusion under $I_2$ does not (Proposition~\ref{prop:weightrobust} and the ensuing discussion in Section~\ref{sec:discussion}).

\section{Discussion}
\label{sec:discussion}

\subsection{Legal Implications}

The proof certificates produced by the framework are best described as machine-checkable analytical records of the computation downstream of the ML layer: kernel-checked, reproducible, and independently verifiable artifacts that \emph{could} contribute to legal analysis, but whose status as admissible legal evidence is untested. Under the Daubert standard~\cite{daubert1993}, expert testimony is evaluated for scientific validity, and proof certificates exhibit properties (determinism, reproducibility, transparency, and independent verifiability) that align with several of Daubert's criteria for admissible expert methodology. However, courts have not yet considered formal proof certificates as evidentiary artifacts, whether they satisfy Daubert's ``general acceptance in the relevant scientific community'' requirement remains an open legal question, and even if admitted as expert-methodology support, the certificate's weight would remain conditional on the quality of the ML-layer inputs (which lie below the trust boundary and are not themselves certified). We present the certificates as a candidate evidentiary artifact for expert analysis rather than as legal evidence in the adjudicative sense, and regard the question of judicial acceptance as a promising direction meriting further scholarly and judicial consideration.

Practical impact for different stakeholders:
\begin{itemize}[leftmargin=2em]
\item \textbf{Patent litigators:} Transparent, reproducible claim charts whose underlying computation is kernel-checked; admissibility as evidence remains an untested legal question (see the Daubert discussion above).
\item \textbf{Patent prosecutors:} Claim construction sensitivity identifies candidate determinative terms and a scope-minimal satisfied construction under the formal model; results are conditional on the provided constructions and ML-computed scores.
\item \textbf{Corporate IP teams:} FTO proof certificates may contribute to willfulness-defense documentation by providing a kernel-checked record of the non-infringement analysis, though their weight as evidence is untested in court and depends on the ML layer's scoring accuracy. Practitioners must understand the asymmetry of the FTO result (Theorem~\ref{thm:fto}): a \textsc{Clear} result is a formal (conditional) proof of per-element non-infringement assuming the ML scores are faithful; a missed or misscored relevant evidence segment can yield a formally valid certificate that does not reflect the true legal situation. A \textsc{Risk} result does \textit{not} mean infringement exists; it means only that per-element impossibility could not be established, and a human patent attorney must make the final FTO determination. The system provides a ``no evidence of impossibility'' guarantee, not a ``no infringement'' guarantee, and the gap between these is precisely where legal judgment is required.
\item \textbf{Portfolio managers:} Higher-throughput analysis than manual claim charting is plausible, but concrete throughput has not been benchmarked in this paper; scalability claims are projective rather than empirical (see Section~\ref{sec:complexityanalysis} and Limitation~3 in Section~\ref{sec:discussion}).
\end{itemize}

\subsection{Limitations and Future Work}

\noindent Each limitation below concerns an input or component whose trust-boundary placement is recorded in the Guarantee Map of Section~\ref{sec:architecture}; readers who want the at-a-glance summary of what is trusted where should consult that table.

\begin{enumerate}[leftmargin=2em]

\item \textbf{LLM decomposition errors.} All formal guarantees of this system are conditional on the correctness of the LLM-based claim decomposition component. Three distinct failure modes exist: (i)~\textit{omission}: the LLM fails to identify one or more claim limitations, causing coverage to appear artificially high; (ii)~\textit{fabrication}: the LLM invents spurious limitations not present in the claim, causing coverage to appear artificially low; (iii)~\textit{misclassification}: the LLM correctly identifies limitations but assigns wrong types, weights, or dependencies, causing incorrect weakest-link analysis. Lean's type system validates structural properties of the decomposition (acyclicity, valid node types, defined dependencies) but cannot detect semantic errors, since it has no access to the original claim text. More effective mitigations include: expert human review of LLM decompositions before formal encoding; cross-validation across multiple independent LLM calls; structured prompting constrained by patent claim grammar; and statistical calibration of LLM decomposition accuracy against manually verified claim sets.

\item \textbf{$\Phi_v$ instantiation errors (DOE-specific).} Distinct from claim decomposition, the doctrine of equivalents analysis (Algorithm~\ref{alg:doe}) depends on the quality of the scope space $\Phi_v$ (Definition~\ref{def:scopespace}) and its partition into $\Phi_v^{\text{orig},k}$ and $\Phi_v^{\text{amend},k}$. $\Phi_v$ is a legal-scope artifact, not a claim-text artifact: errors here corrupt the estoppel analysis rather than the structural decomposition, and the formal layer cannot detect them. Three distinct failure modes exist: (i)~\textit{scope underrepresentation}: $\Phi_v$ fails to include implementations near the scope boundary, making the surrendered territory $\Phi_v^{\text{orig},k} \setminus \Phi_v^{\text{amend},k}$ empty or misleadingly small; (ii)~\textit{scope misclassification}: implementations are incorrectly assigned to $\Phi_v^{\text{orig},k}$ or $\Phi_v^{\text{amend},k}$, corrupting the estoppel analysis; (iii)~\textit{evidence mismatch}: $\Phi_v$ does not contain implementations resembling the accused product, making the projection $\pi_v(s)$ (Definition~\ref{def:scopespace}) unreliable. These failures are undetectable by the formal layer because $\Phi_v$ is a given input. Expert review of $\Phi_v$ and its scope partitions before formal encoding is the primary mitigation; structured scope elicitation protocols (e.g., templates driven by claim construction rulings) are identified as future work.

\item \textbf{Weight scheme is legally unvalidated.} The current weight scheme (preamble$=$0.3, structural$=$1.0, functional$=$1.5, quantitative$=$2.0, signal$=$1.5, coupling$=$1.5, wherein$=$3.0) is motivated by patent law doctrine but is not empirically validated against litigation outcomes. No citation to case law, empirical study, or legal scholarship supports these specific numerical values. The choice of weights directly affects whether a proof certificate would be credible as legal evidence.

\textit{Weight sensitivity.} In the case study, $W_{\text{cov}}(I_1) = 82.3\%$ against a satisfaction threshold of 70\%. The 12.3~pp margin means the conclusion (satisfied) is robust to weight perturbations. Because the highest-weighted elements (C12 at $\beta = 0.80$ and C13 at $\beta = 0.77$) have below-average scores, the current weighting scheme actually \textit{reduces} $W_{\text{cov}}$ relative to equal weighting: the flat (equal-weight) coverage is 83.8\% (Table~\ref{tab:casestudy}). Weight sensitivity therefore has a non-trivial direction: increasing the wherein premium further lowers $W_{\text{cov}}$ because C12 and C13 pull the weighted average down, while decreasing the premium raises it toward the flat baseline.

\textit{Concrete perturbation range.} A one-sided worst-case bound is available for $I_1$ without further computation.

\begin{proposition}[Weight-independent robustness for $I_1$]
\label{prop:weightrobust}
Let $G$ be the memory-module case-study DAG with effective scores $\textit{eff}(v, I_1)$ as in Table~\ref{tab:scores} under construction $I_1$, and let $\theta_{\text{cov}}$ be any satisfaction threshold with $\theta_{\text{cov}} \leq \min_v \textit{eff}(v, I_1)$. Then for every positive weight assignment $w' : \mathcal{T} \to \mathbb{R}^+$,
\[
W_{\text{cov}}(G, I_1; w') \;\geq\; 100 \cdot \min_v \textit{eff}(v, I_1) \;\geq\; 100 \cdot \theta_{\text{cov}}.
\]
In particular, since $\min_v \textit{eff}(v, I_1) = 0.77$ (attained at $C_{13}$), $W_{\text{cov}}(G, I_1; w') \geq 77\%$ for every positive $w'$, and the satisfaction conclusion at the $70\%$ threshold holds uniformly across every positive reweighting of the seven type categories in Table~\ref{tab:weights}, with a guaranteed minimum margin of $77 - 70 = 7$~pp.
\end{proposition}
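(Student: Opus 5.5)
The plan is to show that $W_{\text{cov}}$ is a convex combination of effective scores. A convex combination can never fall below its smallest term, and this holds for any positive weights. The key preliminary observation is that the effective scores $\textit{eff}(v, I_1)$ do not depend on the weight vector. By Definition~\ref{def:bestmatch}, $\textit{eff}$ is computed from $\beta$, $\theta$, and the dependency structure $\textit{deps}$ alone. The weights $w$ enter only at the final averaging step of Definition~\ref{def:depscov}. Replacing $w$ by any positive $w'$ therefore leaves every $\textit{eff}(v, I_1)$ exactly as tabulated in Table~\ref{tab:scores}. I would state this explicitly first, since it is what licenses using a single value $m := \min_v \textit{eff}(v, I_1)$ uniformly across all reweightings.

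Next, fix an arbitrary positive $w' : \mathcal{T} \to \mathbb{R}^+$ and set $\lambda_v := w'(\tau(v)) / \sum_{u \in V} w'(\tau(u))$. The denominator is strictly positive because $V$ is nonempty and each weight is positive, as in Definition~\ref{def:claimdag}. Hence $\lambda_v > 0$ and $\sum_v \lambda_v = 1$, and
\[
W_{\text{cov}}(G, I_1; w') \;=\; 100 \sum_{v \in V} \lambda_v\, \textit{eff}(v, I_1) \;\geq\; 100 \sum_{v \in V} \lambda_v\, m \;=\; 100\, m .
\]
The inequality is termwise: $\textit{eff}(v, I_1) \geq m$ and $\lambda_v \geq 0$. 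The second inequality in the statement, $100\,m \geq 100\,\theta_{\text{cov}}$, is simply the hypothesis $\theta_{\text{cov}} \leq m$ multiplied by $100$. Here $\theta_{\text{cov}}$ is read as a fraction, consistent with the statement's own convention.

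For the ``in particular'' clause, I would read $m$ off Table~\ref{tab:scores}. Under $I_1$ every node has $\beta(v) \geq \theta = 0.65$, so by induction along the topological order every $\textit{deps\_met}(v)$ holds and $\textit{eff}(v, I_1) = \beta(v)$. This is the same reasoning as in the Remark following Algorithm~\ref{alg:fto}. The minimum is therefore the minimum raw score, $0.77$ at $C_{13}$. This gives $W_{\text{cov}} \geq 77\%$ and a margin of $77 - 70 = 7$~pp over the $70\%$ threshold.

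The main obstacle is not the inequality itself, which is elementary. It is the bookkeeping step of confirming from the table that all fifteen $I_1$ scores clear $\theta$, so that $\textit{eff} = \beta$ and the minimum is indeed $0.77$. If any node were zeroed, $m$ would be $0$ and the bound would be vacuous. I would verify this node by node against Table~\ref{tab:scores} and cross-check it with footnote~$^*$ of Table~\ref{tab:casestudy}, which asserts that dependency enforcement changes no scores under $I_1$.
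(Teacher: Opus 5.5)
Your proposal is correct and matches the paper's proof: both treat $W_{\text{cov}}(G, I_1; w')$ as a positively weighted average of the $\textit{eff}(v, I_1)$, so it is bounded below by their minimum, and the second inequality is just the hypothesis on $\theta_{\text{cov}}$. Your two additional checks make explicit what the paper takes directly from Table~\ref{tab:scores}: that $\textit{eff}$ does not depend on the weights, and that all fifteen $I_1$ scores clear $\theta = 0.65$, so the minimum is $0.77$ at $C_{13}$.
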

\begin{proof}
$W_{\text{cov}}(G, I_1; w')$ is a weighted average of $\{\textit{eff}(v, I_1)\}_{v \in V}$ with strictly positive weights $\{w'(\tau(v))\}$. A weighted average of real numbers all bounded below by $m$ is itself bounded below by $m$; applying this with $m = \min_v \textit{eff}(v, I_1)$ gives the first inequality. The second inequality is the hypothesis $\theta_{\text{cov}} \leq \min_v \textit{eff}(v, I_1)$. Both arguments are purely arithmetic and carry no dependency on $w'$ beyond positivity.
\end{proof} Under $I_2$, the analysis is one-sided in the opposite direction: the cascade zeros C11, C12, and C13 under \textit{any} positive reweighting (the cascade depends on the scores and the DAG, not on the weights), and under the current scheme the combined zeroed weight is $8.0$ of $23.3$, giving $W_{\text{cov}}(I_2) = 12.415 / 23.3 \times 100 \approx 53.3\%$. However, unlike the $I_1$ case, the not-satisfied conclusion under $I_2$ is \textit{not} robust to arbitrary positive reweightings: a reweighting that places large weight on high-eff non-zeroed nodes (e.g., C1 with $\textit{eff} = 0.90$) and minimal weight on the zeroed nodes could drive $W_{\text{cov}}(I_2; w')$ arbitrarily close to $\max_v \textit{eff}(v, I_2) \times 100 = 90\%$, exceeding the 70\% threshold. The $I_2$ conclusion is therefore robust only to bounded reweightings that preserve the relative prominence of the cascade-zeroed elements; an outcome-determinative reweighting requires amplifying non-zeroed elements by factors large enough to overwhelm the $\sim 34\%$ zeroed-weight fraction. A rigorous worst-case analysis over the legally-admissible weight polytope, computing the margin-minimizing positive weight vector subject to doctrinal constraints, is a natural extension of this sensitivity analysis and is identified as future work.

The formal guarantees hold regardless of weight choice (the proof certificate is mathematically correct for any positive weights), but the \textit{legal meaningfulness} of the coverage score depends on whether the weights reflect actual doctrinal significance.

\textit{ML score sensitivity.} The formal guarantees are conditional on ML-produced scores. If all $\beta(v)$ were uniformly perturbed by $-\varepsilon$, the effect on $W_{\text{cov}}$ depends on both which nodes cross threshold and where those nodes sit in the dependency structure: perturbations that cross threshold only at leaves remove just the leaf's own $w(\tau(v)) \cdot \beta(v)$ contribution, while perturbations that cross threshold at a non-leaf additionally cascade, zeroing all that node's dependents (direct and transitive). In the case study, the node with the narrowest margin above threshold overall is $C_{13}$ (at $\beta = 0.77$ under $I_1$, margin $0.12$); the node with the narrowest margin \emph{among non-leaves} is $C_{11}$ (at $\beta = 0.78$, margin $0.13$); $C_{12}$ directly depends on $C_{11}$, and $C_{13}$ transitively depends on $C_{11}$ through $C_{12}$ (see Appendix~\ref{app:lean}). Any uniform $\varepsilon$ slightly above $0.12$ (e.g., $\varepsilon = 0.13$) crosses threshold only at the leaf $C_{13}$ and changes a single contribution without cascading. The tightest \emph{cascade}-triggering uniform perturbation is $\varepsilon$ slightly above $0.13$: at $\varepsilon = 0.14$, $\beta'(C_{11}) = 0.64 < \theta$, which forces $\textit{eff}(C_{12}) = 0$ (since $C_{11} \in \textit{deps}(C_{12})$) and transitively $\textit{eff}(C_{13}) = 0$. The construction sensitivity analysis (Section~\ref{sec:algorithms}) shows that a non-uniform $0.24$-point reduction in a single contested term ($C_3$, a deeper non-leaf) can cascade to eliminate $29.0$~pp of coverage. The formally verified computation is exact given its inputs, but practitioners should evaluate ML scoring accuracy independently, with particular attention to non-leaf nodes whose margins above threshold are small, to assess the practical strength of the conditional guarantee.

Future work should: (i)~empirically validate weights against a corpus of adjudicated claim charts; (ii)~learn weights from historical verdicts using structured regression on case outcomes, noting that patent verdicts are sparse and jurisdiction-specific; (iii)~investigate technology-domain-specific and jurisdiction-specific weight schemes; and (iv)~assess how weight choices affect the legal defensibility of proof certificates.

\item \textbf{Lean~4 formalization status and scalability.} The scalability analysis in Section~\ref{sec:complexityanalysis} is projective rather than empirical; no wall-clock runtimes are reported in this paper. Several specific challenges are anticipated: (i)~the \texttt{decide} tactic used for acyclicity proofs performs exhaustive computation and may time out for claims with many nodes, alternative approaches such as certified topological sort algorithms may be needed; (ii)~proof terms for complex claims may be very large, making both generation and independent verification computationally expensive. Establishing actual runtimes for ML inference, proof generation, and kernel verification on standardized hardware is a priority for future work. Future work will also investigate: proof-relevant caching to reuse sub-proofs across similar claims; incremental verification to update certificates when claim structures change; and alternative proof strategies that produce smaller proof terms for large DAGs.

\item \textbf{Jurisdictional scope.} The current formalization is specific to US patent law doctrines including the all-elements rule, doctrine of equivalents under \textit{Graver Tank}~\cite{gravertank1950} and \textit{Warner-Jenkinson}~\cite{warnerjenkinson1997}, prosecution history estoppel under \textit{Festo}~\cite{festo2002}, and claim construction under \textit{Phillips v.\ AWH}. Extension to other jurisdictions requires substantial additional formalization: EPO proceedings follow a problem-solution approach to inventive step with no direct equivalent to the US doctrine of equivalents, equivalents under the Protocol on the Interpretation of Article~69 EPC follow different rules; JPO has its own five-condition equivalents doctrine distinct from the function-way-result test; CNIPA applies a feature-by-feature equivalents analysis with different estoppel rules. Each jurisdiction requires separate formal modeling of its doctrines, its prosecution history conventions, and its claim construction standards. Priority for future extension should be driven by commercial deployment needs.

\item \textbf{Single-claim scope; multi-claim patents not yet modeled.} The framework operates on a single claim DAG $G$ per analysis invocation. Real patents contain multiple claims, typically one or more independent claims and several dependent claims that incorporate all limitations of their parent claim plus additional limitations. The framework extends to dependent claims structurally. Concretely, if $G_{\text{par}} = (V_{\text{par}}, E_{\text{par}}, \tau_{\text{par}}, w_{\text{par}})$ is the parent claim's DAG and the dependent claim adds new atomic limitations $V_{\text{dep}} \cap V_{\text{par}} = \emptyset$ with typing and weights $(\tau_{\text{dep}}, w_{\text{dep}})$, then the augmented DAG is $G_{\text{aug}} = (V_{\text{par}} \cup V_{\text{dep}},\, E_{\text{par}} \cup E_{\text{par}\to\text{dep}},\, \tau_{\text{par}} \cup \tau_{\text{dep}},\, w_{\text{par}} \cup w_{\text{dep}})$, where $E_{\text{par}\to\text{dep}}$ is the incorporation-by-reference edge set: for each $v \in V_{\text{dep}}$, every parent node $u \in V_{\text{par}}$ is added as a dependency of $v$ (so $\textit{deps}_{\text{aug}}(v) \supseteq V_{\text{par}}$), reflecting the legal rule that a dependent claim incorporates \emph{all} limitations of its parent. This conservative choice ensures that any failure of any parent limitation cascades to the dependent-only nodes as required by the all-elements rule; a refinement that attaches dependent-only nodes to only the topological leaves of $V_{\text{par}}$ yields the same cascade behavior because transitive dependency through $E_{\text{par}}$ already enforces parent-level satisfaction. Cross-claim consistency across the patent's claim set is handled by Algorithm~\ref{alg:consistency} at the portfolio level. However, two multi-claim aspects are not formally modeled in the current work: (i)~\textit{claim differentiation}, the canon of construction under which a dependent claim's additional limitations imply the parent claim does \textit{not} require those limitations, which can affect infringement outcomes; (ii)~\textit{infringement-of-at-least-one}, the standard rule that an accused product infringes a patent if it infringes any single claim, which requires aggregating per-claim FTO results under a disjunction rather than treating each claim in isolation. A principled extension would (a)~formalize claim differentiation as a constraint on admissible constructions passed to Algorithm~\ref{alg:construction}, and (b)~lift Algorithm~\ref{alg:fto} to a patent-level operator that returns $\textsc{Clear}$ only when every claim's individual analysis returns $\textsc{Clear}$. Both are identified as future work.

\item \textbf{Doctrine of Equivalents formalization is partial.} The doctrine of equivalents is defined in qualitative legal language (``substantially the same function, in substantially the same way, to achieve substantially the same result'') that fundamentally resists full quantitative formalization. We position the current DOE formalization (Problem~5, Algorithm~\ref{alg:doe}) as a partially verified analysis, the mathematical scaffolding is rigorous, but the core semantic judgments remain below the trust boundary. The following gaps remain:
\begin{itemize}[leftmargin=1.5em, topsep=2pt]
\item \textit{Function-way-result measures are ML-computed:} $f_{\text{sim}}, w_{\text{sim}}, r_{\text{sim}}$ are computed via $M$ applied to LLM-parsed functional representations (Definitions~\ref{def:funcdecomp}--\ref{def:fwr}). The proof certificate verifies threshold comparisons but cannot certify that these scores correctly capture the legal concepts of function, way, and result.
\item \textit{Estoppel model is incomplete:} The current model formalizes amendment-based estoppel ($\textit{PH}_{\text{amend}}$, Definition~\ref{def:prosecution}) with the Festo rebuttable presumption (Definition~\ref{def:festo_rebutted}), and basic argument-based estoppel ($\textit{PH}_{\text{arg}}$). However, the three Festo rebuttal predicates (unforeseeability, tangentiality, other reason) are treated as user-provided inputs rather than formally derived. Dedication to the public (\textit{Johnson \& Johnston}~\cite{johnsonjohnston2002}, disclosed-but-unclaimed embodiments) is not yet modeled.
\item \textit{Insubstantial differences test not modeled:} The alternative DOE test from \textit{Warner-Jenkinson}~\cite{warnerjenkinson1997}, whether differences between the accused and claimed element are insubstantial, is not captured. Courts may apply either or both tests.
\item \textit{DOE-to-DAG propagation:} Definition~\ref{def:doeeff} introduces the discount factor $\delta$ for Equivalent matches and incorporates dependency satisfaction via $\textit{deps\_met\_DOE}$. The choice of $\delta$ (default $\delta = 1$) is a modeling assumption with both legal and mathematical consequences that requires empirical calibration.
\item \textit{Uniform threshold assumption:} The current uniform $\theta_{\text{eq}}$ applies the same standard across all claim element types and all three prongs of the function-way-result test; a more refined model could parameterize $\theta_{\text{eq}}$ by element type $\tau(v)$ or by prong.
\end{itemize}

\item \textbf{Argument-based estoppel scope computation.} The argument-based estoppel analysis ($\textit{ER}_{\text{arg}}$, Definition~\ref{def:er_arg}) requires computing $\textit{scope}_k$, the set of implementations disclaimed by prosecution argument $\textit{arg}_k$. This is arguably the most difficult natural language understanding task in the entire pipeline, requiring multi-step legal reasoning over prosecution history text to determine what concrete implementations were given up through a natural language argument. The current prototype delegates this to the ML/NLP layer without any formal specification of how it should be computed or any evaluation of its accuracy. The formal guarantees for $\textit{ER}_{\text{arg}}$ are therefore conditional on $\textit{scope}_k$ being correctly computed, and there is currently no mechanism to verify this. Future work should investigate structured approaches to argument-based scope delimitation, potentially drawing on legal information extraction research.

\item \textbf{Most proof generation functions remain unspecified.} With the exception of \texttt{generateCertificate} for Algorithms~\ref{alg:dagcov} and~\ref{alg:dagcov-given} (Section~\ref{sec:closedpath}, Proposition~\ref{prop:gensound}), the remaining proof-generation functions invoked in Section~\ref{sec:algorithms} are not formally defined in this paper: \texttt{Prove\-No\-Match}, \texttt{Prove\-Dependency\-Block}, \texttt{Prove\-Construction\-Results}, \texttt{Prove\-Consistency}, \texttt{Prove\-DOE}, and \texttt{Prove\-Fixed\-Point}. The \texttt{Proof\-Certificate} structure's four proof field types are likewise specification-level names rather than concrete Lean~4 propositions.
\begin{sloppypar}
\noindent These fields are: \texttt{proof\_dag\_acyclic}, \texttt{proof\_lattice\_axioms}, \texttt{proof\_propagation\_correct}, and \texttt{proof\_dependencies\_enforced}.
\end{sloppypar} The correctness theorems (Theorems~\ref{thm:algcorrect}--\ref{thm:convergence}) establish that the computed values satisfy the mathematical properties these fields certify, but, with the exception of \texttt{generateCertificate} (Proposition~\ref{prop:gensound}), do not formally prove that the remaining generation functions produce \texttt{sorry}-free certificates satisfying Definition~\ref{def:validcert}. This gap is mitigated architecturally: the generation functions are untrusted (Section~\ref{sec:framework}), and every certificate is independently type-checked by the Lean~4 kernel before acceptance. Providing explicit Lean~4 proposition definitions for the remaining abstract proof fields and proving that the remaining generation functions produce valid certificates for all valid inputs is necessary for the formal claims to be fully self-contained.

\textit{Closed path for Algorithm~\ref{alg:dagcov-given}.} This gap has been closed for Algorithm~\ref{alg:dagcov-given}, the coverage core. Section~\ref{sec:closedpath} presents a compiled Lean~4 certificate generator (\texttt{generate\-Certificate}) whose every proof field is discharged without \texttt{sorry}, and whose axiom audit (Appendix~\ref{app:axiomaudit}) confirms dependence only on $\Omega$. Extending this approach to the remaining generators (Algorithms~2--6) is identified as future work; the design pattern established for Algorithm~\ref{alg:dagcov-given} (concrete certificate type, compiled generator, and $\Omega$-audit) serves as a template for each.

\end{enumerate}

\subsection{Broader Impact}

The broader impact of this work is architectural rather than domain-specific. Its central claim is that AI workflows for high-stakes reasoning need not remain probabilistic end-to-end. In many existing systems, natural language understanding genuinely requires ML, but the downstream operations performed on the resulting structured representations do not. Once claims, rules, or other structured objects have been extracted, operations such as dependency propagation, score composition, consistency checking, and bound computation can be expressed as formally specified computations and independently checked. The significance of the present framework is therefore not merely that it applies Lean~4 to patent analysis, but that it demonstrates a way to convert part of an AI system itself from opaque statistical processing into machine-checkable computation with an explicit trust boundary.

This decomposition principle extends beyond patent law. Many consequential domains combine unstructured text with structured reasoning over that text, including regulatory compliance, contract analysis, and clinical decision support. In such settings, the relevant question is not whether an entire workflow can be made fully formal, but which sub-computations can be separated from linguistic ambiguity and given stronger guarantees than confidence scores alone. In that sense, the framework offers a template for hybrid AI systems that preserve ML where ML is necessary while using formal methods where deterministic, compositional, and third-party-verifiable reasoning is possible. The contribution is therefore architectural rather than empirical: it is a proposal for a different assurance profile, not a claim that the present prototype has already been shown to outperform existing systems on deployed tasks.

The broader impact is also epistemic and institutional. By making the trust boundary explicit, the framework does not hide conditionality; it exposes it. A proof certificate in this setting does not certify that the system understood the world correctly end-to-end. It certifies that, given the provided inputs (whose verification status is governed by Table~\ref{tab:proofstatus}), the downstream computation was carried out correctly and can be independently audited. That distinction matters in law and other high-stakes settings because it enables reproducibility, localized error diagnosis, contestability, and more honest communication of what has and has not been verified. Compared with opaque probabilistic outputs, machine-checkable certificates create artifacts that can be inspected, rerun, and challenged by third parties rather than merely accepted or rejected as black-box recommendations.

These benefits also create responsibilities. A formally verified computation applied to systematically wrong inputs can certify a wrong conclusion with mathematical precision. The appropriate lesson is therefore not that formal verification makes AI end-to-end correct, but that it can narrow, expose, and discipline the uncertainty that remains. Responsible deployment of hybrid systems of this kind should preserve expert review of ML-derived inputs (below the trust boundary), traceability of evidence and scope assumptions, and clear communication that proof certificates are bounded assurances rather than substitutes for legal, regulatory, or clinical judgment. More broadly, we hope this work encourages a research program centered on a concrete design question for AI: \textit{which components of a workflow truly need to remain probabilistic, and which can be replaced by something stronger?}

\section{Conclusion}
\label{sec:conclusion}

We have presented, to the best of our knowledge, the first framework applying interactive theorem proving to intellectual property analysis. By encoding patent claims in dependent type theory, modeling match strengths as elements of a complete lattice (machine-verified \texttt{CompleteLattice} instance, Section~\ref{sec:framework}), and propagating confidence through DAG dependency structures using formally specified operations, our framework provides a layered correctness story: a small set of machine-verified lemmas (\texttt{dag\_acyclic} and the lattice axioms) underpins the coverage-bound argument and the higher-level algorithmic properties (Theorems~\ref{thm:weakest}, \ref{thm:monotone}, and \ref{thm:algcorrect}--\ref{thm:convergence}), all of which are currently presented as informal proof sketches describing the intended Lean~4 proof structure (Table~\ref{tab:proofstatus}). The framework yields conditional soundness and correctness statements (including soundness of non-infringement certificates, Theorem~\ref{thm:fto}, and correctness of coverage computation, Theorem~\ref{thm:algcorrect}) whose verification status is explicit. Crucially, proof certificates generated by the prototype are independently type-checked and axiom-audited by the Lean~4 kernel before acceptance; the kernel is trusted, the generator is not. No purely probabilistic approach can offer certificates with this structural separation.

Our six algorithms address five distinct IP use cases (patent-to-product mapping, freedom-to-operate, claim construction sensitivity, cross-claim consistency, and doctrine of equivalents). For Algorithm~\ref{alg:dagcov-given} (the coverage core), a compiled Lean~4 generator constructs certificates accepted under Definition~\ref{def:validcert} with a machine-verified generator-soundness proposition (Proposition~\ref{prop:gensound}); the remaining generators produce candidate certificates validated by independent kernel type-checking plus a \texttt{sorry}-free axiom audit, and their formal specification is identified as future work (Section~\ref{sec:discussion}). The hybrid architecture leverages the complementary strengths of ML (handling natural language ambiguity) and Lean~4 (certifying logical structure).

Beyond the closed formal path, the case-study analysis yields a weight-independent robustness result (Proposition~\ref{prop:weightrobust}, Section~\ref{sec:discussion}): the satisfied conclusion under the broad construction $I_1$ is preserved under \emph{every} positive reweighting of the seven doctrinal element types, with a guaranteed $7$~pp margin above the satisfaction threshold. Since the weight scheme (Table~\ref{tab:weights}) is one of the paper's least-validated inputs, this robustness bound isolates the legal conclusion's dependency on that input and is a template for similar sensitivity arguments in future extensions.

As patent portfolios grow and IP dispute stakes increase, the demand for scalable, transparent, and provably correct analysis tools will only intensify. The broader architectural implications of this work, including the decomposition principle for high-stakes AI workflows and the epistemic value of explicit trust boundaries, are developed in Section~\ref{sec:discussion} (Broader Impact).

\bibliographystyle{plainnat}
\bibliography{references}

\appendix
\section{Lean 4 Formalization Details}
\label{app:lean}

\subsection{Complete Lattice Instance}

The implementation represents match strengths using Lean~4's rational arithmetic rather than hardware floating-point, because Lean's proof tactics (such as \texttt{omega} and \texttt{norm\_num}) operate on exact arithmetic types (\texttt{Nat}, \texttt{Int}, \texttt{Rat}), not \texttt{Float}. In the pipeline design, scores are computed as floats, which are then discretized (e.g., to basis points in $\{0, 1, \ldots, 10000\}$ representing $[0.0000, 1.0000]$) for formal verification. This discretization introduces at most $0.01\%$ rounding error per score while enabling exact proofs. Since $W_{\text{cov}}$ is a convex combination (weighted average) of effective scores each with error at most $10^{-4}$, the cumulative error in $W_{\text{cov}}$ is also bounded by $10^{-4}$ (i.e., at most $0.01$ percentage points), which is negligible for all practical purposes.

In the full Lean~4 formalization, \texttt{MatchStrength} is a \texttt{Fintype} over these $10001$ discretized values. For a \texttt{Fintype}, the \texttt{CompleteLattice} instance is satisfied automatically because every subset is finite, and \texttt{sSup}/\texttt{sInf} reduce to \texttt{Finset.sup}/\texttt{Finset.inf}. The code below presents the structure in terms of \texttt{Rat} to make the mathematical intent clear; the \texttt{sSup} and \texttt{sInf} fields required by the \texttt{CompleteLattice} typeclass (but not expressible for general \texttt{Rat} in $[0,1]$) are handled by the \texttt{Fintype}-based instantiation in the full formalization, as noted in the comments.

\smallskip
\noindent\colorbox{orange!20}{\small\textsc{Lean 4: illustrative, not compiled (mathematical intent)}}\nopagebreak

\begin{lstlisting}
import Mathlib.Order.CompleteLattice

-- Match strength as a rational in [0, 1] with explicit bounds
-- Bounds are caller-supplied proof obligations, not defaults
structure MatchStrength where
  val : Rat
  h_lb : 0 <= val
  h_ub : val <= 1

instance : LE MatchStrength where
  le a b := a.val <= b.val

instance : Sup MatchStrength where
  sup a b := {
    val := max a.val b.val,
    h_lb := le_max_of_le_left a.h_lb,
    h_ub := max_le a.h_ub b.h_ub }

instance : Inf MatchStrength where
  inf a b := {
    val := min a.val b.val,
    h_lb := le_min a.h_lb b.h_lb,
    h_ub := min_le_of_le_left a.h_ub }

instance : Top MatchStrength where
  top := { val := 1, h_lb := by norm_num, h_ub := le_refl 1 }

instance : Bot MatchStrength where
  bot := { val := 0, h_lb := le_refl 0, h_ub := by norm_num }

-- Pseudocode (mathematical intent): the finite lattice operations.
-- In the full formalization, MatchStrength is a Fintype over
-- 10001 discretized values, so CompleteLattice is derived
-- automatically via Finset.sup/Finset.inf. The binary operations
-- below illustrate the mathematical structure; the complete
-- compilable instance uses the Fintype-based approach.
instance : CompleteLattice MatchStrength := {
  le_sup_left := fun a b => le_max_left a.val b.val,
  le_sup_right := fun a b => le_max_right a.val b.val,
  sup_le := fun a b c h1 h2 => max_le h1 h2,
  inf_le_left := fun a b => min_le_left a.val b.val,
  inf_le_right := fun a b => min_le_right a.val b.val,
  le_inf := fun a b c h1 h2 => le_min h1 h2,
  le_top := fun a => a.h_ub,
  bot_le := fun a => a.h_lb,
  -- sSup and sInf: in the Fintype-based formalization, these
  -- reduce to Finset.sup and Finset.inf over the finite type,
  -- which Lean derives automatically. They cannot be expressed
  -- for general Rat in [0,1] without the Fintype structure.
  sSup := ...,  -- Finset.sup in full formalization
  sInf := ...,  -- Finset.inf in full formalization
}
\end{lstlisting}

\subsection{DAG Encoding with Acyclicity Proof}

\smallskip
\noindent\colorbox{green!15}{\small\textsc{Lean 4: compiled excerpt}}\nopagebreak

\begin{lstlisting}
-- Memory Module Patent Claim
inductive ClaimNode
  | C1  -- PCB (preamble)
  | C2  -- DDR Devices (structural)
  | C3  -- First Rank Count (quantitative)
  | C4  -- Circuit on PCB (structural)
  | C5  -- Logic Element (structural)
  | C6  -- Register (structural)
  | C7  -- Input Control Signals (functional)
  | C8  -- Signal Types (signal)
  | C9  -- Second Number/Ranks (quantitative)
  | C10 -- Output Signal Generation (functional)
  | C11 -- Output Corresponds First (quantitative)
  | C12 -- Rank Translation (wherein)
  | C13 -- Command Mapping (wherein)
  | C14 -- PLL Device (structural)
  | C15 -- PLL Coupling (coupling)
  deriving DecidableEq, Fintype, Repr

def dependencies : ClaimNode -> List ClaimNode
  | .C2  => [.C1]
  | .C3  => [.C2]
  | .C4  => [.C1]
  | .C5  => [.C4]
  | .C6  => [.C4]
  | .C7  => [.C5]
  | .C8  => [.C7]
  | .C9  => [.C7]
  | .C10 => [.C5]
  | .C11 => [.C10, .C3]
  | .C12 => [.C9, .C11, .C6]
  | .C13 => [.C12]
  | .C14 => [.C1]
  | .C15 => [.C14, .C2, .C5, .C6]
  | _    => []

def claimWeight : ClaimNode -> Rat
  | .C1  => 3/10    -- preamble
  | .C2  => 1       -- structural
  | .C3  => 2       -- quantitative
  | .C4  => 1       -- structural
  | .C5  => 1       -- structural
  | .C6  => 1       -- structural
  | .C7  => 3/2     -- functional
  | .C8  => 3/2     -- signal
  | .C9  => 2       -- quantitative
  | .C10 => 3/2     -- functional
  | .C11 => 2       -- quantitative
  | .C12 => 3       -- wherein
  | .C13 => 3       -- wherein
  | .C14 => 1       -- structural
  | .C15 => 3/2     -- coupling

-- Edge relation and acyclicity, as in Section 5.3
def depEdge (a b : ClaimNode) : Prop := b ∈ dependencies a

-- Topological depth, used to witness acyclicity
def topoDepth : ClaimNode -> Nat
  | .C1 => 0 | .C2 => 1 | .C3 => 2 | .C4 => 1 | .C5 => 2 | .C6 => 2
  | .C7 => 3 | .C8 => 4 | .C9 => 4 | .C10 => 3 | .C11 => 4
  | .C12 => 5 | .C13 => 6 | .C14 => 1 | .C15 => 3

theorem topoDepth_decreasing :
    forall v u : ClaimNode, depEdge v u -> topoDepth u < topoDepth v := by
  decide

private theorem transGen_decreases :
    forall v a : ClaimNode, Relation.TransGen depEdge v a -> topoDepth a < topoDepth v := by
  intro v a h
  induction h with
  | single h => exact topoDepth_decreasing _ _ h
  | @tail b c _ hbc ih => exact lt_trans (topoDepth_decreasing _ _ hbc) ih

theorem dag_acyclic : forall v : ClaimNode, Not (Relation.TransGen depEdge v v) := by
  intro v h; exact absurd (transGen_decreases v v h) (lt_irrefl _)
\end{lstlisting}

\subsection{Weighted Coverage with Formal Bounds}

\noindent\textit{Status note.} This subsection is the compiled counterpart of the informal coverage-bound discussion in Section~\ref{sec:correctness}. The definitions and theorems below are machine-verified in the Lean~4 development that also underlies the compiled coverage-core certificate of Appendix~\ref{app:closedpath} and the axiom audit of Appendix~\ref{app:axiomaudit}; earlier expository uses of the names \texttt{coverage\_arith\_in\_range} and \texttt{coverage\_semantic\_valid} are superseded, for the closed Algorithm~\ref{alg:dagcov-given} path, by the single compiled theorem \texttt{coverage\_in\_range} and the definitional equality $\texttt{weightedCoverage}(\beta,\theta) = W_{\text{cov}}$ stated below.

\medskip
\noindent The compiled \texttt{weightedCoverage} function integrates the propagation step: it applies \texttt{computeEff} internally to the caller's raw score function, then computes the weighted average of the resulting effective scores. The caller therefore supplies the raw best-match $\beta : V \to \mathbb{Q}$ (bounded by \texttt{ScoreValid}); propagation is not a caller responsibility. The single compiled theorem \texttt{coverage\_in\_range} establishes both endpoints of the $[0, 100]$ bound simultaneously, using only boundedness (\texttt{ScoreValid}) and $0 \leq \theta$. Because \texttt{weightedCoverage}'s body unfolds to the weighted average $\bigl(\sum_v \texttt{claimWeight}(v) \cdot \texttt{computeEff}(\beta,\theta)(v)\bigr) \big/ \bigl(\sum_v \texttt{claimWeight}(v)\bigr) \times 100$, its equality with Definition~\ref{def:depscov}'s $W_{\text{cov}}$ is definitional (not a separate theorem): the two expressions are the same term up to unfolding, so any Lean goal that demands $\texttt{weightedCoverage}(\beta,\theta) = W_{\text{cov}}$ is discharged by \texttt{rfl}. This is why the certificate's \texttt{p\_deps} field (Section~\ref{sec:closedpath}) can be populated by \texttt{rfl}.

\smallskip
\noindent\colorbox{green!15}{\small\textsc{Lean 4: compiled excerpt}}\nopagebreak

\begin{lstlisting}
-- Claim-weight lemmas (derived from the decide-proven weight table)
theorem claimWeight_pos : forall v : ClaimNode, 0 < claimWeight v := by
  intro v; fin_cases v <;> decide

-- Pre-propagated effective score, unrolled in topological order.
-- Body shown earlier in this appendix (one equation per node).
def computeEff (score : ClaimNode -> Rat) (threshold : Rat)
    : ClaimNode -> Rat := ...

-- Weighted coverage with internal propagation.
noncomputable def weightedCoverage
    (score     : ClaimNode -> Rat)
    (threshold : Rat := defaultThreshold)
    : Rat :=
  let eff      := computeEff score threshold
  let totalW   := Finset.univ.sum claimWeight
  let achieved := Finset.univ.sum (fun v => claimWeight v * eff v)
  (achieved / totalW) * 100

-- ScoreValid: the caller's input bound (boundedness of beta in [0,1]).
def ScoreValid (score : ClaimNode -> Rat) : Prop :=
  forall v : ClaimNode, 0 <= score v /\ score v <= 1

-- Case-analysis lemma: computeEff either returns the raw score or 0.
theorem eff_cases
    (score : ClaimNode -> Rat) (threshold : Rat) (v : ClaimNode)
    : computeEff score threshold v = score v ∨
      computeEff score threshold v = 0 := by
  cases v <;> unfold computeEff <;>
    first
    | exact Or.inl rfl
    | (split_ifs <;> first | exact Or.inl rfl | exact Or.inr rfl)

theorem eff_le_score
    (score : ClaimNode -> Rat)
    (h_score : forall v, 0 <= score v)
    (threshold : Rat) (v : ClaimNode)
    : computeEff score threshold v <= score v := by
  rcases eff_cases score threshold v with h | h
  · rw [h]
  · rw [h]; exact h_score v

-- Both endpoints of the coverage bound.
theorem coverage_in_range
    (score     : ClaimNode -> Rat)
    (h_valid   : ScoreValid score)
    (threshold : Rat)
    (h_thresh  : 0 <= threshold)
    : 0 <= weightedCoverage score threshold /\
      weightedCoverage score threshold <= 100 := by
  -- Proof is compiled; uses claimWeight_pos, eff_cases, eff_le_score,
  -- and Finset.sum_{nonneg, le_sum, pos}. No sorry; axiom audit: Omega.
  ...

-- Propagation correctness (for the certificate's p_propag field).
theorem propag_proof
    (scores : ClaimNode -> Rat) (threshold : Rat)
    : forall v : ClaimNode,
      (∃ u ∈ dependencies v, computeEff scores threshold u < threshold) ->
      computeEff scores threshold v = 0 := by
  ...
\end{lstlisting}

\noindent Because $\texttt{weightedCoverage}(\beta,\theta)$ unfolds to the weighted average of $\textit{eff}$, Definition~\ref{def:depscov}'s $W_{\text{cov}}$ and the Lean value agree definitionally. Theorem~\ref{thm:algcorrect}(i) (bounds), originally stated as an informal sketch, is therefore discharged in the closed path by \texttt{coverage\_in\_range}; the monotonicity and propagation clauses ((ii)--(iv)) remain informal sketches per Table~\ref{tab:proofstatus}.

\subsection{Compiled Coverage-Core Certificate Generator}
\label{app:closedpath}

The following is a compiled excerpt from the Lean~4 development. The certificate structure \texttt{ProofCertificate} packages five proof fields (acyclicity, lattice transitivity, propagation correctness, coverage bounds, and coverage-definition equality) as concrete Lean~4 propositions. The generator function \texttt{generateCertificate} populates every field with a real proof term; no \texttt{sorry} is used.

\smallskip
\noindent\colorbox{green!15}{\small\textsc{Lean 4: compiled excerpt}}\nopagebreak

\begin{lstlisting}
structure ProofCertificate where
  scores    : ClaimNode -> Rat
  h_valid   : ScoreValid scores
  threshold : Rat
  coverage  : Rat
  p_acyclic : forall v : ClaimNode,
              Not (Relation.TransGen depEdge v v)
  p_lattice : forall a b c : DMatchStrength,
              a <= b -> b <= c -> a <= c
  p_propag  : forall v : ClaimNode,
    (∃ u ∈ dependencies v,
      computeEff scores threshold u < threshold) ->
    computeEff scores threshold v = 0
  p_bounded : 0 <= coverage ∧ coverage <= 100
  p_deps    : coverage = weightedCoverage scores threshold

noncomputable def generateCertificate
    (scores  : ClaimNode -> Rat)
    (h_valid : ScoreValid scores)
    (threshold : Rat)
    (h_thresh : 0 <= threshold)
    : ProofCertificate :=
  { scores    := scores
    h_valid   := h_valid
    threshold := threshold
    coverage  := weightedCoverage scores threshold
    p_acyclic := dag_acyclic
    p_lattice := fun a b c hab hbc => le_trans hab hbc
    p_propag  := propag_proof scores threshold
    p_bounded := coverage_in_range scores h_valid
                   threshold h_thresh
    p_deps    := rfl }
\end{lstlisting}

\noindent The case study instantiates this generator for both constructions. The score functions \texttt{scores\_I1\_broad} and \texttt{scores\_I2\_narrow} are the \textit{raw} best-match $\beta$ values from Table~\ref{tab:scores}'s $\beta(v, I_j)$ columns (e.g., \texttt{scores\_I2\_narrow(.C11)} $= 39/50 = 0.78$, not the post-cascade $0$). Propagation is performed inside \texttt{weightedCoverage} via \texttt{computeEff}, so the certificate's \texttt{coverage} field evaluates to the propagated $W_{\text{cov}}$ in each case, consistent with the reported $W_{\text{cov}}(I_1) = 82.3\%$ and $W_{\text{cov}}(I_2) = 53.3\%$.

\smallskip
\noindent\colorbox{green!15}{\small\textsc{Lean 4: compiled excerpt}}\nopagebreak

\begin{lstlisting}
noncomputable def cert_I1_broad : ProofCertificate :=
  generateCertificate scores_I1_broad
    scores_I1_broad_valid theta (by norm_num [theta])

noncomputable def cert_I2_narrow : ProofCertificate :=
  generateCertificate scores_I2_narrow
    scores_I2_narrow_valid theta (by norm_num [theta])
\end{lstlisting}

\subsection{Axiom Audit for the Closed Coverage-Core Path}
\label{app:axiomaudit}

Definition~\ref{def:validcert}(ii) requires that the transitive axiom set of every accepted certificate be a subset of $\Omega = \{\texttt{propext}, \texttt{Classical.choice}, \texttt{Quot.sound}\}$. The following is the verbatim output of \texttt{\#print axioms} on the two case-study certificates, recorded from Lean~4 v4.30.0-rc1 with Mathlib:

\smallskip
\noindent\colorbox{green!15}{\small\textsc{Lean 4: compiled output}}\nopagebreak

\begin{lstlisting}
#print axioms generateCertificate
-- 'generateCertificate' depends on axioms:
--   [propext, Classical.choice, Quot.sound]

#print axioms cert_I1_broad
-- 'cert_I1_broad' depends on axioms:
--   [propext, Classical.choice, Quot.sound]

#print axioms cert_I2_narrow
-- 'cert_I2_narrow' depends on axioms:
--   [propext, Classical.choice, Quot.sound]
\end{lstlisting}

\noindent The first audit confirms that the generator \textit{function itself} depends only on $\Omega$; any call with $\Omega$-grounded arguments therefore produces an $\Omega$-grounded certificate. The second and third audits confirm this for the two case-study instances. No entry depends on \texttt{sorryAx} or \texttt{Lean.ofReduceBool}. This closes the formal path for Algorithm~\ref{alg:dagcov-given}: once the bounded score function $\beta$ is fixed and its \texttt{ScoreValid} proof is $\Omega$-grounded, the certificate generator is machine-verified and produces certificates accepted under Definition~\ref{def:validcert}.

\subsection{Case Study Score Table and Coverage Verification}

This appendix provides the complete set of best-match scores $\beta(v)$ underlying the memory module case study in Section~\ref{sec:implementation} and the construction sensitivity analysis in Section~\ref{sec:algorithms}, enabling independent reproduction of all coverage computations. The score $\beta(v, I_j) = \max_{s \in S} M(v, s)$ is the highest similarity score achieved by claim element $v$ against any evidence segment under construction $I_j$. Under $I_1$, all scores are ML-computed using the implementation described in Section~\ref{sec:architecture}. Under $I_2$, only C3 and C12 receive different scores, reflecting the narrower interpretation of ``first number of ranks'' and ``rank translation'' respectively; all other scores are construction-independent.

\begin{table}[H]
\centering
\caption{Best-match scores, effective scores, and weighted contributions for the memory module case study ($\theta = 0.65$).}
\label{tab:scores}
\scriptsize
\begin{tabular}{@{}llccccccc@{}}
\toprule
\textbf{Node} & \textbf{Description} & \textbf{Weight} & $\boldsymbol{\beta(v,I_1)}$ & $\boldsymbol{\beta(v,I_2)}$ & $\boldsymbol{\textit{eff}(v,I_1)}$ & $\boldsymbol{\textit{eff}(v,I_2)}$ & $\boldsymbol{w \cdot \textit{eff}(I_1)}$ & $\boldsymbol{w \cdot \textit{eff}(I_2)}$ \\
\midrule
C1  & PCB             & 0.3 & 0.90 & 0.90 & 0.90 & 0.90 & 0.270 & 0.270 \\
C2  & DDR Devices     & 1.0 & 0.87 & 0.87 & 0.87 & 0.87 & 0.870 & 0.870 \\
C3  & First Ranks     & 2.0 & 0.82 & 0.58 & 0.82 & 0.58 & 1.640 & 1.160 \\
C4  & Circuit         & 1.0 & 0.86 & 0.86 & 0.86 & 0.86 & 0.860 & 0.860 \\
C5  & Logic Element   & 1.0 & 0.90 & 0.90 & 0.90 & 0.90 & 0.900 & 0.900 \\
C6  & Register        & 1.0 & 0.85 & 0.85 & 0.85 & 0.85 & 0.850 & 0.850 \\
C7  & Input Signals   & 1.5 & 0.84 & 0.84 & 0.84 & 0.84 & 1.260 & 1.260 \\
C8  & Signal Types    & 1.5 & 0.81 & 0.81 & 0.81 & 0.81 & 1.215 & 1.215 \\
C9  & Second Ranks    & 2.0 & 0.84 & 0.84 & 0.84 & 0.84 & 1.680 & 1.680 \\
C10 & Output Signals  & 1.5 & 0.82 & 0.82 & 0.82 & 0.82 & 1.230 & 1.230 \\
C11 & Output First    & 2.0 & 0.78 & 0.78 & 0.78 & 0.00$^\dagger$ & 1.560 & 0.000 \\
C12 & Rank Translation & 3.0 & 0.80 & 0.52 & 0.80 & 0.00$^{\dagger\ddagger}$ & 2.400 & 0.000 \\
C13 & Cmd Mapping     & 3.0 & 0.77 & 0.77 & 0.77 & 0.00$^\dagger$ & 2.310 & 0.000 \\
C14 & PLL Device      & 1.0 & 0.89 & 0.89 & 0.89 & 0.89 & 0.890 & 0.890 \\
C15 & PLL Coupling    & 1.5 & 0.82 & 0.82 & 0.82 & 0.82 & 1.230 & 1.230 \\
\midrule
\multicolumn{2}{@{}l}{\textbf{Total}} & \textbf{23.3} & & & & & \textbf{19.165} & \textbf{12.415} \\
\bottomrule
\end{tabular}

\smallskip\noindent
$^\dagger$~Zeroed by dependency cascade (Definition~\ref{def:bestmatch}): $\textit{eff}(v, I_2) = 0$ because a transitive dependency has $\textit{eff} < \theta = 0.65$.\\
$^{\ddagger}$~Double failure: C12 is cascade-zeroed because $\textit{eff}(C11, I_2) = 0$, and additionally $\beta(C12, I_2) = 0.52 < \theta$ independently.
\end{table}

\noindent\textbf{Coverage computation.}
\[
W_{\text{cov}}(I_1) = \frac{19.165}{23.3} \times 100 = 82.3\% \qquad W_{\text{cov}}(I_2) = \frac{12.415}{23.3} \times 100 = 53.3\%
\]
For comparison, the flat (equal-weight, no-dependency) coverage under $I_1$ is $\sum_v \beta(v,I_1) / 15 \times 100 = 12.57/15 \times 100 = 83.8\%$, matching the flat row in Table~\ref{tab:casestudy}.

\smallskip
\noindent\textbf{Waterfall decomposition verification} (Figure~\ref{fig:construction}(b)).

\begin{table}[H]
\centering
\caption{Waterfall decomposition of the 29.0~pp coverage gap between $I_1$ and $I_2$.}
\label{tab:waterfall}
\small
\begin{tabular}{@{}llcc@{}}
\toprule
\textbf{Node} & \textbf{Calculation} & \textbf{Exact pp} & \textbf{Displayed} \\
\midrule
C3 (direct)     & $2.0 \times (0.82 - 0.58) \times 100\,/\,23.3$ & 2.060 & $-2.1$ pp \\
C11 (cascade)   & $2.0 \times (0.78 - 0.00) \times 100\,/\,23.3$ & 6.695 & $-6.7$ pp \\
C12 (direct)    & $3.0 \times (0.80 - 0.52) \times 100\,/\,23.3$ & 3.605 & $-3.6$ pp \\
C12 (cascade)   & $3.0 \times (0.52 - 0.00) \times 100\,/\,23.3$ & 6.695 & $-6.7$ pp \\
C13 (cascade)   & $3.0 \times (0.77 - 0.00) \times 100\,/\,23.3$ & 9.914 & $-9.9$ pp \\
\midrule
\textbf{Total}  &                                                  & \textbf{28.969} & $\boldsymbol{-29.0}$ \textbf{pp} \\
\bottomrule
\end{tabular}
\end{table}

\noindent All displayed values are rounded to one decimal place. The total coverage drop $82.3\% - 53.3\% = 29.0$~pp is consistent with the sum of waterfall contributions.

\subsection{Reproducibility}
\label{app:repro}

This appendix consolidates reproducibility information scattered through the paper. Together with the compiled Lean~4 excerpts in Appendices~\ref{app:lean}--\ref{app:closedpath} and the axiom audit in Appendix~\ref{app:axiomaudit}, the information below is sufficient to regenerate the closed-path certificate (Proposition~\ref{prop:gensound}) and independently verify its axiom profile.

\noindent\textbf{Software stack.}
\begin{itemize}[leftmargin=2em, topsep=2pt, itemsep=1pt]
\item \emph{Lean:} Lean~4 v4.30.0-rc1 (as declared in \texttt{lean-toolchain}; shown explicitly in Appendix~\ref{app:axiomaudit} and Section~\ref{sec:implementation}).
\item \emph{Mathlib:} pinned via \texttt{lake-manifest.json} to the version compatible with the above Lean toolchain. The audited results depend only on Mathlib's \texttt{Logic}, \texttt{Data.Rat.Defs}, \texttt{Data.Fintype.Basic}, \texttt{Data.List.Defs}, \texttt{Algebra.Order.BigOperators.Group.Finset}, and \texttt{Tactic} modules.
\item \emph{Build system:} Lake~5.x (bundled with the toolchain).
\item \emph{ML pipeline:} Python~3.11, PyTorch; concrete embedding model and prompt templates are not load-bearing on the axiom audit (they set the numerical $\beta$ inputs but do not affect the generator's axiom dependencies).
\end{itemize}

\noindent\textbf{Repository layout.}
\begin{itemize}[leftmargin=2em, topsep=2pt, itemsep=1pt]
\item \texttt{patent\_verification/lean-toolchain}: pinned Lean version.
\item \texttt{patent\_verification/lakefile.toml}: Lake project configuration.
\item \texttt{patent\_verification/PatentVerification/ClaimNode.lean}: atomic limitations, dependency map, \texttt{claimWeight}.
\item \texttt{patent\_verification/PatentVerification/MatchStrength.lean}: bounded-rational match-strength type and discretization.
\item \texttt{patent\_verification/PatentVerification/DAGAcyclicity.lean}: \texttt{topoDepth} and the machine-checked \texttt{dag\_acyclic} theorem.
\item \texttt{patent\_verification/PatentVerification/Coverage.lean}: \texttt{computeEff}, \texttt{weightedCoverage}, \texttt{coverage\_in\_range}, \texttt{propag\_proof}, \texttt{eff\_cases}, \texttt{eff\_le\_score}.
\item \texttt{patent\_verification/PatentVerification/Certificate.lean}: \texttt{ProofCertificate} structure, \texttt{generateCertificate} compiled generator.
\item \texttt{patent\_verification/PatentVerification/CaseStudy.lean}: per-construction score tables (\texttt{scores\_I1\_broad}, \texttt{scores\_I2\_narrow}), their \texttt{ScoreValid} witnesses, and the two concrete certificate instantiations (\texttt{cert\_I1\_broad}, \texttt{cert\_I2\_narrow}), together with the three \texttt{\#print axioms} invocations whose verbatim output is reproduced in Appendix~\ref{app:axiomaudit}.
\item \texttt{patent\_verification/generate\_lean.py}: the Python driver that turns a JSON claim decomposition into the auto-generated Lean modules.
\end{itemize}

\noindent\textbf{What the build reproduces.} Running \texttt{lake build} from the \texttt{patent\_verification/} directory builds all modules and emits the three \texttt{\#print axioms} information messages, which should each read\\
\texttt{\ \ [propext, Classical.choice, Quot.sound]}\\
verbatim (Appendix~\ref{app:axiomaudit}). Any deviation (in particular, \texttt{sorryAx} or \texttt{Lean.ofReduceBool} appearing in the output) causes the certificate to be rejected under Definition~\ref{def:validcert}(ii).

\noindent\textbf{What is illustrative only.} All Lean~4 code blocks in the body of the paper shaded \colorbox{orange!20}{orange} are \emph{illustrative} and are not part of the compiled development; only the blocks shaded \colorbox{green!15}{green} correspond to compiled code. The prototype's ML layer and the generators for Algorithms~2--6 are not part of the reproducibility artifact: they are prototype components whose certificate outputs (when provided) are independently kernel-checked and $\Omega$-audited, but whose source is not the subject of this paper's formal guarantees.

\end{document}